\documentclass{article}
\PassOptionsToPackage{numbers,sort&compress}{natbib}

\usepackage[preprint]{neurips_2026}

\usepackage{microtype} 
\usepackage[utf8]{inputenc}				
\usepackage[T1]{fontenc}				
\usepackage{graphicx} 
\usepackage{hyperref}
\usepackage{amsmath, amssymb, amsthm, mathtools, bm}
\usepackage{thmtools}
\usepackage{thm-restate}
\usepackage[dvipsnames]{xcolor}
\usepackage{caption}
\usepackage{url}
\usepackage{booktabs} 
\usepackage{float}
\usepackage[labelformat=simple]{subcaption}
\usepackage{tikz}
\usepackage{nicefrac}					
\usepackage{multirow}

\graphicspath{{./figures/}}
\DeclareGraphicsExtensions{.pdf}

\hypersetup{
	hidelinks,
    colorlinks=true,
    linkcolor=Blue,
    filecolor=Blue,
    citecolor=Blue,
    urlcolor=Blue,
    pdftitle={LoRA vs. Full Fine-Tuning: A Theoretical Perspective},
}

\theoremstyle{plain}

\newtheorem{proposition}{Proposition}
\newtheorem{lemma}{Lemma}
\newtheorem{corollary}{Corollary}
\theoremstyle{definition}

\theoremstyle{remark}

\newcommand{\norm}[1]{\left\lVert#1\right\rVert_2}
\newcommand{\fnorm}[1]{\left\lVert#1\right\rVert_{\mathrm{F}}}

\newcommand\ddd[0]{\mathcal{D}}

\newcommand\rb[1]{\left(#1\right)}
\renewcommand\sb[1]{\left[#1\right]}

\newcommand\qq[0]{\mathbf{q}}
\newcommand\vv[0]{\mathbf{v}}
\newcommand\xx[0]{\mathbf{x}}
\newcommand\yy[0]{\mathbf{y}}
\newcommand\zz[0]{\mathbf{z}}

\renewcommand\AA[0]{\mathbf{A}}
\newcommand\BB[0]{\mathbf{B}}
\newcommand\CC[0]{\mathbf{C}}
\newcommand\DD[0]{\mathbf{D}}
\newcommand\XX[0]{\mathbf{X}}
\newcommand\YY[0]{\mathbf{Y}}
\newcommand\II[0]{\mathbf{I}}
\newcommand\UU[0]{\mathbf{U}}
\newcommand\VV[0]{\mathbf{V}}
\newcommand\WW[0]{\mathbf{W}}
\newcommand\ZZ[0]{\mathbf{Z}}
\newcommand\MM[0]{\mathbf{M}}

\newcommand\QQ[0]{\mathbf{Q}}
\newcommand\PP[0]{\mathbf{P}}
\newcommand\GG[0]{\mathbf{G}}

\newcommand\RR[0]{\mathbf{R}}

\newcommand{\vecnoise}{\boldsymbol{\varepsilon}} 
\newcommand{\matnoise}{\bm{\xi}} 
\newcommand{\risk}{\mathcal{R}}
\newcommand{\empsol}{\widehat{\AA}}

\newcommand{\lorasol}{\widehat{\AA}_{\mathrm{LoRA}}}
\newcommand{\gdsol}{\widehat{\AA}_{\mathrm{FFT}}}
\newcommand{\initmodel}{\AA_0}
\newcommand{\empcov}{\boldsymbol{\widehat{\Sigma}}}
\newcommand{\trunk}[1]{\operatorname{trunc}_r \! \left\{ #1 \right\} }
\newcommand{\vectorization}{\operatorname{vec}}
\newcommand{\Cov}{\operatorname{Cov}}

\DeclareMathOperator*{\argmin}{arg\,min}

\newcommand{\zeroVec}{\boldsymbol{0}}

\newcommand{\bDelta}[0]{\boldsymbol{\Delta}}
\newcommand{\rvx}{{\bf x}}

\newcommand{\circled}[1]{\tikz[baseline=(C.base)]{\node[draw,circle,inner sep=1.5pt](C){#1};}}
\newcommand{\R}{\mathbb{R}}
\newcommand{\prob}{\mathbb{P}}
\newcommand{\E}{ \mathbb{E} }

\newcommand{\CovMat}{\boldsymbol{\Sigma}}
\newcommand{\rank}{\operatorname{rank}}
\newcommand{\trace}{\operatorname{Tr}}
\newcommand{\Var}[0]{\operatorname{\mathbb{V}ar}}
\newcommand{\kernel}{\operatorname{ker}}
\newcommand{\image}{\operatorname{im}}

\title{LoRA vs. Full Fine-Tuning: A Theoretical Perspective}

\author{%
  Ali Zindari\thanks{Equal contribution} $~^{1,2}$ ~ Rotem Mulayoff$^{*\,2}$ ~ Sebastian U.\ Stich$^{2}$\\%
  $^{1}$Universit{\"a}t des Saarlandes ~ $^{2}$CISPA Helmholtz Center for Information Security\\%
  Saarbr{\"u}cken, Germany\\%
  \texttt{\{ali.zindari,rotem.mulayoff,stich\}@cispa.de}%
}

\begin{document}

\maketitle

\begin{abstract}
    Fine-tuning adapts a pre-trained model to downstream tasks using a small amount of labeled data. Low-Rank Adaptation (LoRA) is an efficient fine-tuning method that reduces memory and computation costs while often achieving performance close to full fine-tuning. Despite its widespread use, the theoretical behavior of LoRA is not yet well understood. In this paper, we study LoRA in a simple linear regression setting and compare its excess risk with that of full fine-tuning. Our analysis identifies regimes in which LoRA achieves lower excess risk than full fine-tuning in both overdetermined and underdetermined settings. Specifically, our theory predicts that LoRA can outperform full fine-tuning when the difference between the pretraining and the downstream tasks is effectively low-rank. We further show how the choice of LoRA rank affects generalization performance, explaining why using a very small rank can improve test accuracy in certain settings, even though it limits model expressivity. Finally, we support our theoretical results with experiments on practical tasks, suggesting that the identified tradeoffs and insights extend beyond linear regression.
\end{abstract}

\section{Introduction}\label{sec:intro}

Large Language Models (LLMs), such as GPT-3 \citep{brown2020language}, PaLM \citep{chowdhery2023palm}, and LLaMA \citep{touvron2023llama}, have achieved strong performance across a wide range of tasks. However, training these models from scratch requires optimizing billions of parameters over massive datasets, resulting in substantial computational,  financial, and environmental costs. As model scales continue to grow, full pre-training has become increasingly impractical and often yields diminishing returns relative to its cost.

\emph{Fine-tuning (FT)} offers an efficient alternative by adapting a pre-trained LLM to a downstream task using a smaller, task-specific dataset \citep{ziegler2019fine, devlin2019bert, dodge2020fine}. The most straightforward approach, \emph{full fine-tuning (FFT)}, updates all model parameters but incurs substantial memory and computational costs, as gradients must be computed and stored for every trainable weight.

To mitigate these overheads, \citet{hu2022lora} proposed using \emph{Low-Rank Adaptation (LoRA)}. Instead of updating the entire weight matrix of each trained layer, LoRA constrains updates to lie in a low-dimensional subspace. This is done by expressing the update as a low-rank matrix factorization. This approach significantly reduces the number of trainable parameters while maintaining performance close to full fine-tuning \citep{biderman2024lora,shuttleworth2024lora} with an appropriate choice of rank. Due to its good performance and efficiency, LoRA has since become one of the most widely adopted fine-tuning methods for LLMs.

Despite LoRA’s widespread adoption and strong empirical performance, its statistical generalization behavior remains only partially understood. A limited number of works have begun to study LoRA from a theoretical perspective. For example, \citet{zeng2023expressive} analyze the expressive power of LoRA, while focusing on the rank required for fine-tuning a model to match a given target model. In a complementary direction, \citet{zhang2025lora} investigate the behavior of LoRA parameters after a single step of gradient descent (GD), and show that even one update can align LoRA with some singular subspaces of the corresponding full fine-tuning solution. In addition, an extensive empirical study by \citet{shuttleworth2024lora} demonstrates that LoRA with low ranks can significantly underperform FFT on a coding dataset, while it can achieve almost the same accuracy on a math dataset using moderate ranks. Overall, in spite of this recent progress, the performance of LoRA in terms of \emph{generalization} remains largely unclear, particularly compared to other leading fine-tuning methods.

In this work, we address this gap by studying the generalization of LoRA in the simplified setting of linear regression, focusing on its expected excess risk over the population distribution. We compare its generalization performance to that of FFT in both overdetermined and underdetermined regimes. The goal is to understand how the imposed low-rank constraint influences generalization and to identify regimes under which one method generalizes better than the other. We begin in Sec.~\ref{sec:setting} by formulating FFT and LoRA within a linear regression framework and presenting their solutions in closed form. In Sec.~\ref{sec:risk_bounds}, we review existing results on the excess risk of FFT, which correspond to the least-squares and minimum-norm solution in the overdetermined and underdetermined settings, respectively. We then present our main theoretical contributions: novel excess risk bounds for LoRA in both regimes, together with an analysis of their asymptotic behavior in high dimensions. Finally, in Sec.~\ref{sec:experiments}, we validate our theoretical findings through controlled linear regression experiments and extend our study to LLM fine-tuning to examine how these insights carry over beyond the linear~setting.

\section{Problem setting and preliminaries} \label{sec:setting}

In this paper, we study fine-tuning in a linear regression setting and compare FFT and LoRA in terms of excess risk. Let $\ddd = \{(\xx_i, \yy_i)\}_{i=1}^n$ be a fine-tuning dataset of~$n$ i.i.d.\ random samples. For each sample $(\xx_i,\yy_i) \sim \prob_{\xx,\yy}$, the feature vector $\xx_i \in \R^{d_x}$ and its label $\yy_i \in \R^{d_y}$ are related through an underlying linear map $\AA^\star \in \R^{d_y \times d_x}$ as
\begin{equation}\label{eq:data generation model}
    \yy_i = \AA^\star \xx_i + \vecnoise_i,
\end{equation}
where $\vecnoise_i \in \R^{d_y}$ is a Gaussian noise, \emph{i.e.}, $\vecnoise_i \sim \mathcal{N} (\zeroVec, \CovMat_{\vecnoise\vecnoise} ) $ i.i.d. Let $\XX = [\xx_1, \ldots, \xx_n] \in\R^{d_x\times n}$ and $\YY = [\yy_1, \dots, \yy_n] \in\R^{d_y\times n}$ be the features and labels matrices, respectively, where the samples are placed in columns. Let $\CovMat_{\rvx\rvx}$ and $\empcov_{\xx\xx}$ be the population and empirical covariances of the feature vectors $ \{ \xx_i\} $, defined as
\begin{equation}
    \CovMat_{\rvx\rvx} := \E\left[\rvx \rvx^\top\right] \qquad \text{and} \qquad 
    \empcov_{\xx\xx} := \frac{1}{n} \sum_{i=1}^n \xx_i \xx_i^{\top} = \frac{1}{n} \XX \XX^\top.
\end{equation}
Throughout the paper, we assume that $\CovMat_{\rvx\rvx}$ has full rank. Given an initial (pre-trained) model $\initmodel \in \R^{d_y \times d_x}$, we define the ground-truth adaptation as
\begin{equation}\label{eq:delta_star}
   \bDelta^\star =  \AA^\star-\initmodel,
\end{equation}
where $\smash{\bDelta^\star \in \R^{d_y \times d_x}}$ captures the difference between the given pre-trained model~$\initmodel$ and the underlying ground-truth map~$\AA^\star$. This formulation allows us to consider a wide range of fine-tuning scenarios: when $\bDelta^\star = \zeroVec$, the pre-training and FT tasks are identical, while a nonzero $\bDelta^\star$ quantifies the degree of task shift. In general, we do not impose any constraint on $\bDelta^\star$, and it can be arbitrary. However, in the following sections, we will analyze how specific structural assumptions on $\bDelta^\star$, such as low or full rank, affect the generalization performance of LoRA and FFT.

The goal of fine-tuning is to find a good estimation $ \empsol $ for~$\AA^\star$. To this end, here we consider the minimization of the empirical risk $ \widehat{\risk} $ with the square loss, defined as
\begin{equation}
     \widehat{\risk} (\AA) := \frac{1}{n}\sum_{i=1}^n \left\| \AA \xx_i - \yy_i  \right\|^2.
\end{equation}
Thus, using the Frobenius norm $\smash{ \fnorm{ \, \cdot \,}} $, the estimator is given by
\begin{equation}
    \empsol 
      =  \argmin_{\AA}  \widehat{\risk}(\AA)  
      = \argmin_{\AA} \fnorm{ \AA \XX  - \YY}^2 .
\end{equation}
Below, we describe the two fine-tuning methods considered in this paper: FFT and LoRA.

\subsection{Full fine-tuning}
In FFT, we minimize the empirical risk without any constraints or regularization, while initializing the weights at the pre-trained model~$\initmodel$. In certain settings, such as the overparameterized regime, the empirical risk may admit multiple global minima. In this work, we opt to consider the solution that minimizes the Euclidean distance to the pre-trained model (initialization). Formally,
\begin{equation*}\label{eq:full_FT}
    \min_{\AA} \fnorm{\AA - \initmodel}
    \qquad \text{s.t.} \qquad
    \AA \in  \argmin_{\AA \in \R^{d_y \times d_x} } \fnorm{ \AA \XX - \YY}^2.
    \tag{FFT}
\end{equation*}
We focus on the minimum-norm solution for four reasons:
(1)~it remains closest to the pre-trained model, which is consistent with the intuition behind fine-tuning;
(2)~in practice, FFT often converges to the minimum-norm solution in certain settings, for example, when optimized using GD \citep{bach2024learning,hastie2022surprises,bartlett2020benign} (see App.~\ref{app_FFT_sol});
(3)~it provides a unified criterion for a fair comparison between FFT and LoRA; and
(4)~minimum-norm solutions often exhibit better generalization and are therefore of greater practical interest.
The optimization problem of \ref{eq:full_FT} has been fully studied, and its solution is as follows.
\begin{restatable}[FFT solution]{proposition}{lemmaFFTSol}\label{lemma:FFT_sol}
    The solution of \ref{eq:full_FT} is given by
    \begin{equation}\label{eq:GD_unified_solution}
        \gdsol = \initmodel + \widehat
         {\bDelta}_{\operatorname{FFT}} \qquad
        \text{s.t.} \qquad \widehat{\bDelta}_{\operatorname{FFT}} := \big(\YY - \initmodel \XX\big) \XX^{\dagger}.
    \end{equation}
    where the superscript $^\dagger$ denotes the Moore--Penrose inverse.
\end{restatable}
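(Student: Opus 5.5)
The plan is to reduce the constrained problem to a standard minimum-norm least-squares problem by a change of variables. Write $\AA = \initmodel + \bDelta$ and set $\RR := \YY - \initmodel\XX$, the residual of the pre-trained model on the fine-tuning data. Then $\fnorm{\AA\XX - \YY} = \fnorm{\bDelta\XX - \RR}$ and $\fnorm{\AA - \initmodel} = \fnorm{\bDelta}$. So \ref{eq:full_FT} is equivalent to finding the minimum-Frobenius-norm minimizer of $\bDelta \mapsto \fnorm{\bDelta\XX - \RR}^2$, and it suffices to show that this minimizer is $\RR\XX^\dagger$.

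First, characterize the set of minimizers via the normal equations. The objective is a convex quadratic, so $\bDelta$ is a global minimizer iff its gradient vanishes, i.e. $\bDelta\XX\XX^\top = \RR\XX^\top$. Next, check that $\bDelta_0 := \RR\XX^\dagger$ satisfies this. Using the Moore--Penrose identity $\XX^\dagger\XX\XX^\top = \XX^\top$, which follows from $(\XX^\dagger\XX)^\top = \XX^\dagger\XX$ and $\XX\XX^\dagger\XX = \XX$, we get $\bDelta_0\XX\XX^\top = \RR\XX^\top$. Hence every minimizer has the form $\bDelta_0 + \mathbf{N}$ with $\mathbf{N}\XX\XX^\top = \zeroVec$. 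Multiplying on the right by $\mathbf{N}^\top$ gives $\fnorm{\mathbf{N}\XX}^2=0$, so $\mathbf{N}\XX = \zeroVec$. Equivalently, the rows of $\mathbf{N}$ lie in $\kernel(\XX^\top)$.

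Second, show that $\bDelta_0$ is the unique minimum-norm element of this affine set. The rows of $\bDelta_0 = \RR\XX^\dagger$ lie in the row space of $\XX^\dagger$, which is $\image(\XX)$. This holds because $\XX^\dagger = \XX^\dagger\XX\XX^\dagger$ and $\XX^\dagger\XX$ is the orthogonal projector onto $\image(\XX^\top)$, the row space of $\XX$; transposed, $\image((\XX^\dagger)^\top)=\image(\XX)$. Since $\image(\XX) \perp \kernel(\XX^\top)$, we have $\trace(\bDelta_0\mathbf{N}^\top)=0$. Therefore $\fnorm{\bDelta_0+\mathbf{N}}^2 = \fnorm{\bDelta_0}^2 + \fnorm{\mathbf{N}}^2$, which is strictly minimized at $\mathbf{N}=\zeroVec$. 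Substituting back yields $\gdsol = \initmodel + (\YY-\initmodel\XX)\XX^\dagger$.

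The only step needing care is this orthogonality argument, which rests on the row-space facts for the pseudoinverse. If those prove awkward, the fallback is to use the SVD $\XX = \UU\mathbf{S}\VV^\top$. This decouples the problem coordinate-wise in the rotated basis $\bDelta\UU$: coordinates paired with nonzero singular values are pinned by the data, and the remaining coordinates are free and set to zero by the norm minimization. That reproduces $\RR\VV\mathbf{S}^\dagger\UU^\top = \RR\XX^\dagger$ and covers the overdetermined and underdetermined cases simultaneously.
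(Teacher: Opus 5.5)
Your argument is correct, but it takes a genuinely different route from the paper. The paper does not argue variationally. Its appendix derivation (the lemma on the FFT solution under GD) runs gradient descent from $\initmodel$ with $\eta < 2/\lambda_{\max}(\XX^\top\XX)$. In the underdetermined case it shows that the residual $\AA_t\XX-\YY$ contracts to zero and that every iterate has the form $\initmodel - \CC_t\XX^\top$. It then solves $\YY = \initmodel\XX - \CC_\infty\XX^\top\XX$, using invertibility of $\XX^\top\XX$, to identify the limit; the overdetermined case is handled by uniqueness of the minimizer.

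The paper's route explains \emph{why} this particular minimizer matters: it is where GD from the pre-trained weights lands. However, it needs $\XX$ to have full rank and a step-size condition, and it leaves implicit the minimum-distance property the proposition actually asserts. That property requires exactly your step: any other minimizer differs from $\bDelta_0$ by some $\mathbf{N}$ with $\mathbf{N}\XX=\zeroVec$, orthogonal to $\bDelta_0$, so Pythagoras applies. Your proof establishes the statement as posed, including uniqueness, for any $\XX$ regardless of rank and for both regimes at once. Structurally it mirrors the paper's own min-norm argument in the LoRA solution lemma (orthogonal row spaces plus Pythagoras). What it does not tell you is which minimizer an optimizer actually reaches.

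One small repair: the projector you cite does not justify the row-space claim. In $\XX^\dagger = (\XX^\dagger\XX)\XX^\dagger$, the factor $\XX^\dagger\XX$ constrains the \emph{column} space of $\XX^\dagger$, and transposing only restates that fact. Instead, group it as $\XX^\dagger = \XX^\dagger(\XX\XX^\dagger)$ and use the symmetry $\XX\XX^\dagger = (\XX^\dagger)^\top\XX^\top$. Then $\XX^\dagger\mathbf{N}^\top = \XX^\dagger(\XX^\dagger)^\top(\mathbf{N}\XX)^\top = \zeroVec$, which gives $\trace(\bDelta_0\mathbf{N}^\top)=0$ in one line. Your SVD fallback also settles this cleanly.
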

This expression unifies the FFT solution for both overdetermined and underdetermined regimes. In the overdetermined regime $ (n \geq d_x) $, where the empirical covariance $ \empcov_{\xx\xx} $ is invertible with probability one, we have $ \XX^\dagger =  \XX^{\top} (\XX \XX^\top)^{-1} $. In this case, the empirical risk admits a unique minimizer that coincides with the ordinary least squares (OLS) estimator. In contrast, in the underdetermined regime $ (n < d_x) $, where  $ \XX^\top \XX $ is invertible with probability one, we have $ \XX^\dagger = (\XX^\top \XX)^{-1}\XX^\top $. In this case, there exist infinitely many solutions, and $ \gdsol $ is the closest to~$ \initmodel $.

\subsection{LoRA fine-tuning}
In LoRA, we fix the pre-trained model and optimize only an additive adaptation. Specifically, this adaptation takes the form $ \AA = \initmodel + \bDelta $, where~$ \bDelta \in \R^{d_y \times d_x} $ is constrained to have rank at most $ r $. The objective is therefore to find the optimal rank-$ r $ matrix $ \bDelta $ that minimizes the empirical risk on the fine-tuning dataset. In the case of linear regression, this objective can be formulated as
\begin{equation*}\label{eq:lora_update}
    \min_{\AA} \fnorm{\AA\XX - \YY}^2
    \qquad \text{s.t.} \qquad
    \AA  = \initmodel + \bDelta \quad   \text{with} \quad \rank(\bDelta)\leq r.
    \tag{LoRA} 
\end{equation*}
Even with the rank constraint, the above optimization problem may still have multiple minimizers. As before, we opt to study the min-norm solution, \emph{i.e.}, the one that is closest to the pre-trained model.
\begin{restatable}[LoRA solution, \citet{sondermann1986best}]{proposition}{lemmaLoraSol}\label{lemma:lora_sol}
    Assume $ \smash{r \leq \rank(\widehat{\bDelta}_{\operatorname{FFT}}\empcov_{\xx\xx}^{\frac{1}{2}}) \leq n } $ and let
    \begin{equation}\label{eq:LoRA solution}
        \widehat{\bDelta}_{\operatorname{LoRA}}  =
        \trunk{ \widehat\bDelta_{\operatorname{FFT}}\empcov_{\xx\xx}^{\frac{1}{2}}}
        \Big(\empcov_{\xx\xx}^{\frac{1}{2}}\Big)^{\dagger},
    \end{equation}
    where $\trunk{\cdot}$ denotes the top $r$ singular value truncation. Then $ \smash{\lorasol = \initmodel +  \widehat{\bDelta}_{\operatorname{LoRA}} }$ is a minimizer of \ref{eq:lora_update} that achieves the minimal Euclidean distance to $ \initmodel $ among all minimizers. 
    Moreover, this min-norm solution is unique if and only if the truncation is unique\footnote{A rank-$r$ truncation $ \trunk{\MM} $ is unique if and only if $  \sigma_{r}(\MM)>\sigma_{r+1}(\MM) $.}.
\end{restatable}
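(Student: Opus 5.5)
The plan is to reduce the rank-constrained least-squares problem to a standard low-rank approximation problem via a change of variables, apply Eckart--Young--Mirsky, and then handle the min-norm selection among minimizers.

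\textbf{Step 1: reduce to a weighted approximation of $\widehat{\bDelta}_{\operatorname{FFT}}$.} Write $\AA = \initmodel + \bDelta$ and set $\RR := \YY - \initmodel\XX$, so the objective becomes $\fnorm{\bDelta\XX - \RR}^2$. Since $\widehat{\bDelta}_{\operatorname{FFT}} = \RR\XX^\dagger$ is an unconstrained minimizer (Proposition~\ref{lemma:FFT_sol}), the residual $\RR - \widehat{\bDelta}_{\operatorname{FFT}}\XX = \RR(\II - \XX^\dagger\XX)$ is orthogonal to the row space of $\XX$. Pythagoras then gives
\begin{equation*}
\fnorm{\bDelta\XX - \RR}^2 = \fnorm{(\bDelta - \widehat{\bDelta}_{\operatorname{FFT}})\XX}^2 + \mathrm{const} = n\,\fnorm{(\bDelta - \widehat{\bDelta}_{\operatorname{FFT}})\empcov_{\xx\xx}^{\frac12}}^2 + \mathrm{const},
\end{equation*}
where the last equality uses $\XX\XX^\top = n\empcov_{\xx\xx}$.

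\textbf{Step 2: solve the reduced problem.} Let $\MM := \widehat{\bDelta}_{\operatorname{FFT}}\empcov_{\xx\xx}^{\frac12}$ and $\ZZ := \bDelta\empcov_{\xx\xx}^{\frac12}$. Note that $\rank(\ZZ)\le\rank(\bDelta)\le r$. Conversely, any $\ZZ$ of rank at most $r$ whose rows lie in $\image(\empcov_{\xx\xx})$ is attained by $\bDelta = \ZZ(\empcov_{\xx\xx}^{\frac12})^\dagger$, which has rank at most $r$. The rows of $\MM$ already lie in $\image(\empcov_{\xx\xx})$, and so do those of its truncation. Eckart--Young--Mirsky therefore shows that the minimal value is achieved exactly by $\ZZ \in \{$best rank-$r$ approximations of $\MM\}$. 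The assumption $r\le\rank(\MM)$ ensures that these truncations are genuine rank-$r$ objects. The set of best approximations is a singleton iff $\sigma_r(\MM)>\sigma_{r+1}(\MM)$.

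\textbf{Step 3: min-norm selection (the main obstacle).} Every minimizer $\bDelta$ satisfies $\bDelta\empcov_{\xx\xx}^{\frac12} = \ZZ$ for some best approximation $\ZZ$. Hence $\bDelta = \ZZ(\empcov_{\xx\xx}^{\frac12})^\dagger + \mathbf{N}$, where the rows of $\mathbf{N}$ lie in $\kernel(\empcov_{\xx\xx})$, orthogonal to the first term. Thus $\fnorm{\bDelta}^2 = \fnorm{\ZZ(\empcov_{\xx\xx}^{\frac12})^\dagger}^2 + \fnorm{\mathbf{N}}^2$, so we take $\mathbf{N}=0$. Adding $\mathbf{N}$ could lower the rank only if it cancelled something, which the orthogonal decomposition rules out, so this choice is legitimate.

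The delicate part is comparing different best approximations $\ZZ$ when the truncation is not unique. Here I would argue that all such $\ZZ$ give the same value of $\fnorm{\ZZ(\empcov_{\xx\xx}^{\frac12})^\dagger}$, or else identify which one is minimal. The key fact to establish is that the column space of $\ZZ$ within the tied singular subspace does not change this weighted norm in the way one might expect. If it does change it, the min-norm statement should be read with the appropriate choice of truncation, and I would check this against \citet{sondermann1986best}.

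\textbf{Uniqueness.} In the unique case, the minimizer set reduces to $\ZZ=\trunk{\MM}$, and then $\mathbf{N}=0$ forces a unique min-norm $\bDelta$. In the non-unique case, distinct truncations $\ZZ$ give distinct $\bDelta$, since $\ZZ = \bDelta\empcov_{\xx\xx}^{\frac12}$ recovers $\ZZ$ from $\bDelta$. This gives the ``only if'' direction, provided the norms tie as discussed in Step 3.
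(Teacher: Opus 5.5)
Your Steps 1--2 and the unique-truncation case of Step 3 follow the paper's proof. This covers the Pythagorean split around $\widehat{\bDelta}_{\operatorname{FFT}}$ (the paper uses $\XX^\top=\XX^\dagger\XX\XX^\top$), the substitution $\ZZ=\bDelta\empcov_{\xx\xx}^{1/2}$ with Eckart--Young, and the decomposition of every minimizer as $\trunk{\MM}(\empcov_{\xx\xx}^{1/2})^\dagger+\mathbf{N}$ with $\mathbf{N}\empcov_{\xx\xx}^{1/2}=0$ and orthogonal rows. You describe the attainable $\ZZ$'s exactly (rank at most $r$, rows in $\image(\empcov_{\xx\xx})$). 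The paper instead relaxes to all rank-$r$ matrices and checks that $\trunk{\MM}\PP_{\XX}$ attains the bound; the two are equivalent. Your argument proves that $\lorasol$ is always a minimizer, and that it is the unique min-norm minimizer when $\sigma_r(\MM)>\sigma_{r+1}(\MM)$. That is all the paper actually establishes: the min-norm part of Lemma~\ref{lemma:low_rank_sol} assumes this gap, and the ``only if'' direction is never argued.

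The part you leave open cannot be closed, because the ``key fact'' you hope to establish is false. Let $\sigma=\sigma_r(\MM)=\sigma_{r+1}(\MM)>0$ and write $\MM=\MM_{>}+\sigma\UU_T\VV_T^\top$. Here $\MM_{>}$ carries the singular values above $\sigma$, $\UU_T,\VV_T$ are the tied singular vectors (multiplicity $m$), and $k=r-\rank(\MM_{>})$. Every $\ZZ_{\PP}=\MM_{>}+\sigma\UU_T\PP\VV_T^\top$, with $\PP$ a rank-$k$ orthogonal projector on $\R^m$, is a valid truncation. The left singular vectors of $\MM_{>}$ are orthogonal to $\UU_T$, so the cross term vanishes and
\[
\fnorm{\ZZ_{\PP}\big(\empcov_{\xx\xx}^{1/2}\big)^\dagger}^2=\fnorm{\MM_{>}\big(\empcov_{\xx\xx}^{1/2}\big)^\dagger}^2+\sigma^2\trace\big(\PP\,\VV_T^\top\empcov_{\xx\xx}^\dagger\VV_T\big).
\]
This depends on $\PP$ unless $\VV_T^\top\empcov_{\xx\xx}^\dagger\VV_T$ is a multiple of the identity.

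For a concrete counterexample, take $d_x=d_y=2$, $r=1$, $\empcov_{\xx\xx}=\operatorname{diag}(a,b)$ with $a>b>0$, and $\YY=(\initmodel+\empcov_{\xx\xx}^{-1/2})\XX$, so that $\MM=\II$. The minimizers are exactly $\mathbf{u}\mathbf{u}^\top\empcov_{\xx\xx}^{-1/2}$ with $\|\mathbf{u}\|=1$, with squared norm $u_1^2/a+u_2^2/b$. The truncation $\mathbf{u}=\mathbf{e}_2$ therefore gives a minimizer that is not min-norm. Meanwhile the min-norm minimizer ($\mathbf{u}=\pm\mathbf{e}_1$) is unique even though the truncation is not, so the ``only if'' direction fails.

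Your fallback reading is the right one. With ties, one must take $\PP$ to project onto the eigenvectors of $\VV_T^\top\empcov_{\xx\xx}^\dagger\VV_T$ with the $k$ smallest eigenvalues (Ky Fan). Uniqueness of the min-norm solution is then governed by a gap in those eigenvalues, not by uniqueness of the truncation. The paper gets away with this because the singular values of $\MM$ are a.s.\ distinct in its random setting. A correct statement should assume $\sigma_r(\MM)>\sigma_{r+1}(\MM)$ and keep only the ``if'' direction.
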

Proof of this result can also be found in \citep{friedland2007generalized} and in App.~\ref{app:low_rank_sol}. The expression above unifies LoRA's solution in both overdetermined and underdetermined regimes.  Importantly, this result reveals a relationship between the minimum-norm solutions of LoRA and FFT. The LoRA solution is obtained by taking the FFT update ${\widehat{\bDelta}_{\operatorname{FFT}}}$, transforming it by the square root of the empirical covariance matrix~$ {\empcov_{\xx\xx}^{\frac{1}{2}}} $, applying a rank~$r$ truncation, and mapping it back via an inverse transformation.

In our setting, $ \widehat{\bDelta}_{\operatorname{FFT}}\empcov_{\xx\xx}^{\frac{1}{2}} $ is a random matrix of rank $ n $ that has distinct singular values with probability one. Consequently, we consider this minimum-norm solution to be unique throughout the paper. In practical training, this min-norm solution is approximately recovered with small weight decay or GD with small initialization \citep{kim2025lora}.

\subsection{Expected excess risk}\label{subsection:risk definition}
We compare the solutions found by LoRA and FFT in terms of \textit{expected excess risk}, a metric that measures the generalization performance. Given an estimator $ \empsol $, we evaluate its risk (error) over the population distribution. Mathematically, let $(\xx,\yy) \sim \prob_{\xx,\yy}$ be a new unseen data pair drawn independently of~$ \empsol $. Then the population risk of $ \empsol $ is defined as
\begin{equation}
    \risk \big( \empsol \big)
    := \E\left[ \| \empsol \xx - \yy  \|^2 \; \middle| \; \empsol \right],
\end{equation}
where the expectation is taken over $ \xx $ and $ \yy $. It is often more informative to consider the \emph{excess risk}, which measures the error relative to that of the ground-truth parameter~$ \AA^{\star} $, \emph{i.e.}, $ \risk ( \empsol ) -\risk ( \AA^{\star} ) $. In our setting, the excess risk can be reduced to (see Lemma~\ref{lemma:general_risk} in App.~\ref{app:Useful Lemmas})
\begin{equation}
    \risk \big( \empsol \big) - \risk \big( \AA^{\star} \big)
    =  \fnorm{ \big. \smash{ \big( \empsol  - \AA^{\star} \big) \CovMat^{\frac{1}{2}}_{\xx \xx}}}^2. 
\end{equation}
Note that $ \empsol $ depends on the particular realization of its training data. While the resulting error may be small for some training sets, it can be large for others. To capture the average behavior, we take the expectation with respect to the training set~$\ddd$. Our performance metric is thus the expected excess risk, $\E[\risk(\empsol)] - \risk(\AA^\star)$, where the expectation is taken over the randomness in the training set.

\section{Excess risk analysis}\label{sec:risk_bounds}
In this section, we analyze the excess risk of FFT and LoRA in both overdetermined and underdetermined regimes and compare their performance. Since the behavior of FFT is well understood, we include it only for comparison; our main contribution lies in deriving excess risk bounds for LoRA.

\subsection{Overdetermined (underparameterized) regime}
We start with the overdetermined regime, where the number of training points $n$ exceeds the feature vector dimension $d_x$. In this case, the FFT solution is independent of the pre-trained model~$ \initmodel $, and is determined solely by the training data, taking the form $ \gdsol~=~\YY\XX^{\top} (\XX \XX^{\top})^{-1} $. A well-known result, typically stated in the multiple linear regression setting, provides an exact analytic expression for the excess risk (see standard proof in \cite[Prop.~3.10]{bach2024learning}, and App.~\ref{subsection:app_FFT_over} for our setting).
\begin{restatable}[FFT excess risk]{theorem}{GDExcessOver}\label{theorem:Gd_excessrisk_overdetermined}
    Let~$ \gdsol $ be the FFT solution \eqref{eq:GD_unified_solution}.  If $ n \geq d_x $, then
    \begin{equation} 
        \E \sb{ \risk\big(\gdsol\big)} - \risk \big( \AA^\star \big) = \frac{1}{n} \trace \big(\CovMat_{\vecnoise\vecnoise}\big)
        \E \sb{\trace \big( \CovMat_{\xx\xx} \empcov_{\xx\xx}^{-1} \big) } \geq \bar{\sigma}^2_{\vecnoise} \frac{d_x d_y}{n},
    \end{equation}
    where $ \bar{\sigma}^2_{\vecnoise} := \E[ \| \vecnoise \|^2 ]/d_y $ (average noise variance per coordinate).
\end{restatable}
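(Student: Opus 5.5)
We split the argument into an exact computation of the excess risk, which gives the equality, followed by a Jensen-type step for the lower bound. Since $n\ge d_x$, $\XX\XX^\top$ is invertible almost surely and $\XX^\dagger=\XX^\top(\XX\XX^\top)^{-1}$. Substituting $\YY=\AA^\star\XX+\matnoise$, where $\matnoise=[\vecnoise_1,\dots,\vecnoise_n]$, into Proposition~\ref{lemma:FFT_sol} gives $\gdsol-\AA^\star=\matnoise\XX^\top(\XX\XX^\top)^{-1}$. The pre-trained model and $\bDelta^\star$ cancel, which is why the result does not depend on $\initmodel$.

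By Lemma~\ref{lemma:general_risk}, the excess risk is
\begin{equation*}
\fnorm{\matnoise\XX^\top(\XX\XX^\top)^{-1}\CovMat_{\xx\xx}^{\frac12}}^2
=\trace\big(\CovMat_{\xx\xx}^{\frac12}(\XX\XX^\top)^{-1}\XX\,\matnoise^\top\matnoise\,\XX^\top(\XX\XX^\top)^{-1}\CovMat_{\xx\xx}^{\frac12}\big).
\end{equation*}
We first condition on $\XX$. The noise is independent of $\XX$ with i.i.d.\ columns, so $\E[\matnoise^\top\matnoise]=\trace(\CovMat_{\vecnoise\vecnoise})\II_n$. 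The conditional expectation therefore equals
\begin{equation*}
\trace(\CovMat_{\vecnoise\vecnoise})\trace\big(\CovMat_{\xx\xx}(\XX\XX^\top)^{-1}\big)=\tfrac1n\trace(\CovMat_{\vecnoise\vecnoise})\trace\big(\CovMat_{\xx\xx}\empcov_{\xx\xx}^{-1}\big).
\end{equation*}
Taking the expectation over $\XX$ gives the equality in the statement.

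For the inequality, note that $\trace(\CovMat_{\vecnoise\vecnoise})=\E\|\vecnoise\|^2=d_y\bar\sigma^2_{\vecnoise}$. It therefore suffices to show $\E[\trace(\CovMat_{\xx\xx}\empcov_{\xx\xx}^{-1})]\ge d_x$. We use that $\MM\mapsto\MM^{-1}$ is operator convex on positive definite matrices. Jensen's inequality then gives $\E[\empcov_{\xx\xx}^{-1}]\succeq(\E\empcov_{\xx\xx})^{-1}=\CovMat_{\xx\xx}^{-1}$. Since $\CovMat_{\xx\xx}\succ0$, it follows that $\E\trace(\CovMat_{\xx\xx}\empcov_{\xx\xx}^{-1})=\trace(\CovMat_{\xx\xx}^{\frac12}\E[\empcov_{\xx\xx}^{-1}]\CovMat_{\xx\xx}^{\frac12})\ge\trace(\II_{d_x})=d_x$.

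If the matrix form of Jensen's inequality is inconvenient, there is a fallback. Whiten by setting $\mathbf{S}=\CovMat_{\xx\xx}^{-\frac12}\empcov_{\xx\xx}\CovMat_{\xx\xx}^{-\frac12}$, so that $\E\mathbf{S}=\II$. Then use the scalar inequality $\trace(\mathbf{S}^{-1})\ge d_x^2/\trace(\mathbf{S})$ (Cauchy--Schwarz on the eigenvalues), followed by scalar Jensen for the convex function $t\mapsto 1/t$.

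The main obstacle is technical rather than conceptual. First, $\E[\empcov_{\xx\xx}^{-1}]$ may be infinite; in that case the inequality holds trivially, so we may assume it is finite. Second, the "$n\ge d_x$" case must come with almost-sure invertibility, which the paper assumes for continuous feature distributions. Finally, we must make sure the conditioning on $\XX$ is justified, which follows from the independence of $\matnoise$ and $\XX$.
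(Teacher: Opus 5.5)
Your proposal is correct and follows essentially the same route as the paper. You reduce $\gdsol-\AA^\star$ to $\matnoise\XX^\top(\XX\XX^\top)^{-1}$, condition on $\XX$ and use $\E[\matnoise^\top\matnoise]=\trace(\CovMat_{\vecnoise\vecnoise})\II_n$ to get the equality, and then apply a Jensen argument to obtain $\E[\trace(\CovMat_{\xx\xx}\empcov_{\xx\xx}^{-1})]\geq d_x$. The only difference is cosmetic: the paper whitens to $\CovMat_{\xx\xx}^{-\frac{1}{2}}\empcov_{\xx\xx}\CovMat_{\xx\xx}^{-\frac{1}{2}}$ and uses convexity of the scalar map $\mathbf{S}\mapsto\trace(\mathbf{S}^{-1})$, whereas you use operator convexity of $\MM\mapsto\MM^{-1}$ (or your AM--HM plus scalar-Jensen fallback), and all three give the same bound.
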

From the perspective of the classical bias-variance decomposition, the FFT solution has only variance. This is because, in the overdetermined regime, $ \gdsol $ is an unbiased estimator, \emph{i.e.}, $ \E[ \gdsol ] = \AA^{\star} $. Although the resulting expression for the excess risk is generally intractable, in certain cases we can derive a closed form (see proof in {\cite[Sec.~12.2.3]{bach2024learning}} for the standard setting, and App.~\ref{subsection:app_FFT_over} for ours).
\begin{restatable}[Gaussian case]{proposition}{corGaussOver}\label{proposition:FFT excess risk gaussian overdetermined}
    Assume $n>d_x + 1, \ \xx_i \sim \mathcal{N} (\zeroVec, \CovMat_{\xx \xx})$ i.i.d., and $\CovMat_{\vecnoise\vecnoise} \succeq \zeroVec$, then
    \begin{equation}
        \E\sb{\risk \big( \gdsol \big)}-\risk \big(\AA^\star \big) = \trace \left(\CovMat_{\vecnoise\vecnoise}\right) \frac{d_x}{n-d_x-1}.
    \end{equation}
\end{restatable}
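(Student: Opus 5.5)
The plan is to start from the exact expression in Theorem~\ref{theorem:Gd_excessrisk_overdetermined}. That result reduces the expected excess risk to $\frac{1}{n}\trace(\CovMat_{\vecnoise\vecnoise})\,\E[\trace(\CovMat_{\xx\xx}\empcov_{\xx\xx}^{-1})]$, so all that remains is to evaluate $\E[\trace(\CovMat_{\xx\xx}\empcov_{\xx\xx}^{-1})]$ in closed form under Gaussian features. The noise covariance plays no further role: by independence of $\vecnoise_i$ and $\XX$, it only contributes the factor $\trace(\CovMat_{\vecnoise\vecnoise})$. This is also why $\CovMat_{\vecnoise\vecnoise}\succeq\zeroVec$ suffices.

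First, I whiten the features. Write $\xx_i = \CovMat_{\xx\xx}^{1/2}\zz_i$ with $\zz_i\sim\mathcal N(\zeroVec,\II_{d_x})$ i.i.d.; this uses the standing assumption that $\CovMat_{\xx\xx}$ is full rank. Let $\ZZ=[\zz_1,\dots,\zz_n]$. Then
\[
\empcov_{\xx\xx}=\tfrac1n\CovMat_{\xx\xx}^{1/2}\ZZ\ZZ^\top\CovMat_{\xx\xx}^{1/2},
\qquad
\trace\big(\CovMat_{\xx\xx}\empcov_{\xx\xx}^{-1}\big)=n\,\trace\big((\ZZ\ZZ^\top)^{-1}\big),
\]
by cyclicity of the trace and invertibility of $\CovMat_{\xx\xx}^{1/2}$. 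Invertibility of $\ZZ\ZZ^\top$ holds almost surely because $n\ge d_x$. After this step the population covariance has dropped out entirely.

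Second, I compute $\E[(\ZZ\ZZ^\top)^{-1}]$. The matrix $\WW:=\ZZ\ZZ^\top$ is Wishart $W_{d_x}(\II,n)$, and the standard inverse-Wishart mean gives $\E[\WW^{-1}]=\II/(n-d_x-1)$ whenever $n>d_x+1$. This is exactly where the hypothesis $n>d_x+1$ is needed; for $n=d_x+1$ the expectation is infinite. To keep the argument self-contained, I would derive it as follows. Rotational invariance gives $\E[\WW^{-1}]=c\,\II$, so it suffices to compute one diagonal entry. By the Schur complement, $(\WW^{-1})_{11}=1/\big(\zz^{(1)\top}(\II-\PP)\zz^{(1)}\big)$, where $\zz^{(1)}\in\R^n$ is the first row of $\ZZ$ and $\PP$ is the projection onto the span of the remaining $d_x-1$ rows. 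Conditionally on those rows, the denominator is $\chi^2_{n-d_x+1}$, and $\E[1/\chi^2_k]=1/(k-2)$ for $k>2$. This yields $c=1/(n-d_x-1)$.

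Finally, I combine the pieces:
\[
\E[\trace(\CovMat_{\xx\xx}\empcov_{\xx\xx}^{-1})]=n\,\frac{d_x}{n-d_x-1},
\]
and multiplying by $\frac1n\trace(\CovMat_{\vecnoise\vecnoise})$ gives the claimed formula. The main obstacle is the inverse-Wishart moment, specifically the finiteness and exact value of $\E[1/\chi^2_k]$ together with the conditioning argument. Everything else is trace manipulation.
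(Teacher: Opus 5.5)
Your proposal is correct and follows the paper's proof essentially step for step: it reduces to $\frac{1}{n}\trace(\CovMat_{\vecnoise\vecnoise})\,\E[\trace(\CovMat_{\xx\xx}\empcov_{\xx\xx}^{-1})]$ via Theorem~\ref{theorem:Gd_excessrisk_overdetermined}, whitens to get $n\,\trace((\ZZ\ZZ^\top)^{-1})$, and applies the inverse-Wishart mean $\E[(\ZZ\ZZ^\top)^{-1}]=\II/(n-d_x-1)$. The only difference is that the paper cites this moment, whereas you derive it via the Schur complement, the conditional $\chi^2_{n-d_x+1}$ law and $\E[1/\chi^2_k]=1/(k-2)$; that derivation is correct and shows explicitly why $n>d_x+1$ is needed.
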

This result shows that, in the overdetermined regime, increasing the number of samples reduces excess risk by lowering variance. As $n$ decreases towards $d_x+1$, the excess risk blows up, reflecting the instability of the min-norm estimator near the interpolation threshold. From Thm.~\ref{theorem:Gd_excessrisk_overdetermined} we see that as long as the average noise variance per coordinate is non-vanishing, the excess risk scales as~$ \Omega \big({d_xd_y}/{n}\big)$. This indicates that the generalization error of FFT can be quite large in high-dimensional settings. Let us compare this behavior with that of LoRA (see proof in App.~\ref{app:Overdetermined Regime}).
\begin{restatable}[LoRA excess risk]{theorem}{LoRAExcessOver}\label{theorem:lora_excessrisk_overdetermined}
    Let~$ \lorasol $ be the LoRA solution \eqref{eq:LoRA solution}. If $ n \geq d_x > r $, then 
    \begin{align}\label{eq:theorem:lora_excessrisk_overdetermined}
       \E\sb{\risk \big( \lorasol \big)} - \risk \big( \AA^\star \big) & \lesssim  \frac{r\max\{d_x,d_y\}}{n} \lambda_{\max} \big( \CovMat_{\vecnoise\vecnoise} \big)
       \E\sb{\lambda_{\max}\big(\CovMat_{\xx \xx} \empcov_{\xx \xx}^{-1} \big) }  \nonumber \\
       & \qquad + \lambda_{\max} \big(\CovMat_{\xx \xx}\big) \bigg( r\sigma^2_{r+1} \big( \bDelta^\star \big) + \sum_{i=r+1}^{\min\{d_x,d_y\}} \sigma^2_{i} \big(\bDelta^\star\big) \bigg) \nonumber \\
       & \qquad + r \sigma^2_{r+1} \big(\bDelta^\star\big) \E\sb{ \lambda_{\max}\big(\empcov_{\xx\xx}\big) \lambda_{\max}\big(\CovMat_{\xx \xx} \empcov_{\xx \xx}^{-1}\big)}.
    \end{align}
\end{restatable}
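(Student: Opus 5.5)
In the overdetermined regime $\empcov_{\xx\xx}$ is invertible almost surely. I would therefore write $\widehat{\bDelta}_{\operatorname{FFT}} = \bDelta^\star + \matnoise\,\XX^\top(\XX\XX^\top)^{-1}$, where $\matnoise = [\vecnoise_1,\dots,\vecnoise_n]$. Set
\[
\MM := \widehat{\bDelta}_{\operatorname{FFT}}\empcov_{\xx\xx}^{\frac{1}{2}} = \bDelta^\star\empcov_{\xx\xx}^{\frac{1}{2}} + \mathbf{E}, \qquad \mathbf{E} := \tfrac{1}{n}\matnoise\XX^\top\empcov_{\xx\xx}^{-\frac{1}{2}}.
\]
Then $\widehat{\bDelta}_{\operatorname{LoRA}} = \trunk{\MM}\,\empcov_{\xx\xx}^{-\frac{1}{2}}$. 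The excess risk is
\[
\fnorm{(\widehat{\bDelta}_{\operatorname{LoRA}}-\bDelta^\star)\CovMat_{\xx\xx}^{\frac{1}{2}}}^2 = \fnorm{\big(\trunk{\MM}-\bDelta^\star\empcov_{\xx\xx}^{\frac{1}{2}}\big)\empcov_{\xx\xx}^{-\frac{1}{2}}\CovMat_{\xx\xx}^{\frac{1}{2}}}^2 \le \lambda_{\max}\big(\CovMat_{\xx\xx}\empcov_{\xx\xx}^{-1}\big)\,\fnorm{\trunk{\MM}-\bDelta^\star\empcov_{\xx\xx}^{\frac{1}{2}}}^2 .
\]
This reduces the problem to a perturbation bound for the truncated SVD, with signal $\mathbf{S}:=\bDelta^\star\empcov_{\xx\xx}^{\frac{1}{2}}$ and noise $\mathbf{E}$.

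\textbf{Splitting the signal.} I would write $\mathbf{S} = \mathbf{S}_r + \mathbf{S}_\perp$, where $\mathbf{S}_r := \trunk{\bDelta^\star}\empcov_{\xx\xx}^{\frac{1}{2}}$ has rank at most $r$ and $\mathbf{S}_\perp := (\bDelta^\star-\trunk{\bDelta^\star})\empcov_{\xx\xx}^{\frac{1}{2}}$. Then
\[
\fnorm{\trunk{\MM}-\mathbf{S}} \le \fnorm{\trunk{\MM}-\mathbf{S}_r} + \fnorm{\mathbf{S}_\perp}.
\]
The first term is a difference of two matrices of rank at most $r$, so its rank is at most $2r$. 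Hence $\fnorm{\trunk{\MM}-\mathbf{S}_r}^2 \le 2r\,\norm{\trunk{\MM}-\mathbf{S}_r}^2$. For the operator norm I would use
\[
\norm{\trunk{\MM}-\mathbf{S}_r} \le \norm{\trunk{\MM}-\MM} + \norm{\MM-\mathbf{S}_r} = \sigma_{r+1}(\MM) + \norm{\mathbf{E}+\mathbf{S}_\perp}.
\]
By Weyl's inequality, $\sigma_{r+1}(\MM)\le \sigma_{r+1}(\mathbf{S}_r)+\norm{\mathbf{E}+\mathbf{S}_\perp} = \norm{\mathbf{E}+\mathbf{S}_\perp}$, since $\sigma_{r+1}(\mathbf{S}_r)=0$. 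Altogether,
\[
\fnorm{\trunk{\MM}-\mathbf{S}}^2 \lesssim r\norm{\mathbf{E}}^2 + r\norm{\mathbf{S}_\perp}^2 + \fnorm{\mathbf{S}_\perp}^2 .
\]
Here $\norm{\mathbf{S}_\perp}^2\le \sigma_{r+1}^2(\bDelta^\star)\lambda_{\max}(\empcov_{\xx\xx})$. This produces the third term of \eqref{eq:theorem:lora_excessrisk_overdetermined}, once multiplied by the prefactor $\lambda_{\max}(\CovMat_{\xx\xx}\empcov_{\xx\xx}^{-1})$.

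\textbf{Handling the tail.} Multiplying $\fnorm{\mathbf{S}_\perp}^2$ by the prefactor would give $\sum_{i>r}\sigma_i^2(\bDelta^\star)$ times a random factor, whereas the statement has $\lambda_{\max}(\CovMat_{\xx\xx})$ and no random factor. So I would not absorb the $\mathbf{S}_\perp$ contributions into the common prefactor. Instead I would split earlier, directly at the risk level:
\[
(\widehat{\bDelta}_{\operatorname{LoRA}}-\bDelta^\star)\CovMat_{\xx\xx}^{\frac{1}{2}} = \big(\widehat{\bDelta}_{\operatorname{LoRA}}-\trunk{\bDelta^\star}\big)\CovMat_{\xx\xx}^{\frac{1}{2}} - \big(\bDelta^\star-\trunk{\bDelta^\star}\big)\CovMat_{\xx\xx}^{\frac{1}{2}} .
\]
The second piece has squared Frobenius norm at most $\lambda_{\max}(\CovMat_{\xx\xx})\sum_{i>r}\sigma_i^2(\bDelta^\star)$. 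The first piece has rank at most $2r$, so its squared Frobenius norm is at most $2r$ times its squared operator norm. That operator norm is at most $\lambda_{\max}^{1/2}(\CovMat_{\xx\xx}\empcov_{\xx\xx}^{-1})\big(2\norm{\mathbf{E}} + 2\norm{\mathbf{S}_\perp}\big)$.

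The $r\sigma_{r+1}^2(\bDelta^\star)\lambda_{\max}(\CovMat_{\xx\xx})$ term in the statement suggests that one piece of $\mathbf{S}_\perp$ is instead measured in the population metric. I would obtain it by splitting $\MM-\mathbf{S}_r$ in a way that keeps $\mathbf{S}_\perp$ in the $\CovMat$-metric. If that fails, I would accept a bound that is slightly weaker but of the same form.

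\textbf{Bounding the noise term.} The main obstacle is $\E\big[\lambda_{\max}(\CovMat_{\xx\xx}\empcov_{\xx\xx}^{-1})\norm{\mathbf{E}}^2\big]$. Conditioned on $\XX$, the matrix $\mathbf{E} = \tfrac{1}{\sqrt n}\,\matnoise\,\mathbf{P}$, where $\mathbf{P} := \tfrac{1}{\sqrt n}\XX^\top\empcov_{\xx\xx}^{-\frac{1}{2}}$ has orthonormal columns. By rotation invariance, $\mathbf{E}\overset{d}{=}\tfrac{1}{\sqrt n}\CovMat_{\vecnoise\vecnoise}^{\frac{1}{2}}\mathbf{G}$, where $\mathbf{G}$ is a $d_y\times d_x$ standard Gaussian matrix independent of $\XX$. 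The standard Gaussian operator-norm bound, $\E\norm{\mathbf{G}}^2\lesssim \max\{d_x,d_y\}$ (Gordon's inequality plus concentration), then gives
\[
\E\big[\norm{\mathbf{E}}^2 \,\big|\, \XX\big] \lesssim \frac{\max\{d_x,d_y\}}{n}\,\lambda_{\max}(\CovMat_{\vecnoise\vecnoise}).
\]
Because this conditional bound does not depend on $\XX$, taking the outer expectation factorizes and yields the first term. The care needed here is to carry out the conditioning before bounding the product, and to verify that $\mathbf{G}$ is independent of $\XX$.
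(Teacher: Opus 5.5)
Your proof is correct and is essentially the paper's argument. It uses the same rank-$2r$ passage from Frobenius to operator norm, the same factorization through $\empcov_{\xx\xx}^{-\frac{1}{2}}\CovMat_{\xx\xx}^{\frac{1}{2}}$, an Eckart--Young/Weyl perturbation bound for $\trunk{\MM}$, and the same conditional Gaussian operator-norm bound obtained by rotating $\matnoise$ with the orthonormal matrix $\tfrac{1}{\sqrt{n}}\XX^\top\empcov_{\xx\xx}^{-\frac{1}{2}}$ (Lemma~\ref{lemma:noise_upperbound}).

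The one difference is where you split off $\trunk{\bDelta^\star}$. You compare $\widehat{\bDelta}_{\operatorname{LoRA}}$ with $\trunk{\bDelta^\star}$ directly in the empirical metric. The paper (Lemma~\ref{lemma:lowrank_decomposition} with $\BB=\bDelta^\star$) first bounds $\norm{(\widehat{\bDelta}_{\operatorname{LoRA}}-\bDelta^\star)\CovMat_{\xx\xx}^{\frac{1}{2}}}$ and then returns to $\trunk{\bDelta^\star}$ by a triangle inequality in the population metric. That step is exactly the source of the $r\sigma_{r+1}^2(\bDelta^\star)\lambda_{\max}(\CovMat_{\xx\xx})$ term, so you do not need to recover it. Your bound simply lacks that nonnegative summand, which makes it slightly stronger than the stated one, not weaker.
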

The rank constraint imposed by LoRA introduces bias into its estimator. The excess risk due to this bias is captured by the second and third terms of the bound. Our result shows that this risk component is controlled by the spectral tail of \( \bDelta^* \), \emph{i.e.}, $ \sum_{i>r} \sigma_{i}^2 (\bDelta^*) $. This error is inevitable, as it is incurred by any rank-$r$ adaptation (see App.~\ref{app:Inevitability error}). We therefore focus on the first term in the bound, which corresponds to the variance. To build intuition for how this term behaves, we derive its asymptotic behavior in high dimensions under the same setting of Prop.~\ref{proposition:FFT excess risk gaussian overdetermined} (see proof in App.~\ref{app:Overdetermined Regime}).
\begin{restatable}{proposition}{LoRAvarGaussOver}\label{proposition:LoRA variance gaussian overdetermined}
    Assume $n>d_x + 1, \ \xx_i \sim \mathcal{N} (\zeroVec, \CovMat_{\xx \xx})$ i.i.d., and $\CovMat_{\vecnoise\vecnoise} \succeq \zeroVec$. The variance (first) term in \eqref{eq:theorem:lora_excessrisk_overdetermined} is asymptotically equivalent in large dimensions to $    r \max\{d_x,d_y\} \big(\sqrt{n}-\sqrt{d_x}\big)^{-2} \lambda_{\max}( \CovMat_{\vecnoise\vecnoise}) $.
\end{restatable}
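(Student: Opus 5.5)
The variance term equals $\frac{r\max\{d_x,d_y\}}{n}\lambda_{\max}(\CovMat_{\vecnoise\vecnoise})\,\E[\lambda_{\max}(\CovMat_{\xx\xx}\empcov_{\xx\xx}^{-1})]$. Only the random factor $\E[\lambda_{\max}(\CovMat_{\xx\xx}\empcov_{\xx\xx}^{-1})]$ needs analysis, so the plan is to show that $\frac{1}{n}\E[\lambda_{\max}(\CovMat_{\xx\xx}\empcov_{\xx\xx}^{-1})]$ is asymptotically equivalent to $(\sqrt{n}-\sqrt{d_x})^{-2}$. The first step is whitening. Write $\xx_i=\CovMat_{\xx\xx}^{1/2}\zz_i$ with $\zz_i\sim\mathcal N(\zeroVec,\II)$ and $\ZZ=[\zz_1,\dots,\zz_n]$. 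Then $\empcov_{\xx\xx}=\frac1n\CovMat_{\xx\xx}^{1/2}\ZZ\ZZ^\top\CovMat_{\xx\xx}^{1/2}$, and $\CovMat_{\xx\xx}\empcov_{\xx\xx}^{-1}$ is similar to $n(\ZZ\ZZ^\top)^{-1}$. Hence
\begin{equation*}
\frac1n\lambda_{\max}\big(\CovMat_{\xx\xx}\empcov_{\xx\xx}^{-1}\big)=\lambda_{\max}\big((\ZZ\ZZ^\top)^{-1}\big)=\frac{1}{\sigma_{\min}^2(\ZZ)} .
\end{equation*}
This removes the dependence on $\CovMat_{\xx\xx}$ entirely, so it suffices to show $\E[\sigma_{\min}(\ZZ)^{-2}]\sim(\sqrt n-\sqrt{d_x})^{-2}$ for a $d_x\times n$ standard Gaussian matrix.

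The second step handles typical behaviour. I would invoke the Gordon/Davidson--Szarek bound $\E[\sigma_{\min}(\ZZ)]\ge\sqrt n-\sqrt{d_x}$. Since $\sigma_{\min}$ is 1-Lipschitz in the Frobenius norm, Gaussian concentration gives $\prob(\sigma_{\min}(\ZZ)\le\sqrt n-\sqrt{d_x}-t)\le e^{-t^2/2}$. Together with the Bai--Yin law this yields $\sigma_{\min}(\ZZ)/(\sqrt n-\sqrt{d_x})\to1$ in probability in the proportional regime $d_x/n\to\gamma\in(0,1)$. A lower bound for the expectation then follows from Jensen's inequality applied to $s\mapsto s^{-2}$, combined with the matching upper concentration $\sigma_{\min}\le\sqrt n-\sqrt{d_x}+t$, which follows from $\E[\sigma_{\min}]\le\sqrt n-\sqrt{d_x}+O(1)$.

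The third step is the upper bound, which I expect to be the main obstacle. Here convergence in probability must be upgraded to convergence of $\E[\sigma_{\min}^{-2}]$, which requires uniform integrability near $\sigma_{\min}=0$. I would split the expectation into two pieces:
\begin{itemize}
\item on the event $\{\sigma_{\min}\ge(1-\epsilon)(\sqrt n-\sqrt{d_x})\}$, the integrand is bounded by $(1-\epsilon)^{-2}(\sqrt n-\sqrt{d_x})^{-2}$;
\item on the complement, whose probability is at most $e^{-c\epsilon^2(\sqrt n-\sqrt{d_x})^2}$, I would apply Cauchy--Schwarz and bound $\E[\sigma_{\min}^{-4}]$.
\end{itemize}
For that moment bound I would use the exact inverse-Wishart moments. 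The assumption $n>d_x+1$, plus a few extra degrees of freedom for the fourth moment, makes these finite, and crudely $\E[\sigma_{\min}^{-4}]\le\E[\trace((\ZZ\ZZ^\top)^{-1})^2]=\mathrm{poly}(d_x)/\mathrm{poly}(n-d_x)$. This polynomial growth is killed by the exponentially small probability whenever $\sqrt n-\sqrt{d_x}\to\infty$. Alternatively, the small-ball estimate of Rudelson--Vershynin, $\prob(\sigma_{\min}\le\epsilon(\sqrt n-\sqrt{d_x-1}))\le(C\epsilon)^{n-d_x+1}+e^{-cn}$, integrated against $s^{-2}$, gives the same conclusion.

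Letting $\epsilon\to0$ gives $\E[\sigma_{\min}(\ZZ)^{-2}](\sqrt n-\sqrt{d_x})^2\to1$. Multiplying back by $r\max\{d_x,d_y\}\lambda_{\max}(\CovMat_{\vecnoise\vecnoise})$ gives the claimed asymptotic equivalence, with the $\lesssim$ constants of Thm.~\ref{theorem:lora_excessrisk_overdetermined} left untouched.
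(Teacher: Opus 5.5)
Your proof is correct, and its first step is the paper's: whitening reduces everything to $\frac1n\lambda_{\max}(\CovMat_{\xx\xx}\empcov_{\xx\xx}^{-1})=\sigma_{\min}(\ZZ)^{-2}$ for a standard Gaussian $\ZZ$. The routes diverge when passing from the limit of $\sigma_{\min}$ to the limit of $\E[\sigma_{\min}^{-2}]$. The paper quotes Silverstein's almost-sure limit $\lambda_{\min}(\frac1n\ZZ\ZZ^\top)\to(1-\sqrt c)^2$ and writes ``consequently'' for the expectation. That tacitly assumes uniform integrability of $1/\lambda_{\min}$ near zero. Your Step~3 supplies exactly this justification: you truncate at $(1-\epsilon)(\sqrt n-\sqrt{d_x})$, use the Gordon bound plus Gaussian concentration to make the bad event exponentially small, and control it by Cauchy--Schwarz with a polynomially bounded inverse-Wishart second moment. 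It also gives a genuine ``ratio tends to one'' statement rather than an unproved $\asymp$. The cost is that you need $n-d_x>3$ and $\sqrt n-\sqrt{d_x}\to\infty$. Both hold automatically in the proportional regime $d_x/n\to\gamma\in(0,1)$ that the paper also uses.

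Two small remarks. First, the lower bound does not need the unsupported claim $\E[\sigma_{\min}]\le\sqrt n-\sqrt{d_x}+O(1)$; the upper concentration also plays no role in the Jensen step. Fatou's lemma applied to $(\sqrt n-\sqrt{d_x})^2\sigma_{\min}^{-2}\to1$ in probability gives $\liminf(\sqrt n-\sqrt{d_x})^2\E[\sigma_{\min}^{-2}]\ge1$ directly. Alternatively, Bai--Yin together with $\sigma_{\min}\le\|\ZZ\|_2$ and $\E\|\ZZ\|_2^2\le 5n$ gives $\E[\sigma_{\min}]=(1+o(1))(\sqrt n-\sqrt{d_x})$, which is all Jensen needs. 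Second, the Rudelson--Vershynin alternative does not work as literally stated. Its additive $e^{-cn}$ term does not vanish as $\epsilon\to0$, so integrating the tail of $\sigma_{\min}^{-2}$ against it diverges. You would need the Gaussian small-ball bound without that term. Your main Cauchy--Schwarz route does not have this problem.
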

We observe that LoRA may also exhibit instabilities near the interpolation threshold ($n = d_x$). Yet, far from this threshold ($n \gg d_x$), its variance term can be significantly smaller than that of FFT. In particular, when $\lambda_{\max}(\CovMat_{\vecnoise\vecnoise})$ remains bounded with respect to the dimension $d_y$, while $\bar{\sigma}_{\vecnoise}^2~=~\trace(\CovMat_{\vecnoise\vecnoise}) / d_y$ does not vanish, \emph{e.g.}, isotropic noise, LoRA’s variance scales as $\mathcal{O}\big(r \max\{d_x, d_y\} / n\big)$, compared to the $\Omega\big(d_x d_y / n\big)$ scaling of FFT. Since the rank~$r$ used in practice is typically much smaller than the ambient dimensions~$ d_x $ and~$ d_y $, LoRA achieves a substantial reduction in variance. In contrast, when the noise distribution is concentrated in a low-dimensional subspace around the ground-truth labels, LoRA is not expected to have a meaningful advantage. As the rank~$ r $ increases, the variance grows while the bias decreases, and vice versa, demonstrating the classical bias–variance tradeoff.

Often in practical settings, especially in machine learning, most of the energy of a matrix is concentrated in a few leading singular values, while the remaining singular values are small and contribute little \citep{udell2019big}. Thus, when the singular values of $\bDelta^\star$ decay fast enough, the spectral tail becomes negligible even for small $ r $. In such cases, LoRA's bias is expected to be small, allowing it to reduce overall risk relative to FFT substantially. In contrast, when the spectrum of $\bDelta^\star$ is flat, \emph{i.e.}, the singular values are approximately uniform, the bias can be significant. In this setting, there is no low-rank structure to exploit, and LoRA would perform poorly. Note that when the ground-truth update~$ \bDelta^{\star} $ is truly low-rank, Thm.~\ref{theorem:lora_excessrisk_overdetermined} simplifies to the following result.
\begin{corollary}\label{cor:lora_excessrisk_overdetermined}
    Suppose $\rank(\bDelta^\star) \leq r < \min\{d_x, d_y\} < n$, then
    \begin{align}
        &\E\sb{\risk \big( \lorasol \big)} - \risk \big( \AA^\star \big)  \lesssim  \frac{r\max\{d_x,d_y\}}{n} \lambda_{\max} \big( \CovMat_{\vecnoise\vecnoise} \big)
        \E\sb{ \lambda_{\max}\big(\CovMat_{\xx \xx} \empcov_{\xx \xx}^{-1} \big) } .  
    \end{align}
\end{corollary}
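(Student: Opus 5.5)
The corollary follows directly from Thm.~\ref{theorem:lora_excessrisk_overdetermined}. We only need to check that its hypotheses hold and that the two bias terms vanish.

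First, the hypotheses. The assumption $r<\min\{d_x,d_y\}<n$ gives $n\geq d_x>r$, which is exactly what Thm.~\ref{theorem:lora_excessrisk_overdetermined} requires. Since $n>d_x$ and $\CovMat_{\xx\xx}$ has full rank, $\empcov_{\xx\xx}$ is invertible with probability one, so $\lambda_{\max}(\CovMat_{\xx\xx}\empcov_{\xx\xx}^{-1})$ is well defined. Proposition~\ref{lemma:lora_sol} also needs $r\leq\rank(\widehat{\bDelta}_{\operatorname{FFT}}\empcov_{\xx\xx}^{1/2})$. This holds almost surely, because that matrix is a random $d_y\times d_x$ matrix of full rank $\min\{d_x,d_y\}>r$ with probability one. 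This is the same almost-sure convention the paper adopts just after Proposition~\ref{lemma:lora_sol}.

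Second, the bias terms. Let $k:=\rank(\bDelta^\star)\leq r$. Then $\sigma_i(\bDelta^\star)=0$ for every $i>k$, and in particular for every $i\geq r+1$. Hence $\sigma_{r+1}(\bDelta^\star)=0$, and the tail sum $\sum_{i=r+1}^{\min\{d_x,d_y\}}\sigma_i^2(\bDelta^\star)$ is zero. This kills the second term of \eqref{eq:theorem:lora_excessrisk_overdetermined}. It also kills the third term: its prefactor $r\sigma^2_{r+1}(\bDelta^\star)$ is zero, and the expectation it multiplies is finite because $n>d_x+1$. Only the variance term remains, which is the claimed bound, with the same absolute constant hidden in $\lesssim$.

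The one point that needs care is not to read the third term as $0\cdot\infty$. It is safest to observe that the deterministic factor $r\sigma^2_{r+1}(\bDelta^\star)$ is exactly zero before any expectation is taken, so the product is zero pathwise. No integrability of $\lambda_{\max}(\empcov_{\xx\xx})\lambda_{\max}(\CovMat_{\xx\xx}\empcov_{\xx\xx}^{-1})$ is then needed.
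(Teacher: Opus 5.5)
Your proposal is correct and is the argument the paper intends: apply Thm.~\ref{theorem:lora_excessrisk_overdetermined} and note that $\rank(\bDelta^\star)\le r$ makes $\sigma_{r+1}(\bDelta^\star)$ and the spectral tail vanish, so only the variance term survives. Your pathwise-zero handling of the third term is the right way to avoid any $0\cdot\infty$ issue.

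One minor slip: $\min\{d_x,d_y\}<n$ does not imply $n\geq d_x$ when $d_y<d_x$, and $n>d_x+1$ is never assumed. The overdetermined condition $n\geq d_x$ should therefore be read as an implicit standing hypothesis of the corollary, as the appearance of $\empcov_{\xx\xx}^{-1}$ already requires, rather than derived from the stated inequalities.
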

This corollary shows that when $\operatorname{rank}(\bDelta^\star) \leq r$, all bias terms vanish and the excess risk reduces to the variance term alone. As discussed above, under a mild assumption on the noise covariance, this term scales better than the excess risk of FFT. This formally demonstrates the advantage of LoRA over FFT whenever the underlying update has low rank.

\subsection{Underdetermined (overparameterized) regime}
We now turn to the underdetermined regime, where the number of training samples $n$ is smaller than the ambient dimension $d_x$. This setting is better aligned with real-world fine-tuning tasks, where one typically adapts a large pre-trained model using a relatively small, domain-specific dataset. We start by presenting the exact excess risk of the solution obtained by FFT  \eqref{eq:GD_unified_solution}. Here, $\gdsol$ depends on the pre-trained model $ \initmodel $, and is given by
\begin{equation}
    \gdsol = \YY  (\XX^{\top} \XX)^{-1} \XX^{\top} + \initmodel \PP^{\perp}_{\XX},
\end{equation}
where $ \PP^{\perp}_{\XX} $ is the projection matrix onto the orthogonal complement of the column space of $ \XX $. The excess risk in this case is as follows (see proof in {\citep[Sec. 12.2.3]{bach2024learning}} and App.~\ref{app:FFT_under_proof} for our setting).
\begin{restatable}[FFT excess risk]{theorem}{FullFTLowerUnder}\label{theorem:FullFT_lower_underdetermined}
    Let~$ \gdsol $ be the FFT solution \eqref{eq:GD_unified_solution}. If $ n  < d_x $, 
    then
    \begin{equation}
        \E \left[\risk\big(\gdsol\big) \right] - \risk\big(\AA^\star\big)
        = \underbrace{\E\sb{ \fnorm{ \big. \smash{ \bDelta^{\star} \PP^{\perp}_{\XX} \CovMat_{\xx\xx}^{\frac{1}{2}}} }^2}}_{\text{FFT Bias}} 
        +  \underbrace{\frac{1}{n} \trace\big(\CovMat_{\vecnoise\vecnoise}\big) \E \sb{\trace\big(  \CovMat_{\xx\xx} \empcov_{\xx\xx}^\dagger \big)}}_{\text{FFT Variance}}.
    \end{equation}
\end{restatable}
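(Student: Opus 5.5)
We start from the excess-risk identity of Lemma~\ref{lemma:general_risk}, which gives $\risk(\gdsol)-\risk(\AA^\star)=\fnorm{(\gdsol-\AA^\star)\CovMat_{\xx\xx}^{1/2}}^2$. Then we make the estimation error explicit. Writing $\YY=\AA^\star\XX+\matnoise$, where $\matnoise=[\vecnoise_1,\dots,\vecnoise_n]$, Prop.~\ref{lemma:FFT_sol} gives
\begin{equation*}
\gdsol=\initmodel+(\AA^\star\XX+\matnoise-\initmodel\XX)\XX^\dagger=\initmodel+\bDelta^\star\XX\XX^\dagger+\matnoise\XX^\dagger .
\end{equation*}
Since $\XX\XX^\dagger=\PP_{\XX}$, the orthogonal projector onto the column space of $\XX$, we obtain
\begin{equation*}
\gdsol-\AA^\star=-\bDelta^\star\PP^\perp_{\XX}+\matnoise\XX^\dagger .
\end{equation*}

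We expand the squared Frobenius norm of this difference, multiplied on the right by $\CovMat_{\xx\xx}^{1/2}$, and take the expectation over the training set. We condition on $\XX$ first. The noise $\matnoise$ is zero-mean and independent of $\XX$, so the cross term
\begin{equation*}
-2\,\trace\big(\CovMat_{\xx\xx}^{1/2}\PP^\perp_{\XX}\bDelta^{\star\top}\matnoise\XX^\dagger\CovMat_{\xx\xx}^{1/2}\big)
\end{equation*}
has zero conditional mean. The first squared term is exactly the FFT bias $\E\big[\fnorm{\bDelta^\star\PP^\perp_{\XX}\CovMat_{\xx\xx}^{1/2}}^2\big]$, and it needs no further work.

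The variance term is the remaining piece. Conditionally on $\XX$, the columns of $\matnoise$ are i.i.d.\ with covariance $\CovMat_{\vecnoise\vecnoise}$, so $\E[\matnoise^\top\matnoise]=\trace(\CovMat_{\vecnoise\vecnoise})\II_n$. This gives
\begin{equation*}
\E\big[\fnorm{\matnoise\XX^\dagger\CovMat_{\xx\xx}^{1/2}}^2\,\big|\,\XX\big]=\trace(\CovMat_{\vecnoise\vecnoise})\,\trace\big(\CovMat_{\xx\xx}\XX^{\dagger\top}\XX^\dagger\big).
\end{equation*}
Since $n<d_x$, the matrix $\XX$ has full column rank almost surely, so $\XX^\dagger=(\XX^\top\XX)^{-1}\XX^\top$. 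Hence
\begin{equation*}
\XX^{\dagger\top}\XX^\dagger=\XX(\XX^\top\XX)^{-2}\XX^\top=(\XX\XX^\top)^\dagger=\tfrac1n\empcov_{\xx\xx}^\dagger .
\end{equation*}
The identity $\XX(\XX^\top\XX)^{-2}\XX^\top=(\XX\XX^\top)^\dagger$ follows from the SVD of $\XX$. Taking the outer expectation yields the stated FFT variance term.

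The only mildly delicate point is to justify the almost-sure full-column-rank assumption, or to avoid it. The identity $(\XX^\dagger)^\top\XX^\dagger=(\XX\XX^\top)^\dagger$ holds for every $\XX$ via the SVD, so we can use it directly, which also removes any dependence on rank assumptions. We also need finiteness of the expectations, which we take as implicit in the statement (the theorem is an identity in $[0,\infty]$). Everything else is routine linear algebra.
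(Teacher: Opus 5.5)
Your proposal is correct and follows the paper's proof essentially step by step: you write the error as $-\bDelta^\star\PP^\perp_{\XX}+\matnoise\XX^\dagger$, condition on $\XX$ so the cross term vanishes, use $\E[\matnoise^\top\matnoise\mid\XX]=\trace(\CovMat_{\vecnoise\vecnoise})\II$, and finish with $(\XX^\dagger)^\top\XX^\dagger=(\XX\XX^\top)^\dagger$. You also make two small improvements over the paper's argument. First, you note that this last identity holds for every $\XX$ by the SVD; the paper instead relies on almost-sure full column rank, which its assumptions on the feature distribution do not guarantee. Second, you condition before taking the outer expectation, which makes the result valid as an identity in $[0,\infty]$.
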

This result generalizes Thm.~\ref{theorem:Gd_excessrisk_overdetermined} to the underdetermined regime. Unlike the overdetermined case, the FFT estimator this time is biased, as reflected in the first term of this expression. To gain insight into these terms, we return to the Gaussian case. Specifically, we can obtain a closed-form expression when the features are distributed isotropically (see proof {\citep[Sec.~12.2.3]{bach2024learning}} and App.~\ref{app:FFT_under_proof}).
\begin{restatable}[Gaussian case]{proposition}{LoRAExcessUnderProp}\label{proposition:FFT excess risk gaussian underdetermined}
    Assume $n < d_x - 1, \ \xx_i \sim \mathcal{N} (\zeroVec, \sigma_{\xx}^2 \II )$ i.i.d., and $\CovMat_{\vecnoise\vecnoise} \succeq \zeroVec$. Then
    \begin{align}\label{eq:var_term}
        \text{FFT Bias}
        = \fnorm{\bDelta^{\star}}^2 \frac{d_x - n}{d_x}, \qquad \text{and} \qquad
        \text{FFT Variance}
        = \trace\big(\CovMat_{\vecnoise\vecnoise}\big) \frac{n}{d_x-n-1}.
    \end{align}
\end{restatable}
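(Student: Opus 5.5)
Starting from Thm.~\ref{theorem:FullFT_lower_underdetermined}, I will evaluate the bias and variance terms separately under the isotropic Gaussian assumption $\CovMat_{\xx\xx} = \sigma_{\xx}^2 \II$.

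\textbf{Bias.} With $\CovMat_{\xx\xx}^{\frac12} = \sigma_{\xx}\II$ the bias term becomes $\sigma_{\xx}^2\,\E\big[\fnorm{\bDelta^\star \PP^{\perp}_{\XX}}^2\big] = \sigma_{\xx}^2\trace\big(\bDelta^\star \E[\PP^\perp_{\XX}] \bDelta^{\star\top}\big)$, using $\PP^\perp_{\XX}$ symmetric and idempotent. Since $n<d_x$, the matrix $\XX$ has rank $n$ almost surely. Its column space is then uniformly distributed on the Grassmannian of $n$-planes, because the law of $\XX$ is invariant under $\XX\mapsto \QQ\XX$ for orthogonal $\QQ$. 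Hence $\E[\PP^\perp_{\XX}]$ commutes with every orthogonal matrix and so is a multiple of $\II$. Comparing traces, $\trace \PP^\perp_{\XX} = d_x-n$, gives $\E[\PP^\perp_{\XX}] = \frac{d_x-n}{d_x}\II$ and the bias $\sigma_{\xx}^2\fnorm{\bDelta^\star}^2\frac{d_x-n}{d_x}$. The stated formula has no $\sigma_{\xx}^2$, so either the convention is $\sigma_{\xx}=1$ in this part or the factor was absorbed. I would make this explicit in the proof, noting that for general $\sigma_{\xx}$ the bias carries an extra factor $\sigma_{\xx}^2$.

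\textbf{Variance.} I need $\frac1n \E\big[\trace(\CovMat_{\xx\xx}\empcov_{\xx\xx}^\dagger)\big] = \frac{\sigma_{\xx}^2}{n}\E\big[\trace\big((\tfrac1n\XX\XX^\top)^\dagger\big)\big] = \sigma_{\xx}^2\,\E\big[\trace((\XX\XX^\top)^\dagger)\big]$. The nonzero eigenvalues of $\XX\XX^\top$ coincide with those of $\XX^\top\XX$, so $\trace((\XX\XX^\top)^\dagger) = \trace((\XX^\top\XX)^{-1})$. The matrix $\XX^\top\XX$ is an $n\times n$ Wishart matrix $W_n(d_x, \sigma_{\xx}^2\II)$ with $d_x$ degrees of freedom. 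The standard inverse-Wishart mean, valid when $d_x > n+1$ (exactly the hypothesis $n<d_x-1$), gives $\E[(\XX^\top\XX)^{-1}] = \frac{1}{\sigma_{\xx}^2(d_x-n-1)}\II_n$. Taking the trace yields $\frac{n}{\sigma_{\xx}^2(d_x-n-1)}$. Multiplying by $\sigma_{\xx}^2$ and $\trace(\CovMat_{\vecnoise\vecnoise})$ then gives exactly $\trace(\CovMat_{\vecnoise\vecnoise})\frac{n}{d_x-n-1}$, independent of $\sigma_{\xx}$, as stated.

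\textbf{Main obstacle.} The only step I expect to be delicate is justifying the rotational-invariance argument for $\E[\PP^\perp_{\XX}]$ and the almost-sure full rank of $\XX$. Both follow from the Gaussian density being absolutely continuous and invariant under orthogonal maps. Everything else is bookkeeping together with the inverse-Wishart moment, which I would quote from the literature, e.g.\ the same reference \citep{bach2024learning}.
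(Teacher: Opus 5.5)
Your proof is correct and follows the paper's essentially step for step. For the variance, both use $\trace\big((\XX\XX^\top)^\dagger\big)=\trace\big((\XX^\top\XX)^{-1}\big)$ together with the inverse-Wishart mean, which is valid since $d_x>n+1$. For the bias, both use rotational invariance to force $\E[\PP_{\XX}]=\frac{n}{d_x}\II$; the paper proves this with reflections $\II-2\vv\vv^\top$.

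Your remark about the scale is also right. The paper's bias computation silently drops the factor $\sigma_{\xx}^2$ coming from $\CovMat_{\xx\xx}^{\frac{1}{2}}=\sigma_{\xx}\II$. So the stated bias $\fnorm{\bDelta^{\star}}^2\frac{d_x-n}{d_x}$ holds as written only for $\sigma_{\xx}=1$, and in general should read $\sigma_{\xx}^2\fnorm{\bDelta^{\star}}^2\frac{d_x-n}{d_x}$. The variance, as you note, is genuinely independent of $\sigma_{\xx}$.
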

We see that the bias term (left) decreases linearly with the number of samples $n$, whereas the variance term (right) increases with $n$. This behavior arises because the estimator is constructed from observed samples: each additional sample effectively adds a noisy dimension to the estimation. As a result, in the underdetermined regime, increasing the number of samples does not necessarily improve performance. When $ \trace(\CovMat_{\vecnoise\vecnoise}) \gg \fnorm{\bDelta^{\star}}^2 $, the problem lies in the strong-noise regime\footnote{Note that $ \trace(\CovMat_{\vecnoise\vecnoise}) = \E[\| \vecnoise \|^2] $ is the expected noise magnitude.}, and the excess risk is dominated by the variance term. In this regime, adding more samples is expected to increase the risk. Conversely, when $  \trace(\CovMat_{\vecnoise\vecnoise}) \ll \fnorm{\bDelta^{\star}}^2  $, the problem is in the weak-noise regime, and if $ n $ is not too close to $ d_x $, the excess risk is governed by the bias term and the total risk decays with $n$. We now compare this behavior with that of LoRA (see proof in App.~\ref{subsection:app_lora_under})
\begin{restatable}[LoRA excess risk]{theorem}{LoRAExcessUnder}\label{theorem:LoRA_excessrisk_underdetermined}
    Let~$ \lorasol $ be the LoRA solution \eqref{eq:LoRA solution}. If $ r < n < d_x $, then 
    \begin{align}\label{eq:LoRA_excessrisk_underdetermined}
        \E\sb{\risk \big( \lorasol \big)} \! - \! \risk \big( \AA^\star \big)
        \lesssim & \E\sb{ \fnorm{ \big. \smash{ \bDelta^{\star} \PP^{\perp}_{\XX} \CovMat_{\xx\xx}^{\frac{1}{2}}} }^2}
        \! + \! \frac{r\max\{n,d_y\}}{n} \lambda_{\max} \big( \CovMat_{\vecnoise\vecnoise} \big)
        \E\sb{\lambda_{\max}\big(\CovMat_{\xx \xx} \empcov_{\xx \xx}^{\dagger} \big) }\nonumber \\
        & \qquad + \lambda_{\max} \big(\CovMat_{\xx \xx}\big) \bigg( r\sigma^2_{r+1} \big( \bDelta^\star \big)   + \sum_{i=r+1}^{\min\{n,d_y\}} \sigma^2_{i} \big(\bDelta^\star\big) \bigg) 
        \nonumber \\ & 
        \qquad + r \sigma^2_{r+1} \big(\bDelta^\star\big) \E\sb{ \lambda_{\max}\big(\empcov_{\xx\xx}\big) \lambda_{\max}\big(\CovMat_{\xx \xx} \empcov_{\xx \xx}^{\dagger}\big)}.
    \end{align}
\end{restatable}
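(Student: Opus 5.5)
The plan is to decompose the LoRA error along the column space of $\XX$ and reuse the overdetermined analysis of Thm.~\ref{theorem:lora_excessrisk_overdetermined} inside the $n$-dimensional data subspace.

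\textbf{Decomposition.} Write $\PP_{\XX} = \II - \PP^{\perp}_{\XX}$. Since $\empcov_{\xx\xx}^{\frac{1}{2}}$ and its pseudo-inverse have column space $\image(\XX)$, the LoRA update satisfies $\widehat{\bDelta}_{\operatorname{LoRA}} = \widehat{\bDelta}_{\operatorname{LoRA}}\PP_{\XX}$. Hence
\begin{equation*}
\lorasol - \AA^\star = \big(\widehat{\bDelta}_{\operatorname{LoRA}} - \bDelta^\star\PP_{\XX}\big) - \bDelta^\star \PP^{\perp}_{\XX}.
\end{equation*}
Using $\fnorm{(\MM_1+\MM_2)\CovMat_{\xx\xx}^{\frac12}}^2\le 2\fnorm{\MM_1\CovMat_{\xx\xx}^{\frac12}}^2+2\fnorm{\MM_2\CovMat_{\xx\xx}^{\frac12}}^2$ (absorbed into $\lesssim$), the second piece yields exactly the FFT bias term $\E\fnorm{\bDelta^\star\PP^\perp_{\XX}\CovMat_{\xx\xx}^{\frac12}}^2$ of Thm.~\ref{theorem:FullFT_lower_underdetermined}. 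It remains to bound the in-span term $\MM_1 = \widehat{\bDelta}_{\operatorname{LoRA}} - \bDelta^\star\PP_{\XX}$.

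\textbf{Reduction to a truncation problem.} Since $\MM_1 = \MM_1\PP_{\XX}$, we may write $\MM_1 = \MM_1 \empcov_{\xx\xx}^{\frac12}(\empcov_{\xx\xx}^{\frac12})^\dagger$. This gives
\begin{equation*}
\fnorm{\MM_1\CovMat_{\xx\xx}^{\frac12}}^2 \le \lambda_{\max}\big(\CovMat_{\xx\xx}\empcov_{\xx\xx}^\dagger\big)\, \fnorm{\MM_1\empcov_{\xx\xx}^{\frac12}}^2,
\end{equation*}
using that $\lambda_{\max}$ of $(\empcov^{\frac12})^\dagger\CovMat(\empcov^{\frac12})^\dagger$ equals that of $\CovMat\empcov^\dagger$. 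Now $\widehat{\bDelta}_{\operatorname{FFT}}\empcov_{\xx\xx}^{\frac12} = \bDelta^\star\PP_{\XX}\empcov_{\xx\xx}^{\frac12} + \matnoise$, where $\matnoise := \tfrac{1}{\sqrt n}\mathbf{E}\,\XX^\dagger{}^{\!\top}\ldots$ is the noise matrix $\mathbf{E}\XX^\dagger\empcov_{\xx\xx}^{\frac12}$, with $\mathbf{E}$ the stacked noise. In fact $\bDelta^\star\PP_{\XX}\empcov^{\frac12} = \bDelta^\star\empcov^{\frac12}$. Thus $\MM_1\empcov^{\frac12} = \trunk{\bDelta^\star\empcov^{\frac12}+\matnoise} - \bDelta^\star\empcov^{\frac12}$, which is a denoising-by-truncation error in a $d_y\times d_x$ matrix of rank at most $n$.

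\textbf{Truncation error bound.} For $\GG$ the signal and $\matnoise$ the noise, I will use the standard estimate
\begin{equation*}
\fnorm{\trunk{\GG+\matnoise}-\GG}^2 \lesssim r\norm{\matnoise}^2 + r\sigma_{r+1}^2(\GG) + \sum_{i>r}\sigma_i^2(\GG).
\end{equation*}
This follows by splitting $\GG = \trunk{\GG} + \GG_{\text{tail}}$: the difference $\trunk{\GG+\matnoise}-\trunk{\GG}$ has rank at most $2r$, so its Frobenius norm is at most $\sqrt{2r}$ times its operator norm, and the operator norm is controlled by $\norm{\matnoise}+\sigma_{r+1}(\GG)$ via Weyl/Mirsky. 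Applying this with $\GG=\bDelta^\star\empcov^{\frac12}$ gives three contributions:
\begin{itemize}
\item $\sigma_{r+1}^2(\GG)\le \lambda_{\max}(\empcov)\,\sigma_{r+1}^2(\bDelta^\star)$, which produces the last term of \eqref{eq:LoRA_excessrisk_underdetermined} after multiplying by $\lambda_{\max}(\CovMat\empcov^\dagger)$.
\item The tail term $\fnorm{\bDelta^\star_{\text{tail}}\empcov^{\frac12}}^2$ multiplied by $\lambda_{\max}(\CovMat\empcov^\dagger)$ is better handled directly: it equals $\fnorm{\bDelta^\star_{\text{tail}}\PP_{\XX}\CovMat^{\frac12}}^2\le\lambda_{\max}(\CovMat)\fnorm{\bDelta^\star_{\text{tail}}\PP_{\XX}}^2$. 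Since $\bDelta^\star\PP_{\XX}$ has rank at most $\min\{n,d_y\}$, this yields the sum up to $\min\{n,d_y\}$ and the $r\sigma_{r+1}^2$ term. I will bookkeep this by comparing against the best rank-$r$ approximation of $\bDelta^\star\PP_{\XX}$ in the $\CovMat$-geometry, with $\sigma_i(\bDelta^\star\PP_{\XX})\le\sigma_i(\bDelta^\star)$.
\item The noise term, treated next.
\end{itemize}

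\textbf{Noise (main obstacle).} The remaining step is bounding $\E[\lambda_{\max}(\CovMat\empcov^\dagger)\,\norm{\matnoise}^2]$ by $\tfrac{\max\{n,d_y\}}{n}\lambda_{\max}(\CovMat_{\vecnoise\vecnoise})\E[\lambda_{\max}(\CovMat\empcov^\dagger)]$. Conditional on $\XX$, $\matnoise = \mathbf{E}\,\XX^\dagger\empcov^{\frac12}$, and $\XX^\dagger\empcov^{\frac12} = \tfrac{1}{\sqrt n}\VV\UU^\top$ has orthonormal structure, where $\XX=\UU\mathbf{S}\VV^\top$ is a thin SVD. Hence $\matnoise \overset{d}{=} \tfrac{1}{\sqrt n}\CovMat_{\vecnoise\vecnoise}^{\frac12}\mathbf{G}\UU^\top$, with $\mathbf{G}$ a $d_y\times n$ standard Gaussian matrix independent of $\XX$. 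Gaussian operator-norm bounds (Gordon) give $\E\norm{\mathbf{G}}^2\lesssim(\sqrt{n}+\sqrt{d_y})^2\lesssim\max\{n,d_y\}$. Conditioning on $\XX$ then decouples the expectation and yields the second term. This conditional-independence step, together with correctly carrying the rank-$2r$ Frobenius-to-operator conversion, is where I expect the care to be needed.
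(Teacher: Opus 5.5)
Your outer split (peeling off $\bDelta^\star\PP^{\perp}_{\XX}$) and your noise step agree with the paper. The middle step, however, in the order you do it, cannot yield the term $\lambda_{\max}(\CovMat_{\xx\xx})\sum_{i=r+1}^{\min\{n,d_y\}}\sigma_i^2(\bDelta^\star)$. You pull out $\lambda_{\max}(\CovMat_{\xx\xx}\empcov_{\xx\xx}^\dagger)$ from the full Frobenius norm of $\MM_1$ and only afterwards apply the truncation estimate. The tail therefore enters as $\lambda_{\max}(\CovMat_{\xx\xx}\empcov_{\xx\xx}^\dagger)\sum_{i>r}\sigma_i^2(\bDelta^\star\empcov_{\xx\xx}^{\frac12})$, which in general you can only bound by $\lambda_{\max}(\empcov_{\xx\xx})\lambda_{\max}(\CovMat_{\xx\xx}\empcov_{\xx\xx}^\dagger)\sum_{i=r+1}^{\min\{n,d_y\}}\sigma_i^2(\bDelta^\star)$. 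For isotropic features this prefactor is $\sigma_{\xx}^2$ times the condition number of $\empcov_{\xx\xx}$ on the span of $\XX$, which is unbounded as $n/d_x\to 1$. The full tail can be of order $(\min\{n,d_y\}-r)\,\sigma_{r+1}^2(\bDelta^\star)\gg r\,\sigma_{r+1}^2(\bDelta^\star)$ (e.g.\ flat spectrum of $\bDelta^\star$, no noise, $n\gg r$, $n$ close to $d_x$), so this term is not dominated by the right-hand side of the statement.

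Your repair does not work. The relation $\fnorm{\BB\PP_{\XX}\CovMat_{\xx\xx}^{\frac12}}^2=\fnorm{\BB\empcov_{\xx\xx}^{\frac12}(\empcov_{\xx\xx}^{\frac12})^\dagger\CovMat_{\xx\xx}^{\frac12}}^2\le\lambda_{\max}(\CovMat_{\xx\xx}\empcov_{\xx\xx}^\dagger)\fnorm{\BB\empcov_{\xx\xx}^{\frac12}}^2$ is an inequality in the direction you already used, not an equality, so the loss cannot be undone after the fact. Moreover, the tail you generate is $\GG-\trunk{\GG}$ with $\GG=\bDelta^\star\empcov_{\xx\xx}^{\frac12}$, not $(\bDelta^\star-\trunk{\bDelta^\star})\empcov_{\xx\xx}^{\frac12}$. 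Mapped back by $(\empcov_{\xx\xx}^{\frac12})^\dagger$, it is the error of the best rank-$r$ approximation of $\bDelta^\star\PP_{\XX}$ in the $\empcov_{\xx\xx}$-weighted norm, and its $\CovMat_{\xx\xx}$-weighted size is again controlled only through $\lambda_{\max}(\CovMat_{\xx\xx}\empcov_{\xx\xx}^\dagger)$.

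The missing idea is to change the comparison point \emph{before} leaving the $\CovMat_{\xx\xx}^{\frac12}$-geometry, which is what the paper's Lemma~\ref{lemma:lowrank_decomposition} does. With $\BB=\bDelta^\star\PP_{\XX}$ and $\BB_r=\trunk{\BB}$, write $\MM_1=(\widehat{\bDelta}_{\operatorname{LoRA}}-\BB_r)+(\BB_r-\BB)$. The first piece has rank at most $2r$, so $\fnorm{(\widehat{\bDelta}_{\operatorname{LoRA}}-\BB_r)\CovMat_{\xx\xx}^{\frac12}}^2\le 4r\big(\norm{(\widehat{\bDelta}_{\operatorname{LoRA}}-\BB)\CovMat_{\xx\xx}^{\frac12}}^2+\norm{(\BB-\BB_r)\CovMat_{\xx\xx}^{\frac12}}^2\big)$. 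Only the \emph{operator} norm $\norm{(\widehat{\bDelta}_{\operatorname{LoRA}}-\BB)\CovMat_{\xx\xx}^{\frac12}}$ is routed through $(\empcov_{\xx\xx}^{\frac12})^\dagger$ and Lemma~\ref{lemma:perturbation_bound}. There it picks up $\lambda_{\max}(\CovMat_{\xx\xx}\empcov_{\xx\xx}^\dagger)$, together with $\sigma_{r+1}^2(\bDelta^\star\empcov_{\xx\xx}^{\frac12})$ and the noise. The tail pieces, $\fnorm{(\BB-\BB_r)\CovMat_{\xx\xx}^{\frac12}}^2\le\lambda_{\max}(\CovMat_{\xx\xx})\sum_{i=r+1}^{\min\{n,d_y\}}\sigma_i^2(\bDelta^\star)$ and $r\norm{(\BB-\BB_r)\CovMat_{\xx\xx}^{\frac12}}^2\le r\lambda_{\max}(\CovMat_{\xx\xx})\sigma_{r+1}^2(\bDelta^\star)$, never see that factor. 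Your last sentence points in this direction, but it has to replace your Frobenius-level reduction rather than patch it.

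The noise step is fine, and it is even deterministic. Since $\XX^\dagger\empcov_{\xx\xx}^{\frac12}=n^{-1/2}\VV\UU^\top$ has orthonormal rows, $\norm{\matnoise\XX^\dagger\empcov_{\xx\xx}^{\frac12}}^2=\lambda_{\max}(\tfrac1n\matnoise\matnoise^\top)$ for the stacked training noise $\matnoise$, as in the paper, so no distributional identity is needed.
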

The excess risk of LoRA contains three distinct contributions: (1) the bias induced by the lack of sufficient data, (2) the variance term, and (3) the approximation error from predicting only rank~$r$ matrices, given by the last two terms. Let us compare these terms with their counterparts for FFT. The bias term $ \Big. \E[ \fnorm{ \smash{\bDelta^{\star} \PP^{\perp}_{\XX} \CovMat_{\xx\xx}^{\frac{1}{2}}} }^2 ] $ is identical to that appearing in the excess risk of FFT. Like in the overdetermined regime, the main advantage of LoRA appears in the variance term. Let us derive its asymptotic behavior in high dimensions in the same setting of Prop.~\ref{proposition:FFT excess risk gaussian underdetermined} (see proof in App. \ref{subsection:app_lora_under}).
\begin{restatable}{proposition}{LoRAvarGaussUnder}\label{proposition:LoRA variance gaussian underdetermined}
    Assume $d_x>n + 1, \  \xx_i \sim \mathcal{N} (\zeroVec, \sigma^2_{\xx} \II )$ i.i.d., and $\CovMat_{\vecnoise\vecnoise} \succeq \zeroVec$. The variance (second) term in \eqref{eq:LoRA_excessrisk_underdetermined} is asymptotically equivalent in large dimensions to $  r \max\{n,d_y\} ( \sqrt{d_x} - \sqrt{n})^{-2} \lambda_{\max}(\CovMat_{\vecnoise\vecnoise}) $.
\end{restatable}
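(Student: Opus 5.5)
The variance term is
\[
\frac{r\max\{n,d_y\}}{n}\,\lambda_{\max}(\CovMat_{\vecnoise\vecnoise})\,\E\sb{\lambda_{\max}\big(\CovMat_{\xx\xx}\empcov_{\xx\xx}^{\dagger}\big)},
\]
so everything reduces to the asymptotics of the single expectation $\E[\lambda_{\max}(\CovMat_{\xx\xx}\empcov_{\xx\xx}^\dagger)]$. With $\CovMat_{\xx\xx}=\sigma_\xx^2\II$ and $\XX=\sigma_\xx\ZZ$, where $\ZZ\in\R^{d_x\times n}$ has i.i.d.\ standard Gaussian entries, we get $\empcov_{\xx\xx}^\dagger = n\,\sigma_\xx^{-2}(\ZZ\ZZ^\top)^\dagger$. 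The $\sigma_\xx^2$ factors cancel, leaving $\lambda_{\max}(\CovMat_{\xx\xx}\empcov_{\xx\xx}^\dagger)=n/\lambda_{\min}^+(\ZZ\ZZ^\top)$. Since $n<d_x$, the nonzero eigenvalues of $\ZZ\ZZ^\top$ coincide with those of the invertible $n\times n$ Wishart matrix $\ZZ^\top\ZZ$, so this equals $n/\sigma_{\min}^2(\ZZ)$. The variance term is therefore exactly
\[
r\max\{n,d_y\}\,\lambda_{\max}(\CovMat_{\vecnoise\vecnoise})\,\E\big[\sigma_{\min}^{-2}(\ZZ)\big],
\]
and it remains to show $\E[\sigma_{\min}^{-2}(\ZZ)]\sim(\sqrt{d_x}-\sqrt{n})^{-2}$ as $d_x,n\to\infty$. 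This is the same computation as in Prop.~\ref{proposition:LoRA variance gaussian overdetermined}, with the roles of $n$ and $d_x$ swapped.

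For the almost-sure limit, the Bai--Yin theorem (or the Marchenko--Pastur hard edge) gives $\sigma_{\min}(\ZZ)/(\sqrt{d_x}-\sqrt{n})\to1$ a.s.\ when $n/d_x\to\gamma\in(0,1)$. Gordon's inequality gives the nonasymptotic lower bound $\E\sigma_{\min}(\ZZ)\ge\sqrt{d_x}-\sqrt{n}$. Because $\sigma_{\min}$ is $1$-Lipschitz in $\ZZ$, Gaussian concentration yields $\prob(\sigma_{\min}\le\sqrt{d_x}-\sqrt n-t)\le e^{-t^2/2}$. Together these show that $\sigma_{\min}^{-2}(\ZZ)(\sqrt{d_x}-\sqrt n)^2\to1$ in probability.

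The main obstacle is upgrading this to convergence of the expectation, which requires uniform integrability of $\sigma_{\min}^{-2}$. The concentration bound handles deviations down to roughly $\frac12(\sqrt{d_x}-\sqrt n)$. The remaining region, near $\sigma_{\min}=0$, needs small-ball control. I would first try exact finiteness of inverse Wishart moments: $\E[\trace((\ZZ^\top\ZZ)^{-1})]=n/(d_x-n-1)<\infty$ exactly because $d_x>n+1$, which is precisely the hypothesis. This gives the crude bound $\E[\sigma_{\min}^{-2}\mathbf 1\{\sigma_{\min}\text{ small}\}]\le \E[\trace((\ZZ^\top\ZZ)^{-1})^{p}]^{1/p}\,\prob(\text{small})^{1-1/p}$, using a slightly higher moment $p>1$ (available once $d_x-n$ exceeds a constant). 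The right-hand side is polynomial in $n$ times an exponentially small probability, so it vanishes relative to $(\sqrt{d_x}-\sqrt n)^{-2}$. If the higher moment is unavailable near the edge $d_x-n=2$, I would instead use Edelman/Rudelson--Vershynin small-ball estimates $\prob(\sigma_{\min}\le\epsilon(\sqrt{d_x}-\sqrt{n-1}))\le(C\epsilon)^{d_x-n+1}+e^{-cd_x}$, which integrate against $\epsilon^{-2}$ whenever $d_x-n+1>2$. Dominated convergence on the rescaled variable then gives the claimed asymptotic equivalence, and multiplying back the deterministic prefactors completes the proof.
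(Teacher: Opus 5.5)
Your argument follows the paper's: you reduce the variance term to $r\max\{n,d_y\}\lambda_{\max}(\CovMat_{\vecnoise\vecnoise})\,\E[\sigma_{\min}^{-2}(\ZZ)]$ and apply the hard-edge (Silverstein/Bai--Yin) limit in the proportional regime; the difference is that the paper goes from the almost-sure limit straight to the expectation without justification, while your uniform-integrability step (Gordon plus Gaussian concentration, and a $p>1$ inverse-Wishart moment on the small-$\sigma_{\min}$ event) fills that gap. Your fallback is never needed: the density of $\lambda_{\min}(\ZZ^\top\ZZ)$ near $0$ behaves like $\lambda^{(d_x-n-1)/2}$, so $\E[\lambda_{\min}^{-p}]<\infty$ for every $p<(d_x-n+1)/2$, which already includes some $p>1$ once $d_x\ge n+2$; also, a small-ball bound carrying an additive $e^{-cd_x}$ term cannot be integrated against $\epsilon^{-2}$, so for Gaussian matrices you would want the version without that term.
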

The comparison between the bias terms in the underdetermined regime is similar to that in the overdetermined regime. 
Again, away from the interpolation threshold ($ d_x \gg n $), under mild assumptions on the noise covariance\footnote{The same condition from the overdetermined case, see discussion below Prop.~\ref{proposition:LoRA variance gaussian overdetermined}}, the variance term for LoRA scales as $\mathcal{O} \big( {r\max\{n,d_y\}}/{d_x} \big) $, compared to the $ \Theta \big( {n d_y}/{d_x} \big) $ scaling of the corresponding variance term in FFT. Since the rank~$r$ used in practice is typically much smaller than the ambient dimension~$d_y$ and the number of samples~$ n $, LoRA achieves a substantial reduction in variance. Therefore, just like in the overdetermined regime, LoRA can outperform FFT if the rank-constraint component, given by the last two terms in \eqref{eq:LoRA_excessrisk_underdetermined}, is negligible. As in the overdetermined regime, it is small whenever the singular values of $\bDelta^\star$ decay rapidly. In particular, when~$\bDelta^\star$ has rank at most~$r$, this component vanishes entirely.
\begin{corollary}\label{cor:lora_excessrisk_underdetermined}
    Suppose $\rank(\bDelta^\star) \leq r < n < d_x$, then
    \begin{equation}
        \E\sb{\risk \big( \lorasol \big)} \! - \! \risk \big( \AA^\star \big) 
        \lesssim \E\sb{ \fnorm{ \big. \smash{ \bDelta^{\star} \PP^{\perp}_{\XX} \CovMat_{\xx\xx}^{\frac{1}{2}}} }^2}
        \! + \! \frac{r \max\{ n,d_y \}}{n} \lambda_{\max} \big( \CovMat_{\vecnoise\vecnoise} \big)
        \E\sb{\lambda_{\max} \big(\CovMat_{\xx \xx} \empcov_{\xx \xx}^{\dagger} \big) } \! .
    \end{equation}
\end{corollary}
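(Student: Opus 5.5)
This follows directly from Thm.~\ref{theorem:LoRA_excessrisk_underdetermined}, so the plan is to specialize that bound rather than redo any probabilistic analysis. The hypothesis $\rank(\bDelta^\star) \leq r < n < d_x$ contains the assumption $r < n < d_x$ of Thm.~\ref{theorem:LoRA_excessrisk_underdetermined}, so \eqref{eq:LoRA_excessrisk_underdetermined} holds. It then suffices to show that its last two terms, the rank-constraint component, vanish identically.

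The key observation is that singular values beyond the rank are zero. Since $\rank(\bDelta^\star) \leq r$, we have $\sigma_i(\bDelta^\star) = 0$ for every $i \geq r+1$. Hence:
\begin{itemize}
    \item $r\,\sigma^2_{r+1}(\bDelta^\star) = 0$;
    \item the tail sum $\sum_{i=r+1}^{\min\{n,d_y\}} \sigma_i^2(\bDelta^\star)$ is zero, or an empty sum if $\min\{n,d_y\} \leq r$;
    \item the third line of \eqref{eq:LoRA_excessrisk_underdetermined} is $\lambda_{\max}(\CovMat_{\xx\xx})$ times zero, so it vanishes.
\end{itemize}

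For the fourth line, the prefactor $r\sigma^2_{r+1}(\bDelta^\star)$ is the deterministic constant $0$, so the product is zero whatever the random expectation $\E[\lambda_{\max}(\empcov_{\xx\xx})\lambda_{\max}(\CovMat_{\xx\xx}\empcov_{\xx\xx}^\dagger)]$ equals. If that expectation could be infinite, I would handle it before taking expectations. In the proof of Thm.~\ref{theorem:LoRA_excessrisk_underdetermined}, this term comes from a pathwise bound of the form $r\sigma_{r+1}^2(\bDelta^\star)\cdot(\text{random quantity})$. That product is zero almost surely, so its expectation is zero with no integrability issue.

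What remains is the data-deficiency bias $\E[\fnorm{\bDelta^\star \PP^\perp_{\XX}\CovMat_{\xx\xx}^{1/2}}^2]$ and the variance term, exactly as stated, with the same implicit constant in $\lesssim$.

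There is no real obstacle here. The only point needing care is the $0\cdot\infty$ convention for the last term, which the pathwise argument above resolves.
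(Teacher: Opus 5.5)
Your proposal is correct and follows the paper's own (implicit) argument: the corollary is obtained by specializing Thm.~\ref{theorem:LoRA_excessrisk_underdetermined}, where $\sigma_i(\bDelta^\star)=0$ for $i>r$ makes the rank-constraint terms vanish. Your pathwise treatment of the last term is a sound extra precaution; in the paper's proof that term arises as $\sigma_{r+1}^2(\bDelta^\star\empcov_{\xx\xx}^{1/2})\,\lambda_{\max}(\CovMat_{\xx\xx}\empcov_{\xx\xx}^\dagger)$, which is identically zero when $\rank(\bDelta^\star)\leq r$.
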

Overall, our results show that when $\bDelta^\star$ is effectively low rank, \emph{i.e.}, rapidly decaying spectrum, LoRA can achieve lower generalization error than FFT. This advantage is pronounced in the strong-noise regime, where the excess risk is dominated by variance, amplifying the benefits of variance reduction.

\begin{figure}[t]
    \centering
    \begin{subfigure}[t]{0.5\linewidth}
        \centering
        \includegraphics[width=\linewidth]{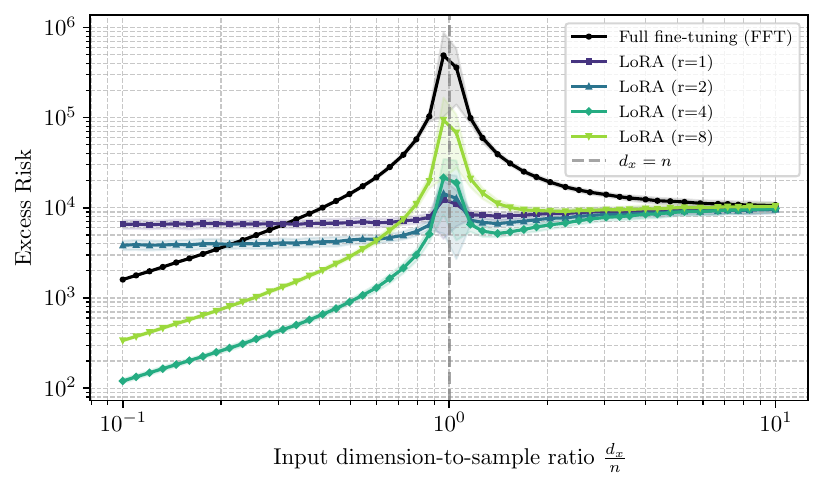}
        \caption{Excess risk vs. the dimension-to-sample ratio}
        \label{fig:samples_sweep}
    \end{subfigure}%
    \begin{subfigure}[t]{0.5\linewidth}
        \centering
        \includegraphics[width=\linewidth]{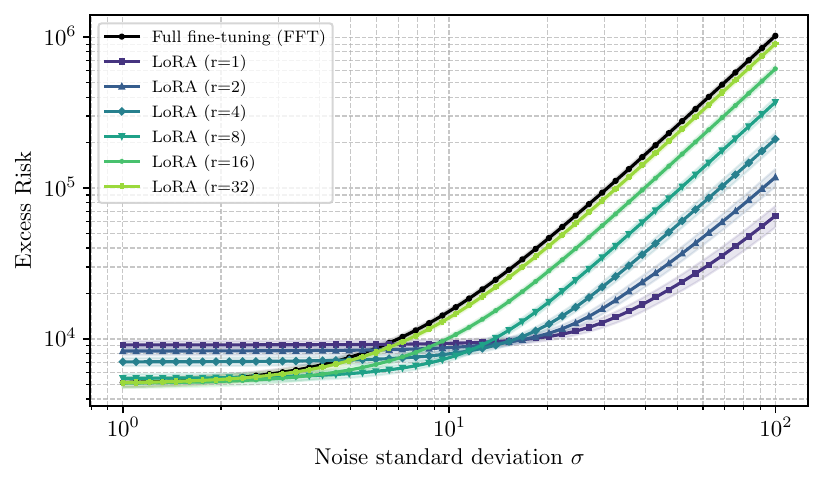}
        \caption{Excess risk vs. noise level}
        \label{fig:noise_sweep}
    \end{subfigure}
    \caption{{\bf Linear regression experiments.}
    Panel~\protect\subref{fig:samples_sweep} presents the excess risk of FFT and LoRA under varying sample size $n$ (decreasing from left to right), fixed dimensions $d_x = d_y = 100$, noise magnitude $\sigma=12$, and true task-shift rank $\mathrm{rank}(\bDelta^\star) = 4$.
    Panel~\protect\subref{fig:noise_sweep} plots the excess risk as a function of noise level $\sigma\in[1,100]$ with $d_x=d_y=100$, $n=50$, and $\rank(\bDelta^\star)=10$.
    Results are averaged over $100$ random seeds; shaded regions indicate $\pm1$ standard deviation.}
    \label{fig:lora_sweeps}
\end{figure}

\section{Experiments}\label{sec:experiments}
In this section, we empirically validate our theory. We first examine a controlled linear regression setting to isolate the effects of rank, sample size, and noise, and then conduct large-scale fine-tuning experiments with LLMs to determine whether similar behaviors arise in real-world fine-tuning.

{\bf Linear regression experiment.}
For this experiment, we use isotropic Gaussian feature vectors, $\xx \sim \mathcal{N}(0,\II_{d_x})$, where the labels are generated as in \eqref{eq:data generation model} with $\vecnoise \sim \mathcal{N}(0,\sigma^2_{\vecnoise} \II_{d_y})$. The underlying linear map is created as $\AA^\star = \initmodel + \bDelta^\star$, where the pre-trained model $\initmodel$ has i.i.d.\ standard Gaussian entries, and $\bDelta^\star = \frac{1}{\sqrt{r^\star}} \UU \VV $ is a low-rank adaptation constructed by sampling $\UU \in \mathbb{R}^{d_y \times r^\star}$ and $\VV \in \mathbb{R}^{r^\star \times d_x}$ with i.i.d.\ normal Gaussian entries; the scaling ensures bounded perturbation magnitude. We use the excess risk $\smash{\risk(\empsol) }$ from Sec.~\ref{subsection:risk definition} to evaluate the performance of the obtained estimators. Figure~\ref{fig:lora_sweeps} plots the excess risk of FFT and LoRA against sample size and the noise level.

The gray line at $d_x=n$ in Fig.~\ref{fig:samples_sweep} marks the interpolation threshold, separating the overdetermined and underdetermined regimes. In the overdetermined regime ($n>d_x$), smaller sample sizes lead to higher excess risk for all methods, consistent with our theoretical predictions. Moreover, when~$r$ matches or exceeds the rank of the true shift $r^{\star}$, LoRA outperforms FFT, with the best performance attained at $r=r^{\star}=4$. Using a larger rank than the ground-truth only slightly degrades performance, as it introduces unnecessary degrees of freedom. For smaller ranks, \emph{e.g.}, $r\in \{1,2\}$, the excess risk remains almost constant, reflecting the dominance of the bias induced by the rank constraint. Since in this case, the spectral tail of~$ \bDelta^\star $ is significant, and as predicted by our theory, LoRA performs poorly.

In the underdetermined regime, most configurations exhibit a decrease in excess risk as the number of training samples gets smaller. As explained above (see discussion below Prop.~\ref{proposition:FFT excess risk gaussian underdetermined}), this behavior arises because the estimators are constructed from the observed samples, while each additional sample effectively introduces a new noisy dimension into the estimation. This affects both LoRA and FFT, where the effect's strength on LoRA depends on the rank. Specifically, just after the interpolation threshold $d_x=n$, the variance term dominates the excess risk, and decreasing $n$ reduces the risk until the bias term becomes dominant and the curves stabilize.

Figure~\ref{fig:noise_sweep} shows the effect of noise magnitude in the underdetermined regime. When the noise level is low, FFT performs best and is comparable to high-rank LoRA. As the noise increases, FFT degrades rapidly and becomes much less robust than low-rank LoRA. This behavior is expected, since updating many unnecessary directions amplifies noise, whereas LoRA reduces this effect.

\begin{figure}[t]
    \centering
    \begin{subfigure}[t]{0.5\linewidth}
        \centering
        \includegraphics[width=\linewidth]{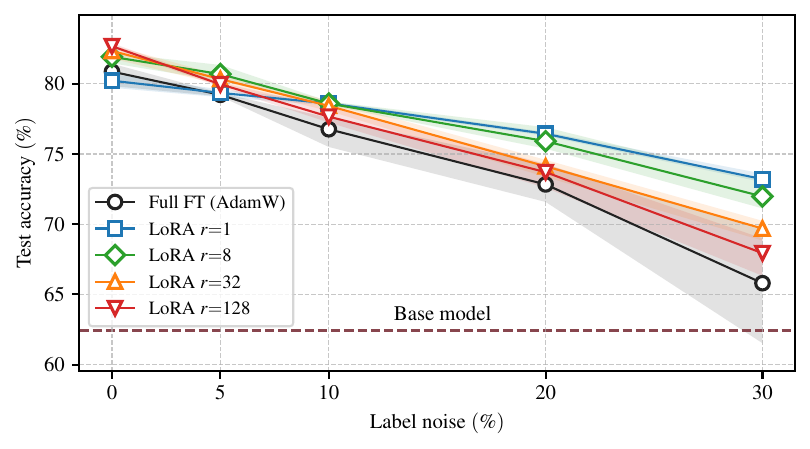}
        \caption{Qwen$2.5$-$0.5$B on BoolQ}
        \label{fig:Boolq_noise_sweep_0.5B_main}
    \end{subfigure}%
    \begin{subfigure}[t]{0.5\linewidth}
        \centering
        \includegraphics[width=\linewidth]{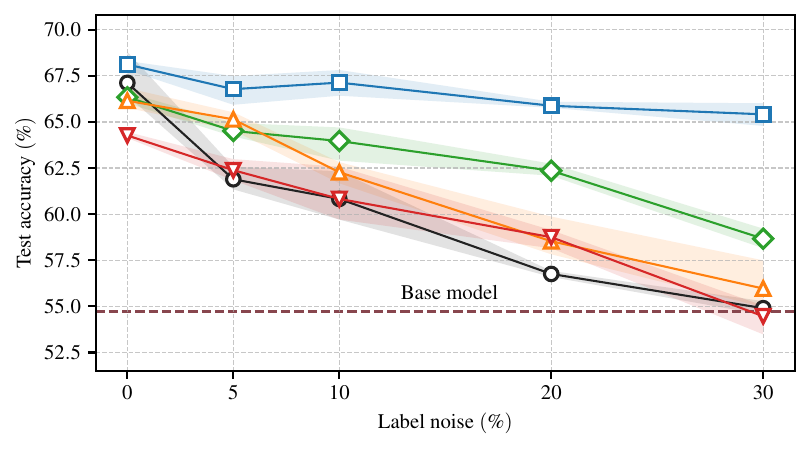}
        \caption{Qwen$2.5$-$0.5$B on CommonsenseQA}
        \label{fig:commonsense_noise_sweep_0.5B_main}
    \end{subfigure}
    \caption{{\bf Effect of label noise in LLMs fine-tuning.}
    We fine-tuned Qwen$2.5$ models using LoRA with various ranks across different levels of label noise. For each configuration, we trained $3$ times using random seeds. Panels~\protect\subref{fig:Boolq_noise_sweep_0.5B_main} and~\protect\subref{fig:commonsense_noise_sweep_0.5B_main} show the mean and the range of the results for the $0.5$B model fine-tuned on BoolQ and CommonsenseQA, respectively. Here, in the strong-noise regime, LoRA outperforms FFT as predicted by our theory.
    For other model sizes, see App.~\ref{subsection:app_additional_llm_experiments}.}
    \label{fig:LLM_noise_sweeps_0.5B_main}
\end{figure}

\paragraph{LLM fine-tuning setup.}
We next examine whether similar qualitative trends arise in practical LLM fine-tuning. We fine-tune Qwen$2.5$ models~\cite{qwen2025qwen25technicalreport} ($0.5$B, $1.5$B, and $3$B) using FFT and LoRA with various ranks across different levels of label noise/sample sizes. For each configuration, we run $3$ random seeds on BoolQ~\cite{clark2019boolq} and CommonsenseQA~\cite{talmor2019commonsenseqa} datasets. 

Figure~\ref{fig:LLM_noise_sweeps_0.5B_main} depicts the test accuracy of the trained $0.5$B model under increasing percentage of label noise. In the weak noise regime, no consistent ordering across ranks is observed. Here, the bias–variance tradeoff suggests that an intermediate rank is optimal. In contrast, in the strong-noise regime, performance decreases monotonically with increasing rank. This trend is consistent with the variance-reduction mechanism predicted by our theory: when noise dominates, restricting the number of trainable directions can improve generalization (see discussion below Corollary~\ref{cor:lora_excessrisk_overdetermined}). 

Finally, Fig.~\ref{fig:LLM_samples_sweeps_1.5B_main} shows the test accuracy of the trained $1.5$B model on varying training set sizes. To estimate the spectral structure of the ground-truth adaptation, we train a high-performing reference model using additional training samples and model averaging across multiple runs; details are provided in App.~\ref{app:Experiments}. We then compute the per-layer normalized cumulative spectral energy of the corresponding weight difference~$\bDelta^\star$, given by $ s(r) = \sum_{i=1}^r \sigma^2_i(\WW)/\fnorm{\WW}^2 $. The lower panels visualize the distribution of this quantity across layers. For CommonsenseQA, the reference update exhibits a rapidly decaying spectrum, suggesting that an optimal adaptation is well captured by a low-rank update. Correspondingly, LoRA significantly outperforms FFT, consistent with our theoretical prediction that low-rank adaptations favor LoRA. In contrast, for BoolQ, the spectral tail is heavier, suggesting that the underlying adaptation is less well approximated by a low-rank structure. In this case, LoRA provides no significant advantage over FFT, aligning with our theoretical expectations.

\section{Related work}\label{sec:related work}
The effectiveness of LoRA and its comparison with FFT have been the focus of several recent papers. On the empirical side, \citet{shuttleworth2024lora} examined the spectral properties of weight matrices learned by LoRA and FFT. They found that these properties are very different, and LoRA training creates new high-ranking singular vectors called \textit{intruder dimensions}. Experiments by \citet{biderman2024lora} showed that LoRA often underperforms FFT on tasks such as coding, math, and instruction fine-tuning, but it suffers less from catastrophic forgetting and performs better than common regularization methods for reducing forgetting. Similarly, \citet{ivison2023camels} found that QLoRA \citep{dettmers2023qlora} falls short of FFT on long-form generation tasks. Additionally, \citet{zhuo2024astraios} empirically studied fine-tuning across various model sizes and methods and found that FFT typically achieves the best performance, while LoRA offers the best balance between performance and efficiency. 

On the theoretical side, \citet{zeng2023expressive} examined the expressive power of LoRA, focusing on its ability to represent smaller target models. They considered fully connected and transformer architectures and showed that, with sufficiently large rank, LoRA can adapt to these small models. Likewise, \citet{zhang2025lora} demonstrated that LoRA adapters align with specific singular subspaces of the one-step full fine-tuning gradient under GD, leading to the LoRA-One algorithm that achieves linear convergence and better handles ill-conditioning through preconditioners. Moreover, \citet{kim2025lora} showed that LoRA training either converges to a low-rank, small-magnitude global minimum or results in a high-rank, large-magnitude solution, with zero initialization and weight decay biasing toward the desirable low-rank outcomes. Finally, \citet{kratsios2025sharp} derived sharp generalization bounds for asymmetric randomized LoRA, showing a sample complexity of $\tilde{\mathcal{O}}({\sqrt{r/N}})$ for rank~$r$ and $N$ samples, along with matching lower bounds that highlight its efficiency.

Our excess risk analysis for LoRA fine-tuning is closely related to the literature on multivariate linear regression with rank constraints. However, our setting differs from the existing literature in three main ways. First, we study generalization through the population risk, \emph{i.e.}, performance on unseen test data, rather than the empirical risk on the training set used by prior work. This makes our setting substantially more challenging. Moreover, bounds on empirical risk do not directly translate into guarantees on test error and, therefore, are insufficient for assessing generalization. Second, our results do not require any rank assumption on the underlying linear map that generates the data. In contrast, much of the existing work assumes that the true regression operator is low-rank. Third, our analysis applies to both the underdetermined and overdetermined regimes. For example, \citet{giraud2021introduction} studied rank-constrained regression in the underdetermined regime and derived an upper bound on the excess risk over the \emph{training set}. Similarly, \citet{rigollet2023high} analyzed estimators based on singular value thresholding and least squares with rank penalization in the overdetermined low-rank regime, establishing training error bounds of the form ${r \max\{d_x,d_y}\}/{n}$.

\begin{figure}[t]%
    \begin{subfigure}[t]{0.5\linewidth}%
    \!\!\!
        \includegraphics[width=\linewidth]{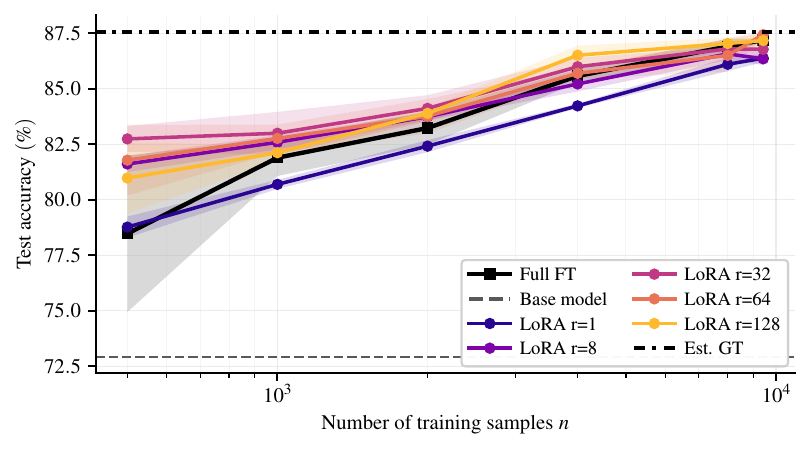}\\%
        \includegraphics[width=\linewidth]{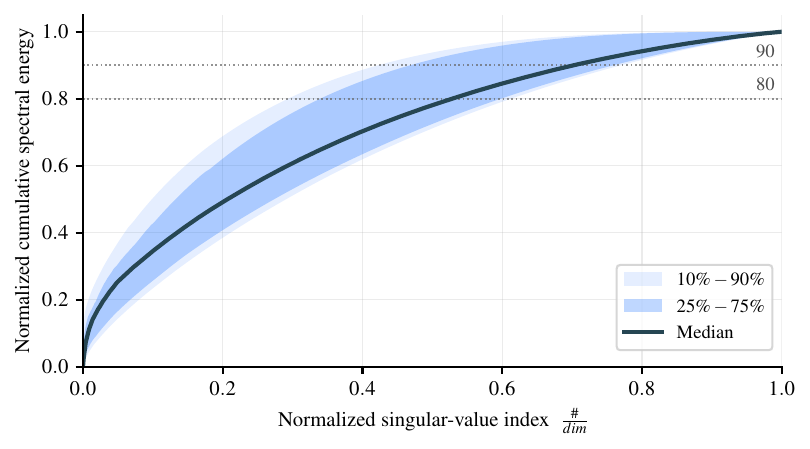}
        \caption{Qwen$2.5$-$1.5$B on BoolQ}
        \label{fig:Boolq_samples_sweep_1.5B_main}
    \end{subfigure}%
    \begin{subfigure}[t]{0.5\linewidth}%
        \includegraphics[width=\linewidth]{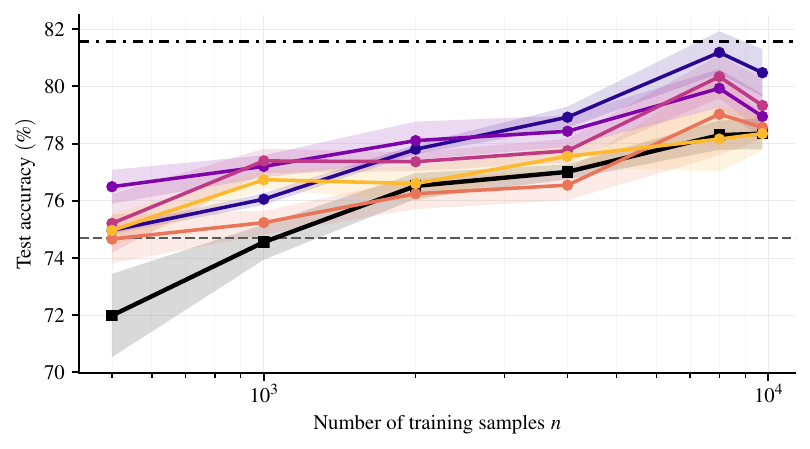}\\%
        \includegraphics[width=\linewidth]{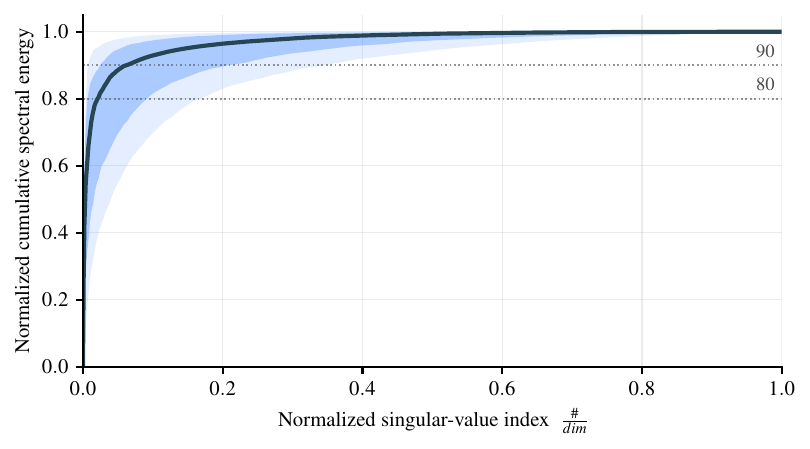}
        \caption{Qwen$2.5$-$1.5$B on CommonsenseQA}
        \label{fig:commonsense_samples_sweep_1.5B_main}
    \end{subfigure}
    \caption{{\bf Effect of sample size in LLMs fine-tuning.}
    We fine-tuned Qwen$2.5$ models using LoRA with various ranks across different sample sizes. For each configuration, we trained $3$ times using random seeds. The top of Panels~\protect\subref{fig:Boolq_samples_sweep_1.5B_main} and~\protect\subref{fig:commonsense_samples_sweep_1.5B_main} shows the mean and the range of the results for the $1.5$B model fine-tuned on BoolQ and CommonsenseQA, respectively. For each task, we estimated $\bDelta^\star$ and computed its singular values per layer.
    The bottom part shows the statistics of the spectral cumulative energy of $\bDelta^\star$ across layers. When the effective rank of the adaptation is low (CommonsenseQA), LoRA outperforms FFT, as predicted by our theory.
    For other model sizes, see App.~\ref{subsection:app_additional_llm_experiments}.}
    \label{fig:LLM_samples_sweeps_1.5B_main}
\end{figure}

\section{Conclusion}\label{sec:conclusion}
In this paper, we provided a rigorous theoretical analysis of the generalization behavior of LoRA compared to FFT by deriving excess risk bounds for linear models in both overdetermined and underdetermined regimes. Our analysis identifies the primary advantage of LoRA as a significant reduction in variance. This effect is particularly prominent when the task shift $\bDelta^\star$ is effectively low rank and in the strong-noise regime. Fine-tuning experiments on Qwen2.5 models across reasoning tasks reveal patterns consistent with these insights.

\textbf{Limitations and future work.}
Our theoretical results apply to the simple setting of linear models with the MSE loss. A natural way to extend our analysis to nonlinear models is to use the Neural Tangent Kernel (NTK) regime, in which nonlinear models behave as linear. However, showing that LLMs also exhibit an NTK regime is beyond the scope of this paper. Additionally, to fully reduce the comparison between LoRA and FFT from nonlinear models to linear regression, as in our setting, one needs the linearization to be the same for both methods. This is nontrivial, and we leave it for future work. Finally, the qualitative empirical observations from our LLM experiments are only suggestive of the patterns identified in our linear theory; further empirical investigation is required.

\section*{Acknowledgment}
This work was supported by the Helmholtz Association's Initiative and Networking Fund on the HAICORE@FZJ partition. We also gratefully acknowledge funding from the European Research Council (ERC) under the Horizon Europe Framework Programme (HORIZON) for proposal number 101170430 CollectiveMinds. Views and opinions expressed are however those of the authors only and do not necessarily reflect those of the European Union or the European Research Council. Neither the European Union nor the granting authority can be held responsible for them. The authors thank Nir Weinberger for the helpful discussion during the preparation of this work. 

\bibliographystyle{plainnat}
\bibliography{references}


\clearpage
\appendix

\begin{center}
{\Huge\bfseries Appendix}
\end{center}
\renewcommand{\thesection}{\Roman{section}}

\section{Summary of Notation}

\begin{table}[h]
\centering
\caption{Summary of notation used in the paper}
\renewcommand{\arraystretch}{1.45}
\setlength{\arrayrulewidth}{.5mm} 
\setlength{\tabcolsep}{8pt} 
\resizebox{\textwidth}{!}{
\begin{tabular}{|c|c|c|}
\hline
\bfseries{Symbol} & \bfseries{Definition} & \bfseries{Mathematical definition / short note} \\ \hline
$\xx_i, \xx$ & Fine-tuning input vector / test data & $\xx_i, \xx\in\R^{d_x}$ \\ \hline
$\yy_i, \yy$ & Fine-tuning output label / test label & $\yy_i, \yy\in\R^{d_y}$, $\yy_i = \AA^\star  \xx_i + \vecnoise_i$ \\ \hline
$\XX, \YY$ & Pre-training data matrices & $\XX=[\xx_1,\dots,\xx_n],\ \YY=[\yy_1,\dots,\yy_n]$ \\ \hline
$\vecnoise,\ \matnoise$ & Noise vector / matrix & $\E[\vecnoise]=0,\ \E[\vecnoise\vecnoise^\top]=\CovMat_{\vecnoise\vecnoise}$ \\ \hline
$\CovMat_{\xx\xx},\ \CovMat_{\yy\xx}$ & Population input covariances / cross-covariances & $\CovMat_{\xx\xx}=\E[\xx \xx^\top],\ \CovMat_{\yy\xx}=\E[\yy \xx^\top]$ \\ \hline
$\empcov_{\xx\xx},\ \empcov_{\yy\xx}$ & Empirical covariances / cross-covariances & $\empcov_{\xx\xx}=\tfrac{1}{n}\XX\XX^\top,\ \empcov_{\yy\xx}=\tfrac{1}{n}\YY\XX^\top$ \\ \hline
$d_x, d_y$ & Input and output dimensions & positive integers (dimensions) \\ \hline
$\mathcal{D}$ & Fine-tuning dataset & collection of $(\xx_i,\yy_i)$ \\ \hline
$\AA^\star $ & True map for FT dataset & $\yy = \AA^\star  \xx + \vecnoise$ (true map) \\ \hline
$\initmodel$ & Pre-trained model (nonzero) & fixed initializer used before FT \\ \hline
$\gdsol$ & Solution found by Full Fine-tuning & $\gdsol=\initmodel + (\YY-\initmodel \XX)\XX^{\dagger}$ \\ \hline
$\lorasol$ & Solution found by LoRA & $\lorasol \! = \!\initmodel \!+\!\trunk{\widehat\bDelta_{\operatorname{FFT}}\empcov_{\xx\xx}^{\frac{1}{2}}\!}\!\big(\empcov_{\xx\xx}^{\frac{1}{2}}\big)^{\dagger}\!\!\!$ \\ \hline
$\bDelta^\star$ & True fine-tuning update & $\bDelta^\star :=  \AA^\star - \initmodel $ \\ \hline
$ \widehat \bDelta_{\operatorname{FFT}},\widehat \bDelta_{\operatorname{LoRA}} \in\R^{d_y\times d_x}$ & FFT / LoRA update matrices &  $\rank(\widehat \bDelta_{\operatorname{LoRA}}) = r$ \\ \hline
$r$ & LoRA (adaptation) rank & chosen rank for low-rank adaptation \\ \hline
$\XX^{\dagger}$ & Moore–Penrose pseudoinverse of $\XX$ & $\XX^{\dagger}=(\XX^\top \XX)^{-1}\XX^\top$ when full column rank \\ \hline
$\mathcal{R}(\AA)$ & Population risk of estimator $\AA$ & $\mathcal{R}(\AA)=\E\big[\|\yy-\AA\xx\|_2^2\big]$ \\ \hline
$\sigma_i(\MM)$ & $i$-th singular value of $\MM$ & ordered $\sigma_1\ge\cdots\ge\sigma_{\min}$ \\ \hline
$\fnorm{\,\cdot\,},\ \norm{\,\cdot\,}$ & Frobenius and operator norms & $\fnorm{\MM}=\sqrt{\trace(\MM^\top\MM)},\,\, \norm{\MM} = \sigma_{\max}(\MM)$ \\ \hline
$\mathrm{Tr}(\,\cdot\,)$ & Trace operator & sum of diagonal entries \\ \hline
$\operatorname{rank}(\,\cdot\,)$ & Matrix rank & dimension of column space \\ \hline
$\lambda_{\min}(\,\cdot\,),\ \lambda_{\max}(\,\cdot\,)$ & Smallest / largest eigenvalue & for symmetric PSD matrices \\ \hline
\end{tabular}
}
\label{tab:notation_clean}
\end{table}

\clearpage
\section{Useful lemmas}\label{app:Useful Lemmas}
\begin{lemma}[Eckart-Young Theorem] \label{lemma:young_rank}
For any matrix $\MM \in \R^{m\times n}$ we have
\begin{equation}
    \trunk{\MM} = \argmin_{\rank(\BB)\leq r} \norm{\BB-\MM} = \argmin_{\rank(\BB)\leq r} \fnorm{\BB-\MM},
\end{equation}
where $\trunk{\cdot}$ is an operator that truncates the singular values of a matrix. Furthermore,
\begin{equation}
    \norm{\trunk{\MM} - \MM} = \sigma_{r+1}(\MM), \qquad \fnorm{\trunk{\MM} - \MM}^2 = \sum_{i=r+1}^{\min\{m,n\}} \sigma_{i}^2(\MM).
\end{equation}
\end{lemma}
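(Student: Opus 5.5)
The plan is to reduce everything to the singular value decomposition. Write $\MM = \UU \mathbf{S} \VV^\top$ with singular values $\sigma_1 \geq \dots \geq \sigma_p \geq 0$, where $p = \min\{m,n\}$, and set $\trunk{\MM} = \sum_{i\le r} \sigma_i \mathbf{u}_i \mathbf{v}_i^\top$. The two error identities then follow at once from orthogonal invariance of both norms. Indeed, $\MM - \trunk{\MM} = \sum_{i>r}\sigma_i \mathbf{u}_i\mathbf{v}_i^\top$, so its operator norm is $\sigma_{r+1}$ and its squared Frobenius norm is $\sum_{i=r+1}^{p}\sigma_i^2$. What remains is to show that no rank-$\le r$ matrix does better in either norm.

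For the spectral norm, I would use the standard dimension-counting argument. Take any $\BB$ with $\rank(\BB)\le r$. Then $\ker \BB$ has dimension at least $n-r$, so it meets the $(r+1)$-dimensional span $\mathcal{V}$ of $\mathbf{v}_1,\dots,\mathbf{v}_{r+1}$ in some unit vector $\zz$. It follows that
\[
\norm{\MM-\BB}^2 \ge \|(\MM-\BB)\zz\|^2 = \|\MM\zz\|^2 = \sum_{i\le r+1}\sigma_i^2 (\mathbf{v}_i^\top\zz)^2 \ge \sigma_{r+1}^2 .
\]
Hence $\trunk{\MM}$ attains the minimum.

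For the Frobenius norm, I would derive Weyl's inequality for singular values from the same argument, namely $\sigma_{i+j-1}(\mathbf{E}+\BB)\le \sigma_i(\mathbf{E})+\sigma_j(\BB)$. Applying it with $\mathbf{E}=\MM-\BB$ and $j=r+1$, where $\sigma_{r+1}(\BB)=0$, gives $\sigma_{i}(\MM-\BB)\ge \sigma_{i+r}(\MM)$. Summing the squares over $i$ then yields $\fnorm{\MM-\BB}^2 \ge \sum_{i>r}\sigma_i^2(\MM)$.

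The main obstacle is proving that Weyl inequality cleanly. I would do it through the Courant--Fischer characterization, $\sigma_k(\MM)=\min_{\dim \mathcal{W}\ge n-k+1}\max_{\zz\in\mathcal{W},\|\zz\|=1}\|\MM\zz\|$. Intersecting the optimal subspaces for $\mathbf{E}$ and $\BB$ gives a subspace of dimension at least $n-(i+j-1)+1$ on which $\|(\mathbf{E}+\BB)\zz\|\le \sigma_i(\mathbf{E})+\sigma_j(\BB)$.

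Finally, a remark on reading the statement. The equalities of argmins should be understood as saying that $\trunk{\MM}$ is a minimizer of both problems. For the Frobenius norm it is the unique minimizer when $\sigma_r>\sigma_{r+1}$, which matches the footnote's uniqueness condition. For the spectral norm the minimizer is generally not unique, so that equality must be read as membership.
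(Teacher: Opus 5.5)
The paper does not prove this lemma. It lists it among the useful lemmas as the classical Eckart--Young(--Mirsky) theorem and uses it as a black box. Your argument is a correct, self-contained proof along standard lines: kernel-dimension counting for the spectral norm, and Mirsky's route through the Weyl bound $\sigma_{i+r}(\MM)\le\sigma_i(\MM-\BB)$ for the Frobenius norm.

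Your reading of the $\argmin$ equalities as membership is the right one. Taken literally as set equalities, the statement is false for the spectral norm whenever $r\ge 1$ and $\sigma_{r+1}(\MM)>0$, because $\sum_{i\le r}(\sigma_i+\delta_i)\mathbf{u}_i\mathbf{v}_i^\top$ with $|\delta_i|\le\sigma_{r+1}$ is also optimal. Your route has a side benefit: the pointwise bound $\sigma_i(\MM-\BB)\ge\sigma_{i+r}(\MM)$ gives optimality of $\trunk{\MM}$ in every unitarily invariant norm at once.

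One caveat concerns uniqueness. You assert that the Frobenius minimizer is unique when $\sigma_r>\sigma_{r+1}$, but the Weyl argument only certifies the optimal value, not the set of minimizers. The paper does rely on this uniqueness: in the proof of Lemma~\ref{lemma:low_rank_sol}, $\ZZ^\star$ is declared the unique solution of the reduced problem. If your lemma is to support that step, you need an extra argument.

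The cleanest is by projection. For $\rank(\BB)\le r$, let $\PP$ be the orthogonal projector onto an $r$-dimensional subspace containing the column space of $\BB$. Since $\PP\BB=\BB$, Pythagoras gives $\fnorm{\MM-\BB}^2=\fnorm{\MM}^2-\trace(\PP\MM\MM^\top)+\fnorm{\PP\MM-\BB}^2$. Ky Fan's inequality gives $\trace(\PP\MM\MM^\top)\le\sum_{i\le r}\sigma_i^2(\MM)$, and under the gap equality holds only if $\image(\PP)=\operatorname{span}\{\mathbf{u}_1,\dots,\mathbf{u}_r\}$. Together these force every minimizer to equal $\PP\MM=\trunk{\MM}$, so you get the value and uniqueness at once.
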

\begin{lemma}\label{lemma:perturbation_bound}
    For any matrix $\AA \in \R^{m\times n}$ and a perturbation matrix $\bDelta \in \R^{m\times n}$ we have   
    \begin{equation}
        \norm{\trunk{\AA+\bDelta}-\AA} \leq \sigma_{r+1}(\AA) + 2\norm{\bDelta}.
    \end{equation}
\end{lemma}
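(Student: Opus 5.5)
The plan is a triangle inequality followed by Weyl's inequality and the Eckart--Young theorem (Lemma~\ref{lemma:young_rank}). Write $\BB := \trunk{\AA+\bDelta}$. Then
\begin{equation*}
    \norm{\BB-\AA} \leq \norm{\BB-(\AA+\bDelta)} + \norm{(\AA+\bDelta)-\AA} = \norm{\trunk{\AA+\bDelta}-(\AA+\bDelta)} + \norm{\bDelta}.
\end{equation*}
This splits the error into a truncation residual of the perturbed matrix plus the perturbation itself, which already accounts for one copy of $\norm{\bDelta}$.

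For the first term, the spectral-norm part of Lemma~\ref{lemma:young_rank} applied to $\MM = \AA+\bDelta$ gives
\begin{equation*}
    \norm{\trunk{\AA+\bDelta}-(\AA+\bDelta)} = \sigma_{r+1}(\AA+\bDelta).
\end{equation*}
It then remains to compare $\sigma_{r+1}(\AA+\bDelta)$ with $\sigma_{r+1}(\AA)$, and Weyl's inequality for singular values does exactly this:
\begin{equation*}
    |\sigma_i(\AA+\bDelta)-\sigma_i(\AA)| \leq \norm{\bDelta} \quad \text{for all } i.
\end{equation*}
If one prefers not to cite Weyl, it follows from the Eckart--Young characterization itself: $\sigma_{r+1}(\AA+\bDelta) \leq \norm{\AA+\bDelta - \trunk{\AA}} \leq \sigma_{r+1}(\AA) + \norm{\bDelta}$, because $\trunk{\AA}$ is a feasible rank-$r$ approximant of $\AA+\bDelta$. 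Combining the two displays gives $\sigma_{r+1}(\AA) + 2\norm{\bDelta}$, as claimed.

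The only step needing care is the comparison of $\sigma_{r+1}(\AA+\bDelta)$ with $\sigma_{r+1}(\AA)$. It is handled by the minimality argument above, so no step is genuinely hard. Edge cases need a convention: when $r \geq \min\{m,n\}$, set $\sigma_{r+1}=0$, and then both sides are trivially consistent since the truncation is the identity. Non-uniqueness of the truncation is harmless, since every minimizer attains the residual $\sigma_{r+1}$.
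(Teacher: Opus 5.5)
Your proposal is correct and is essentially the paper's proof. The paper also uses the triangle inequality to split off one copy of $\norm{\bDelta}$, then bounds the truncation residual of $\AA+\bDelta$ by the Eckart--Young minimum with $\trunk{\AA}$ as a feasible rank-$r$ competitor, which is exactly your Weyl-free variant; it goes straight to $\norm{\trunk{\AA}-(\AA+\bDelta)}$ without writing $\sigma_{r+1}(\AA+\bDelta)$ as an intermediate step.
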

\begin{proof}
    By adding and subtracting $\bDelta$ we have
    \begin{equation}
        \norm{\trunk{\AA+\bDelta}-(\AA+\bDelta)+\bDelta} \leq \norm{\trunk{\AA+\bDelta}-(\AA+\bDelta)} + \norm{\bDelta}.
    \end{equation}
    Now we bound the first term above as
    \begin{align}
        \norm{\trunk{\AA+\bDelta}-(\AA+\bDelta)}
        & = \min_{\rank(\BB) \leq r} \norm{\BB-(\AA+\bDelta)}\nonumber\\
        & \leq \norm{\trunk{\AA} -(\AA+\bDelta)}\nonumber\\
        & \leq \norm{\trunk{\AA}-\AA} + \norm{\bDelta}\nonumber\\
        & \leq \sigma_{r+1}(\AA) + \norm{\bDelta},
    \end{align}
    where in the first step we used Eckart-Young theorem (Lemma~\ref{lemma:young_rank}). Putting it together gives the result. 
\end{proof}
\begin{lemma}[Gaussian Poincaré inequality, {\citet[Thm.~3.20]{boucheron2013concentration}}]\label{lemma:poincare}
    Let $X = (X_1,...,X_n)$ be a vector of i.i.d.\ standard Gaussian random variables (i.e., $X$ is a Gaussian vector with zero mean vector and identity covariance matrix), and $f:\mathbb{R}^d\to\mathbb{R}$ be $1$-Lipschitz. Then 
    \begin{equation}
        \Var(f(X)) \leq 1.
    \end{equation}
\end{lemma}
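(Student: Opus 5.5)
We plan to prove the stronger statement $\Var(f(X)) \leq \E\big[\|\nabla f(X)\|^2\big]$ for smooth $f$, and then deduce the lemma. For a $1$-Lipschitz $f$ one has $\|\nabla f\| \leq 1$ wherever the gradient exists, so the right-hand side is at most $1$. The proof has two ingredients: a smoothing reduction, and a Gaussian interpolation (covariance) identity.

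\emph{Step 1 (reduction to smooth $f$).} Let $f_\epsilon = f * \phi_\epsilon$, where $\phi_\epsilon$ is a standard mollifier. Then $f_\epsilon$ is $C^\infty$ and still $1$-Lipschitz, so $\|\nabla f_\epsilon\| \leq 1$ everywhere. Moreover $f_\epsilon \to f$ uniformly, since $|f_\epsilon - f| \leq \epsilon$. Also $|f(x)| \leq |f(0)| + \|x\|$, and Gaussian vectors have all moments. Hence dominated convergence gives $\Var(f_\epsilon(X)) \to \Var(f(X))$. It therefore suffices to treat smooth $f$ with bounded gradient; such $f$ has at most linear growth, which justifies all integrations by parts below.

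\emph{Step 2 (interpolation identity).} Let $Y$ be an independent copy of $X$. For $t \in [0,1]$ set $Z_t = tX + \sqrt{1-t^2}\,Y$, so that $Z_t \sim \mathcal{N}(\zeroVec, \II)$, $Z_1 = X$, and $Z_0 = Y$ is independent of $X$. Define $\psi(t) = \E[f(X) f(Z_t)]$. Then $\psi(1) - \psi(0) = \E[f(X)^2] - (\E f(X))^2 = \Var(f(X))$. Differentiating gives
\[
\psi'(t) = \E\Big[f(X)\, \nabla f(Z_t)^\top \Big(X - \tfrac{t}{\sqrt{1-t^2}}\, Y\Big)\Big].
\]
We then apply Gaussian integration by parts, $\E[X_i g(X)] = \E[\partial_i g(X)]$, separately in $X$ and in $Y$. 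The terms containing second derivatives of $f$ cancel, since $\partial_{x_i}$ produces a factor $t$ and $\partial_{y_i}$ produces a factor $\sqrt{1-t^2}$ that is then multiplied by $t/\sqrt{1-t^2}$. What remains is
\[
\psi'(t) = \E\big[\nabla f(X)^\top \nabla f(Z_t)\big].
\]
Since $\psi$ is continuously differentiable on $(0,1)$ and continuous on $[0,1]$, integrating yields $\Var(f(X)) = \int_0^1 \E\big[\nabla f(X)^\top \nabla f(Z_t)\big]\,dt$.

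\emph{Step 3 (conclusion).} By Cauchy--Schwarz, and because $Z_t$ has the same law as $X$,
\[
\E\big[\nabla f(X)^\top \nabla f(Z_t)\big] \leq \sqrt{\E\|\nabla f(X)\|^2 \,\E\|\nabla f(Z_t)\|^2} = \E\|\nabla f(X)\|^2 \leq 1.
\]
Integrating over $t \in [0,1]$ and using Step 1 gives $\Var(f(X)) \leq 1$.

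The main obstacle is Step 2: carrying out the derivative of $\psi$ and the cancellation carefully, and justifying differentiation under the expectation. The singular factor $t/\sqrt{1-t^2}$ near $t = 1$ requires care; we handle it by integrating over $[0, 1-\delta]$ and letting $\delta \to 0$, using continuity of $\psi$. If this becomes messy, the fallback is to prove the one-dimensional case via the Ornstein--Uhlenbeck semigroup (or a CLT argument from the Rademacher Efron--Stein bound) and then tensorize with the Efron--Stein inequality, which gives $\Var f \leq \sum_i \E[(\partial_i f)^2] = \E\|\nabla f\|^2$.
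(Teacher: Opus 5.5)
Your proof is correct, but it takes a different route from the paper's. The paper does not prove this lemma; it quotes Theorem~3.20 of Boucheron, Lugosi and Massart. There, $\Var(f(X)) \le \E\|\nabla f(X)\|^2$ is proved for continuously differentiable $f$ in two stages. First, tensorization through the Efron--Stein inequality reduces to one coordinate. Then a central-limit argument applies Efron--Stein to $f$ evaluated at a normalized Rademacher sum, uses a Taylor expansion to get the bound up to $O(n^{-1/2})$, and passes to the Gaussian limit. The Lipschitz form stated in the paper then follows by smoothing, as in your Step~1. This is essentially your stated fallback.

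Your main route instead proves the exact identity $\Var(f(X)) = \int_0^1 \E[\nabla f(X)^\top \nabla f(Z_t)]\,dt$ by Gaussian interpolation and integration by parts. It is dimension-free (no tensorization) and self-contained, and it gives more than a variance bound, since the same representation yields covariance inequalities. The price is that you need Gaussian integration by parts and care at $t=1$, whereas the Efron--Stein/CLT proof needs only a discrete variance inequality and a limit theorem.

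One small technical point. Your cancellation in Step~2 involves $\partial_{ii} f(Z_t)$, and ``smooth with bounded gradient'' alone does not make these terms Gaussian-integrable. It is fine for the mollified $f_\epsilon$, since $\partial_{ij} f_\epsilon = (\partial_i f) * \partial_j \phi_\epsilon$ is bounded. You can also avoid second derivatives altogether. Set $W_t = \sqrt{1-t^2}\,X - tY$, which is independent of $Z_t$, and note that $X - \tfrac{t}{\sqrt{1-t^2}}Y = W_t/\sqrt{1-t^2}$ and $X = tZ_t + \sqrt{1-t^2}\,W_t$. Then integrate by parts in $W_t$ only.

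Finally, for the Lipschitz statement Cauchy--Schwarz is unnecessary, because $|\nabla f(X)^\top \nabla f(Z_t)| \le 1$ pointwise.
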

\begin{lemma}\label{lemma:gaussian_matrix_bound}
    Let $ \AA \in \R^{m\times n} $ with i.i.d.\ entries $\mathcal{N}(0,1)$. Then
    \begin{equation}
        \E\sb{\norm{\AA}^2} \le 5\max\{m, n\}.
    \end{equation}
\end{lemma}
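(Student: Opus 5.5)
We need to prove $\E\|\AA\|_2^2 \le 5\max\{m,n\}$ for an $m\times n$ matrix $\AA$ with i.i.d.\ standard Gaussian entries. The plan is the standard mean-plus-variance route: $\E[\norm{\AA}^2] = (\E\norm{\AA})^2 + \Var(\norm{\AA})$. The variance is controlled by the Gaussian Poincar\'e inequality (Lemma~\ref{lemma:poincare}), and the mean by a Gordon/Slepian-type comparison.

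\emph{Step 1: variance.} View $\AA$ as a standard Gaussian vector in $\R^{mn}$ and set $f(\AA) = \norm{\AA}$. Since $|\norm{\AA}-\norm{\BB}| \le \norm{\AA-\BB} \le \fnorm{\AA-\BB}$, the map $f$ is $1$-Lipschitz with respect to the Euclidean norm on $\R^{mn}$. Lemma~\ref{lemma:poincare} then gives $\Var(\norm{\AA}) \le 1$.

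\emph{Step 2: mean.} We bound $\E\norm{\AA} \le \sqrt{m}+\sqrt{n}$, which is the main step. Write
\[
\norm{\AA} = \max_{\|u\|=1,\ \|v\|=1} u^\top \AA v,
\]
and define two Gaussian processes on $S^{m-1}\times S^{n-1}$: $X_{u,v} = u^\top \AA v$ and $Y_{u,v} = g^\top u + h^\top v$, with $g\sim\mathcal N(0,\II_m)$ and $h\sim\mathcal N(0,\II_n)$ independent. For unit vectors,
\[
\E(X_{u,v}-X_{u',v'})^2 = 2 - 2\langle u,u'\rangle\langle v,v'\rangle,
\qquad
\E(Y_{u,v}-Y_{u',v'})^2 = 4 - 2\langle u,u'\rangle - 2\langle v,v'\rangle.
\]
Their difference is $2(1-\langle u,u'\rangle)(1-\langle v,v'\rangle) \ge 0$, so the increments of $X$ are dominated by those of $Y$. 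The Sudakov--Fernique inequality then gives
\[
\E\max X \le \E\max Y = \E\|g\| + \E\|h\| \le \sqrt m + \sqrt n,
\]
where the last step uses Jensen's inequality. If we prefer to avoid Sudakov--Fernique, a fallback is an $\varepsilon$-net argument. That route gives a constant like $C\sqrt{\max\{m,n\}}$ with larger $C$, which would not reach the constant $5$. So the comparison inequality is the key tool, and invoking it correctly is the only nontrivial point.

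\emph{Step 3: combine.} Putting the two bounds together,
\[
\E\norm{\AA}^2 \le (\sqrt m+\sqrt n)^2 + 1 \le 4\max\{m,n\} + 1 \le 5\max\{m,n\},
\]
where the last inequality uses $\max\{m,n\}\ge 1$. This is exactly the claimed bound.
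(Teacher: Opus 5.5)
Your proposal is correct and follows the same route as the paper: bound $\E\sb{\norm{\AA}}$ by $\sqrt{m}+\sqrt{n}$, bound $\Var(\norm{\AA})$ by $1$ via the Gaussian Poincar\'e inequality, and combine using $(\sqrt{m}+\sqrt{n})^2 \le 4\max\{m,n\}$. The only difference is that you prove the mean bound yourself through the Sudakov--Fernique comparison, while the paper cites Gordon's theorem, whose standard proof is exactly that comparison argument.
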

\begin{proof}
We use two standard facts.
\paragraph{(i) Expectation bound.} A classical bound for Gaussian matrices states that
\begin{equation}
    \E\sb{\norm{\AA}} \le \sqrt{m} + \sqrt{n}.
\end{equation}
This can easily be obtained in various ways, \emph{e.g.,} using Gordon's Theorem \citep[Thm. 2.5]{gordon1985some}.

\paragraph{(ii) Variance bound via Gaussian Poincaré.} The map $\AA \mapsto\|\AA\|$ is $1$-Lipschitz with respect to the Frobenius (Euclidean) norm, since for all matrices $\AA,\BB \in \R{ m \times n} $ we have
\begin{equation}
    \big| \fnorm{\AA\|_2-\|\BB\|_2 \big| \leq \norm{\AA-\BB}\le\|\AA-\BB}.
\end{equation}
Hence, by the Gaussian Poincaré inequality in Lemma \ref{lemma:poincare},
\begin{equation}
    \Var\big(\| \AA \|\big) \leq 1.
\end{equation}
Combining these two facts and using $\E [\norm{\AA}^2] = (\E \| \AA \|)^2 + \Var( \| \AA \| )$, we obtain
\begin{equation}
    \E\sb{\norm{\AA}^2} \leq \left( \sqrt{m}+\sqrt{n} \right)^2 + 1.
\end{equation}
Finally, using the elementary inequality $(\sqrt{a}+\sqrt{b})^2 \le 4\max\{a,b\}$ for $a,b \geq 0$, we have
\begin{equation}
    \E\sb{\norm{\AA}^2} \le 4\max\{m, n \} + 1 \le 5\max\{m,n\},
\end{equation}
which proves the lemma.
\end{proof}

\begin{lemma}\label{lemma:general_risk}
    Consider the regression task defined in Sec.~\ref{sec:setting}, and let $\empsol$ be some estimator. Then
    \begin{equation}
        \risk\big(\empsol\big) - \risk \big( \AA^\star \big) = \fnorm{\left( \empsol -\AA^{\star}\right)\CovMat_{\xx \xx}^{\frac{1}{2}}}^2.
    \end{equation}
\end{lemma}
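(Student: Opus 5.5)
The plan is to expand the population risk directly, using the data model \eqref{eq:data generation model} for the test pair. For a fixed estimator $\empsol$ (independent of the test pair), write $\yy = \AA^\star \xx + \vecnoise$, where $\vecnoise$ is independent of $\xx$ and has zero mean. Then
\begin{equation*}
    \empsol \xx - \yy = \big(\empsol - \AA^\star\big)\xx - \vecnoise .
\end{equation*}
Expanding the squared norm gives three terms: $\|(\empsol-\AA^\star)\xx\|^2$, the cross term $-2\,\vecnoise^\top(\empsol-\AA^\star)\xx$, and $\|\vecnoise\|^2$.

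Next I will take expectations over $(\xx,\yy)$ conditionally on $\empsol$.
\begin{itemize}
    \item The cross term vanishes by independence of $\vecnoise$ and $\xx$ together with $\E[\vecnoise]=\zeroVec$.
    \item The noise term contributes $\trace(\CovMat_{\vecnoise\vecnoise})$.
    \item For the first term, I use the trace identity $\|\MM\xx\|^2 = \trace(\MM\xx\xx^\top\MM^\top)$ with $\MM = \empsol - \AA^\star$, and linearity of expectation. This gives $\trace(\MM\CovMat_{\xx\xx}\MM^\top) = \fnorm{\MM\CovMat_{\xx\xx}^{\frac12}}^2$.
\end{itemize}

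Finally, apply the same computation to $\empsol = \AA^\star$, for which $\MM = \zeroVec$. This gives $\risk(\AA^\star) = \trace(\CovMat_{\vecnoise\vecnoise})$. Subtracting yields the claim.

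The only delicate point is the conditioning: the test pair must be independent of the training set, so that $\empsol$ can be treated as a constant. This holds by the definition of $\risk$ in Sec.~\ref{subsection:risk definition}, so I expect no real obstacle. One more detail is that $\CovMat_{\xx\xx}$ is defined as a second moment $\E[\xx\xx^\top]$, not a centered covariance, so no mean assumption on $\xx$ is needed.
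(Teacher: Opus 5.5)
Your proposal is correct and takes essentially the same route as the paper. You expand $\|\vecnoise - (\empsol-\AA^\star)\xx\|^2$, use independence and $\E[\vecnoise]=\zeroVec$ to remove the cross term, and rewrite the quadratic term as $\trace\big((\empsol-\AA^\star)\CovMat_{\xx\xx}(\empsol-\AA^\star)^\top\big)$; the only cosmetic difference is that you evaluate the noise contribution explicitly as $\trace(\CovMat_{\vecnoise\vecnoise})$ and get $\risk(\AA^\star)$ as the special case $\empsol=\AA^\star$, whereas the paper cancels $\|\vecnoise\|^2$ inside the expectation.
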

\begin{proof}
The expected excess risk is defined as
\begin{equation}
    \risk\big(\empsol\big) - \risk \big( \AA^\star \big) = \E \left[ \|\yy - \empsol \xx\|^2 - \|\yy - \AA^\star \xx\|^2 \right].
\end{equation}
Using $\yy = \AA^{\star} \xx + \vecnoise$, we compute
\begin{equation}
    \| \yy - \empsol \xx \|^2 = \|\vecnoise - (\empsol- \AA^\star) \xx\|^2 = \|\vecnoise\|^2 - 2 \vecnoise^\top (\empsol - \AA^\star) \xx + \xx^\top (\empsol - \AA^\star)^\top (\empsol - \AA^\star) \xx,
\end{equation}
while
\begin{equation}
    \risk \big( \AA^\star \big) = \| \yy - \AA^\star \xx\|^2 = \|\vecnoise\|^2.
\end{equation}
Thus,
\begin{equation}
    \risk\big(\empsol\big) - \risk \big( \AA^\star \big) = \E\left[ -2 \vecnoise^\top (\empsol - \AA^\star) \xx + \xx^\top (\empsol- \AA^\star)^\top (\empsol - \AA^\star) \xx \right].
\end{equation}
Since $\xx$ and $\vecnoise$ are independent and $\E[\vecnoise] = \zeroVec $, the cross term vanishes, \emph{i.e.},
\begin{equation}
    \E\left[ \vecnoise^\top (\empsol- \AA^\star) \xx \right] = \E\left[ \vecnoise^\top\right] (\empsol- \AA^\star) \xx = 0.
\end{equation}
Hence, the excess risk is,
\begin{align}
    \risk(\empsol) - \risk(\AA^\star)
    & = \E\left[ \xx^\top (\empsol - \AA^\star)^\top (\empsol- \AA^\star) \xx \right]  \nonumber \\
    & = \E \sb{\trace\rb{\xx^\top (\empsol - \AA^\star)^\top (\empsol - \AA^\star) \xx}} \nonumber \\
    & = \E \left[ \trace\rb{ (\empsol - \AA^\star) \xx \xx^\top (\empsol - \AA^\star)^\top } \right] \nonumber \\
    & = \trace\rb{ (\empsol - \AA^\star) \E\big[ \xx \xx^\top \big]  (\empsol - \AA^\star)^\top } \nonumber \\
    & = \trace \rb{ (\empsol - \AA^\star) \CovMat_{\xx \xx} (\empsol - \AA^\star)^\top} \nonumber \\
    & = \fnorm{(\empsol -\AA^{\star})\CovMat_{\xx \xx}^{\frac{1}{2}}}^2.
\end{align}
\end{proof}

\section{Derivation of the explicit solutions}
In this section, we derive the explicit solution found by LoRA and FFT during the fine-tuning. We note that these results are well known and do not claim any novelty in this regard. We include these derivations for the completeness of this work.

\subsection{Explicit solution of full fine-tuning}\label{app_FFT_sol}
We start by deriving the solution found by GD during fine-tuning (FFT).
\begin{lemma}[FFT solution under GD]\label{lemma:FFT solution under GD}
    Let $ \XX \in \R^{d_x \times n} $ and $ \YY \in \R^{d_y \times n } $.  Consider the minimization problem
    \begin{equation}
        \min_{\AA\in\R^{d_y\times d_x}}\sb{\frac{1}{2}\fnorm{\AA \XX - \YY}^2}.
    \end{equation}
    Let $ \{ \AA_{t} \} $ be the iterates of GD with step size $ \eta $, applied to minimize this problem. Assume that $ \XX $ is full rank, {i.e.,} $ \rank(\XX) = \min\{ n,d_{x} \} $. If $ \eta < 2/\lambda_{\max}(\XX^\top \XX ) $, then GD will converge to
    \begin{equation}
        \gdsol = \lim_{t \to \infty} \AA_{t} = \initmodel + \rb{\YY - \initmodel \XX} \XX^\dagger.
    \end{equation}
\end{lemma}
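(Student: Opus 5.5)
The plan is to track the error $\bDelta_t := \AA_t - \initmodel$ rather than $\AA_t$ itself. The gradient of $\frac{1}{2}\fnorm{\AA\XX-\YY}^2$ is $(\AA\XX-\YY)\XX^\top$, so the GD recursion reads
\begin{equation*}
    \AA_{t+1} = \AA_t - \eta\,(\AA_t\XX - \YY)\XX^\top .
\end{equation*}
Writing $\AA_t = \initmodel + \bDelta_t$ with $\bDelta_0 = \zeroVec$ and setting $\RR := \YY - \initmodel\XX$, this becomes the affine recursion
\begin{equation*}
    \bDelta_{t+1} = \bDelta_t\big(\II - \eta\XX\XX^\top\big) + \eta\RR\XX^\top .
\end{equation*}

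\emph{Invariant subspace.} The first step is to note that every increment $\eta(\RR - \bDelta_t\XX)\XX^\top$ has rows lying in the column space of $\XX$. Since $\bDelta_0=\zeroVec$, induction gives $\bDelta_t = \bDelta_t\PP_{\XX}$ for all $t$, where $\PP_{\XX}=\XX\XX^\dagger$ is the orthogonal projector onto the range of $\XX$. Consequently the component of $\AA_t$ along $\PP^{\perp}_{\XX}$ stays frozen at $\initmodel\PP^{\perp}_{\XX}$, which is exactly the mechanism behind the min-norm property.

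\emph{Contraction on the range.} Next, unroll the recursion to get
\begin{equation*}
    \bDelta_t = \eta\RR\XX^\top\sum_{k=0}^{t-1}\big(\II-\eta\XX\XX^\top\big)^k .
\end{equation*}
Take the compact SVD $\XX = \UU\mathbf{S}\VV^\top$, with $\mathbf{S}$ containing the $\rank(\XX)$ positive singular values $s_i$. Then $\XX^\top(\II-\eta\XX\XX^\top)^k = \VV\mathbf{S}(\II-\eta\mathbf{S}^2)^k\UU^\top$, so the sum diagonalizes into scalar geometric series:
\begin{equation*}
    \eta s_i\sum_{k=0}^{t-1}(1-\eta s_i^2)^k = \frac{1-(1-\eta s_i^2)^t}{s_i}.
\end{equation*}
The step-size condition $\eta < 2/\lambda_{\max}(\XX^\top\XX)$ gives $|1-\eta s_i^2|<1$ for every positive $s_i$, because the nonzero eigenvalues of $\XX^\top\XX$ and $\XX\XX^\top$ coincide. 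Each scalar therefore converges to $1/s_i$. Hence
\begin{equation*}
    \bDelta_t \to \RR\VV\mathbf{S}^{-1}\UU^\top = \RR\XX^\dagger ,
\end{equation*}
which is the claimed limit $\initmodel + (\YY-\initmodel\XX)\XX^\dagger$.

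\emph{Where care is needed.} The full-rank assumption is not essential, since the compact SVD handles any rank. The main subtlety is that zero singular directions must contribute nothing; the invariance step guarantees this, because those directions are never excited. Finally, the limit coincides with Proposition~\ref{lemma:FFT_sol}: $\RR\XX^\dagger$ has rows in the range of $\XX$, and it minimizes the loss because $(\YY - \initmodel\XX - \RR\XX^\dagger\XX)\XX^\top = \RR(\II - \XX^\dagger\XX)\XX^\top = \zeroVec$.
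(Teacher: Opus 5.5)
Your proof is correct, and it takes a different route from the paper's. The paper splits by regime. For $n \ge d_x$ it simply defers to the standard least-squares result. For $n<d_x$ it first shows that the residual evolves as $\AA_{t+1}\XX-\YY = (\AA_t\XX-\YY)(\II-\eta\XX^\top\XX)$. This is a strict contraction because $\XX^\top\XX$ is invertible, so $\AA_t\XX\to\YY$. The paper then notes that the iterates have the form $\AA_{t+1}=\initmodel-\CC_t\XX^\top$, and solves the limiting equation $\YY=\initmodel\XX-\CC_\infty\XX^\top\XX$ for $\CC_\infty$, again using invertibility of $\XX^\top\XX$.

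You instead unroll the affine recursion for $\bDelta_t$ explicitly and diagonalize it with the compact SVD, which reduces everything to scalar geometric series. This buys several things:
\begin{itemize}
\item It treats both regimes, and any rank of $\XX$, in one computation, so the full-rank hypothesis is indeed superfluous.
\item It gives convergence of the iterates themselves, with an explicit linear rate $\max_i|1-\eta s_i^2|^t$.
\item The paper's passage to the limit tacitly assumes that $\CC_t$ converges. That is true, since $\CC_t=(\initmodel\XX-\AA_{t+1}\XX)(\XX^\top\XX)^{-1}$, but the paper does not say it.
\end{itemize}

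The paper's decomposition into ``reach an interpolator'' plus ``never leave $\initmodel+\{\CC\XX^\top\}$'' is the standard implicit-bias argument. Its advantage is that it does not rely on a closed-form unrolling, so it carries over to other span-preserving methods.

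That second ingredient is exactly your invariance step. In your write-up, though, the step is logically redundant: the identity $\XX^\top(\II-\eta\XX\XX^\top)^k=\VV\mathbf{S}(\II-\eta\mathbf{S}^2)^k\UU^\top$ already removes the zero-singular directions, so you do not need invariance to justify that they contribute nothing.
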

\begin{proof}
    We first note that when $n \geq d_x$, there is a unique minimizer of the objective. Specifically, in this case we have that $ \XX \XX^\dagger = \II $, and therefore $ \gdsol $ is reduced to
    \begin{equation}
        \gdsol = \initmodel + \YY \XX^\dagger - \initmodel \XX \XX^\dagger = \YY \XX^\dagger = \YY \XX^\top \left(\XX \XX^\top\right)^{-1}.
    \end{equation}
    This case is trivial, and its analysis can be found in many sources, \emph{e
    .g.}, \citet{bach2024learning}. Let us consider the underdetermined case where $n < d_x$. Recall that our objective function is
    \begin{equation}
        \ell(\AA) = \frac{1}{2} \fnorm{\AA\XX-\YY}^2,
    \end{equation}
    and therefore its gradient in matrix form is given by
    \begin{equation}
        \nabla \ell(\AA) = \rb{\AA\XX - \YY}\XX^\top.
    \end{equation}
    We first show that GD converges to a solution satisfying $\AA\XX = \YY$, i.e., achieving zero error. Let us write one step of GD.
    \begin{align}
        \AA_{t+1} & = \AA_t - \eta \nabla \ell(\AA_t) \nonumber \\
        & = \AA_t - \eta \rb{\AA_t\XX - \YY}\XX^\top \nonumber \\
        \Rightarrow \quad \AA_{t+1}\XX - \YY & = \AA_{t}\XX - \YY - \eta\rb{\AA_t\XX - \YY}\XX^\top\XX \nonumber \\
        \Rightarrow \quad \AA_{t+1}\XX - \YY & = \rb{\AA_{t}\XX - \YY}\rb{\II - \eta \XX^\top\XX}.
    \end{align}
    Unrolling the recursion for $ T $ steps gives
    \begin{equation}
        \AA_{T}\XX - \YY = \rb{\initmodel\XX - \YY}\rb{\II - \eta \XX^\top\XX}^\top.
    \end{equation}
    Since the step size $\eta \leq {2} / {\lambda_{\max}\rb{\XX^\top\XX}}$ and $\XX^\top\XX$ is full-rank, we have that $ \| \II - \eta \XX^\top\XX  \| < 1 $. This guarantees the convergence of $\AA_{T}\XX$ to $\YY$. Here, we denote the limit $ T \to \infty$ as $\AA_\infty \XX = \YY$. Now, let us write the first step of GD.
    \begin{equation}
        \AA_1 = \initmodel - \eta(\initmodel \XX -\YY)\XX^\top = \initmodel - \CC_0 \XX^\top.
    \end{equation}
    Where $\CC_0:= \eta(\initmodel \XX -\YY)$. Next, 
    \begin{align}
        \AA_2 &= \AA_1 - \eta (\AA_1 \XX - \YY)\XX^\top \nonumber \\
        &= \initmodel - \rb{\CC_0 + \eta (\AA_1 \XX - \YY)}\XX^\top \nonumber \\
        &= \initmodel - \CC_1 \XX^\top.
    \end{align}
    Where $\CC_1:= \CC_0 + \eta (\AA_1 \XX - \YY)$. In principle, it is possible to define $\CC_t$ for any time step and write the general update rule of GD as
    \begin{align}
        \AA_{t+1} &= \initmodel - \CC_t \XX^\top \nonumber \\
        \Rightarrow \quad  \AA_{t+1} \XX &= \initmodel\XX- \CC_t \XX^\top\XX \nonumber \\
        \Rightarrow \quad \AA_{\infty} \XX &= \initmodel\XX- \CC_{\infty} \XX^\top\XX \nonumber \\
        \Rightarrow \quad \YY &= \initmodel\XX- \CC_{\infty} \XX^\top\XX.
    \end{align}
    Now, since $\XX^\top\XX$ is invertible, we have 
    \begin{equation}
      \CC_{\infty} = \initmodel\XX(\XX^\top\XX)^{-1} - \YY(\XX^\top\XX)^{-1}=(\initmodel\XX- \YY)(\XX^\top\XX)^{-1}.
    \end{equation}
    Now recall that $\AA_{\infty} = \initmodel - \CC_{\infty} \XX^\top$. Replacing $\CC_{\infty}$ gives
    \begin{align}
        \gdsol = \AA_{\infty} & = \initmodel - (\initmodel\XX- \YY)(\XX^\top\XX)^{-1}\XX^\top \nonumber \\
        & = \initmodel - (\initmodel\XX- \YY) \XX^\dagger = \initmodel + (\YY - \initmodel\XX) \XX^\dagger = \initmodel + \widehat \bDelta_{\operatorname{FFT}} .
    \end{align}
    Note that the above solution can be simplified in the overdetermined regime where $n > d_x$. In this regime, we have, 
    \begin{equation}
        \widehat \bDelta_{\operatorname{FFT}} = (\YY - \initmodel\XX) \XX^\dagger = (\YY - \initmodel\XX)\XX^\top \rb{\XX\XX^\top}^{-1} = \empcov_{\yy\xx}\empcov_{\xx\xx}^{-1} - \initmodel.
    \end{equation}
    So, 
    \begin{equation}
        \gdsol = \initmodel + \widehat \bDelta_{\operatorname{FFT}} = \initmodel + \empcov_{\yy\xx}\empcov_{\xx\xx}^{-1} - \initmodel = \empcov_{\yy\xx}\empcov_{\xx\xx}^{-1}.
    \end{equation}
    It is well known that in the overdetermined regime, GD converges to the least-squares solution independently of its initialization. 
\end{proof}

\subsection{Explicit solution of rank constraint linear regression and LoRA}\label{app:low_rank_sol}
\begin{lemma}[Low-rank solution]\label{lemma:low_rank_sol}
    Let $ \XX \in \R^{d_x \times n} $ and $ \YY \in \R^{d_y \times n } $.  Consider the rank-constraint minimization problem
    \begin{equation}\label{eq:Rank Constraint Linear Regression}
        \min_{\AA \in \mathbb{R}^{d_y \times d_x} \, : \, \rank(\AA) \leq r } \fnorm{\YY - \AA \XX}^2.
    \end{equation}
    Then
    \begin{equation}\label{eq:optimal rank constraint solution}
        \empsol_r = \trunk{\YY\XX^\dagger\empcov_{\xx\xx}^{\frac{1}{2}}}\rb{\empcov_{\xx\xx}^{\frac{1}{2}}}^\dagger
    \end{equation}
    is a minimizer of \eqref{eq:Rank Constraint Linear Regression}. Furthermore, if
    \begin{equation}
        \sigma_{r}(\YY\XX^\dagger\empcov_{\xx\xx}^{\frac{1}{2}}) > \sigma_{r+1}(\YY\XX^\dagger\empcov_{\xx\xx}^{\frac{1}{2}})
    \end{equation}
    then $ \empsol_r $ has the minimal Frobenius norm among all minimizers of \eqref{eq:Rank Constraint Linear Regression}.
\end{lemma}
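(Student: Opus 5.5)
The plan is to reduce the rank-constrained problem to a plain Eckart--Young problem by a Pythagorean decomposition of the objective. Write $\empcov := \empcov_{\xx\xx} = \frac1n \XX\XX^\top$ and let $\PP := \XX\XX^\dagger$ denote the orthogonal projector onto the column space of $\XX$ (equivalently, onto $\image(\empcov^{\frac12})$). Since $\YY\XX^\dagger\XX$ is the orthogonal projection of the rows of $\YY$ onto the row space of $\XX$, and $\AA\XX$ lies in that row space for every $\AA$, we have
\begin{equation*}
    \fnorm{\YY - \AA\XX}^2 = \fnorm{\YY - \YY\XX^\dagger\XX}^2 + \fnorm{(\YY\XX^\dagger - \AA)\XX}^2 .
\end{equation*}
The first term does not depend on $\AA$. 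Moreover, $\fnorm{\MM\XX}^2 = \trace(\MM\XX\XX^\top\MM^\top) = n\fnorm{\MM\empcov^{\frac12}}^2$. Hence the problem is equivalent to minimizing $\fnorm{\YY\XX^\dagger\empcov^{\frac12} - \AA\empcov^{\frac12}}^2$ over $\rank(\AA)\le r$.

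Next I will change variables to $\BB := \AA\empcov^{\frac12}$. Any $\BB$ arising this way has rank at most $r$. Conversely, any $\BB$ with $\rank(\BB)\le r$ and row space inside $\image(\empcov^{\frac12})$ is attained by $\AA = \BB(\empcov^{\frac12})^\dagger$, which also has rank at most $r$, using $(\empcov^{\frac12})^\dagger\empcov^{\frac12} = \PP$ and $\BB\PP = \BB$. The target $\MM := \YY\XX^\dagger\empcov^{\frac12}$ itself has row space in $\image(\empcov^{\frac12})$, and so does its truncation $\trunk{\MM}$, since the right singular vectors of $\MM$ with nonzero singular values lie in its row space. Therefore the unconstrained Eckart--Young minimizer (Lemma~\ref{lemma:young_rank}) is feasible for the reduced problem. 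This shows that $\empsol_r = \trunk{\MM}(\empcov^{\frac12})^\dagger$ is a minimizer.

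For the minimal-norm claim, the key step is to characterize all minimizers. When $\sigma_r(\MM) > \sigma_{r+1}(\MM)$, the Frobenius Eckart--Young minimizer is unique, so every minimizer $\AA$ must satisfy $\AA\empcov^{\frac12} = \trunk{\MM}$. The solution set of this linear equation is $\trunk{\MM}(\empcov^{\frac12})^\dagger + \ZZ$ with $\ZZ\empcov^{\frac12} = \zeroVec$, i.e.\ the rows of $\ZZ$ lie in $\kernel(\empcov)$. The rows of $\empsol_r$ lie in $\image(\empcov^{\frac12})$, which is orthogonal to $\kernel(\empcov)$, so $\fnorm{\empsol_r + \ZZ}^2 = \fnorm{\empsol_r}^2 + \fnorm{\ZZ}^2$, and the minimum is attained at $\ZZ = \zeroVec$. (One should also check that adding such a $\ZZ$ may violate the rank constraint; this does not matter, since we only need $\empsol_r$ to have the least norm within a superset of the minimizers.)

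I expect the main obstacle to be the uniqueness of the Frobenius-norm truncation under the spectral-gap condition. I will invoke it as the standard companion of Lemma~\ref{lemma:young_rank}; if a proof is required, I would argue via von Neumann's trace inequality, whose equality case forces the optimal $\BB$ to share the top-$r$ singular subspaces of $\MM$, and these subspaces are unique given the gap. The LoRA statement, Proposition~\ref{lemma:lora_sol}, then follows by applying the lemma to $\YY - \initmodel\XX$ in place of $\YY$, since $\YY\XX^\dagger$ becomes $\widehat{\bDelta}_{\operatorname{FFT}}$.
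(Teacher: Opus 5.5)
Your proposal is correct and takes essentially the same route as the paper. Both proofs use the Pythagorean split that removes $\fnorm{\YY-\YY\XX^\dagger\XX}^2$, reduce the problem to $\fnorm{(\YY\XX^\dagger-\AA)\empcov_{\xx\xx}^{\frac{1}{2}}}^2$, and prove an Eckart--Young lower bound over all rank-$r$ matrices that is then shown to be attained. Both then use the spectral gap to force $\AA\empcov_{\xx\xx}^{\frac{1}{2}}=\trunk{\MM}$, and conclude from the orthogonality of the row space of $\empsol_r$ to $\kernel(\empcov_{\xx\xx})$. The only cosmetic difference is how attainment is checked: you show $\trunk{\MM}$ is feasible directly via row-space containment, whereas the paper shows $\trunk{\MM}\PP_{\XX}$ attains the Eckart--Young value through the contraction $\fnorm{(\MM-\trunk{\MM})\PP_{\XX}}\le\fnorm{\MM-\trunk{\MM}}$.
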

\begin{proof}
Let $\empsol=\YY\XX^\dagger$ be a min-norm solution of the unconstrained (OLS) problem. Let us reformulate the objective in \eqref{eq:Rank Constraint Linear Regression} as
\begin{align}
    \fnorm{\YY-\AA \XX}^2 
    & = \fnorm{\YY- \empsol \XX + \rb{\empsol  - \AA} \XX}^2 \nonumber \\
    & = \fnorm{\YY- \empsol \XX}^2 + \fnorm{\rb{\empsol  - \AA} \XX}^2 + 2\trace \rb{\XX^\top \rb{\empsol  - \AA}^\top \rb{\YY- \empsol \XX}}.
\end{align}
Note that the third term above is, in fact, zero since
\begin{align}
    \trace \rb{\XX^\top \rb{\empsol  - \AA}^\top \rb{\YY- \empsol \XX}}
    & = \trace \rb{\rb{\YY- \empsol \XX}\XX^\top \rb{\empsol  - \AA}^\top} \nonumber \\
    & = \trace \rb{\rb{\YY\XX^\top - \empsol \XX \XX^\top }\rb{\empsol  - \AA}^\top} \nonumber \\
    & = \trace \rb{\rb{\YY\XX^\top - \YY\XX^\dagger \XX \XX^\top }\rb{\empsol  - \AA}^\top}=0.
\end{align}
Here, in the last step, we used the fact that $ \XX^\dagger \XX $ is an orthogonal projection matrix onto the row space of $ \XX $ and therefore $ \XX^\top= \XX^\dagger \XX \XX^\top$. We are left with the following constraint optimization problem.
\begin{equation}
   \min_{\AA \in \mathbb{R}^{d_y \times d_x}: \rank(\AA) \leq r} \quad \left\{\fnorm{\YY- \empsol \XX}^2 + \fnorm{\rb{\empsol  - \AA} \XX}^2\right\}.
\end{equation}
Since the first term is independent of $\AA$, the minimizers of this optimization problem are the same as the minimizers of
\begin{equation} \label{eq:rank_constraint_formulation}
    \min_{\AA \in \mathbb{R}^{d_y \times d_x}: \rank(\AA) \leq r} \quad \fnorm{\rb{\empsol  - \AA} \XX}^2 .
\end{equation}
Let us rewrite the reformulated objective from \eqref{eq:rank_constraint_formulation} using trace
\begin{align}
    \fnorm{\rb{\empsol  - \AA} \XX}^2 &= \trace \rb{ \rb{\rb{\empsol  - \AA} \XX} \rb{\rb{\empsol - \AA} \XX}^\top} \nonumber \\
    & = \trace \rb{\rb{\empsol  - \AA} \XX \XX^\top \rb{\empsol  - \AA}^\top} \nonumber \\
    & = \trace \rb{\rb{\empsol  - \AA} \rb{\XX \XX^\top}^{\frac{1}{2}}\rb{\XX \XX^\top}^{\frac{1}{2}} \rb{\empsol  - \AA}^\top} \nonumber \\
    & = \trace \rb{\rb{\empsol  - \AA} \rb{\XX \XX^\top}^{\frac{1}{2}} \rb{\rb{\empsol  - \AA} \rb{\XX \XX^\top}^{\frac{1}{2}}}^{\top}} \nonumber \\
    & = \fnorm{\rb{\empsol  - \AA} \rb{\XX \XX^\top}^{\frac{1}{2}}}^2 \nonumber \\
    & = n\fnorm{\rb{\empsol  - \AA} \empcov_{\xx\xx}^{\frac{1}{2}}}^2.
\end{align}
Therefore, \eqref{eq:rank_constraint_formulation} can be further reduced to
\begin{equation}\label{eq:simplified formulation optimization problem}
    \min_{\rank(\AA)\leq r} \fnorm{\rb{\empsol  - \AA} \empcov_{\xx\xx}^{\frac{1}{2}}}^2.
\end{equation}
By a simple change of variable $\MM = \empsol \empcov_{\xx\xx}^{\frac{1}{2}} $ and $\ZZ = \AA \empcov_{\xx\xx}^{\frac{1}{2}}$, we have that the solutions of the optimization problem in \eqref{eq:simplified formulation optimization problem} are equivalent to the solutions of
\begin{equation}\label{eq:rank_constraint_formulation2}
    \min_{\ZZ \in \R^{d_y \times d_x}} \fnorm{\MM - \ZZ}^2 \qquad \text{s.t.} \qquad \exists \AA \in \R^{d_y \times d_x} : \ZZ = \AA \empcov_{\xx\xx}^{\frac{1}{2}} , \ \rank(\AA) \leq r.
\end{equation}
Let 
\begin{equation}
    \ZZ^\star
    = \empsol_r \empcov_{\xx\xx}^{\frac{1}{2}}
    =  \trunk{\MM} \rb{\empcov_{\xx\xx}^{\frac{1}{2}}}^\dagger \empcov_{\xx\xx}^{\frac{1}{2}}
    = \trunk{\MM} \PP_{\XX},
\end{equation}
where $ \PP_{\XX} $ is a projection matrix onto the column space of the matrix $ \XX $, and we used the fact that $ \rb{\empcov_{\xx\xx}^{\frac{1}{2}}}^\dagger \empcov_{\xx\xx}^{\frac{1}{2}} =  \PP_{\XX} $. We first show that $ \ZZ^\star $is a minimizer of \eqref{eq:rank_constraint_formulation2}. This will show that $ \empsol_r $ is an optimal solution. Then, we prove that $ \empsol_r $ has the minimal Frobenius norm among all minimizers under the assumption that $ \sigma_{r}(\YY\XX^\dagger\empcov_{\xx\xx}^{\frac{1}{2}}) > \sigma_{r+1}(\YY\XX^\dagger\empcov_{\xx\xx}^{\frac{1}{2}}) $.

Note that any $ \ZZ $ that satisfies the constraints in \eqref{eq:rank_constraint_formulation2} has at most rank $r$. This stems from the fact that 
\begin{equation}
    \rank(\ZZ) = \rank\left(\AA \empcov_{\xx\xx}^{\frac{1}{2}} \right) \leq \rank(\AA) \leq r.
\end{equation}
Therefore, by looking at a superset of the feasible set, the minimal value of the optimization problem in \eqref{eq:rank_constraint_formulation2} is bounded from below by
\begin{equation}\label{eq:reduced constraint optimization problem}
    \min_{\rank(\ZZ)\leq r } \fnorm{\MM - \ZZ}^2.
\end{equation}
The optimal value of this problem is given by the Eckart-Young theorem (Lemma~\ref{lemma:young_rank}). Therefore, since $ \ZZ^\star $ satisfies the constraints in \eqref{eq:rank_constraint_formulation2} we have
\begin{equation}\label{eq:constraint reduction}
    \fnorm{\MM - \ZZ^\star}^2 \geq 
    \min_{\substack{\exists \AA \in \R^{d_y \times d_x} \, : \, \ZZ = \AA \empcov_{\xx\xx}^{\frac{1}{2}}\\\rank(\AA) \leq r} } \fnorm{\MM - \ZZ}^2
    \geq \min_{\rank(\ZZ)\leq r } \fnorm{\MM - \ZZ}^2
    = \sum_{i = r+1}^{\min \{ d_x, d_y \} } \sigma_{i}^2 (\MM).
\end{equation}
To show that $ \ZZ^\star $ is optimal, it is sufficient to show that it achieves this bound. It is easy to see that $ \empcov_{\xx\xx}^{\frac{1}{2}} = \empcov_{\xx\xx}^{\frac{1}{2}} \PP_{\XX} $.  Therefore $ \MM = \empsol \empcov_{\xx\xx}^{\frac{1}{2}} = \empsol \empcov_{\xx\xx}^{\frac{1}{2}} \PP_{\XX} = \MM \PP_{\XX} $. Then, plugging~$ \ZZ^\star $ in the objective yields
\begin{align}
    \fnorm{\MM - \ZZ^\star}^2
    & = \fnorm{\MM \PP_{\XX} - \trunk{\MM} \PP_{\XX}}^2 \nonumber \\
    & = \fnorm{ (\MM - \trunk{\MM}) \PP_{\XX}}^2 \nonumber \\
    & \leq \fnorm{\MM - \trunk{\MM}}^2 \nonumber \\
    & = \sum_{i = r+1}^{\min \{ d_x, d_y \} } \sigma_{i}^2 (\MM),
\end{align}
where in the last step we used Eckart-Young theorem.
Hence $ \fnorm{\MM - \ZZ^\star}^2 = \sum_{i = r+1}^{\min \{ d_x, d_y \} } \sigma_{i}^2 (\MM) $ and we get that $ \ZZ^\star $ is a global minimum of \eqref{eq:rank_constraint_formulation2}. Therefore, $ \empsol_r $ is a minimizer of \eqref{eq:Rank Constraint Linear Regression}.

Furthermore, assuming $ \sigma_{r}(\MM) > \sigma_{r+1}(\MM) $, we have that $ \ZZ^\star $ is a unique solution for \eqref{eq:reduced constraint optimization problem}. Because it is unique in the larger feasible set of $ \{ \ZZ : \rank(\ZZ) \leq r \} $, it is also a unique minimizer in every subset that contains it, particularly in \eqref{eq:rank_constraint_formulation2}. Therefore, under this additional assumption, $ \ZZ^\star $ is a unique minimizer for \eqref{eq:rank_constraint_formulation2}.

This means that all minimizers of \eqref{eq:simplified formulation optimization problem}  satisfy $ \ZZ^\star = \AA \empcov_{\xx\xx}^{\frac{1}{2}} $. Given a minimizer $ \AA $, we can present it as pertubation of~$ \empsol_r $, that is, $ \AA = \empsol_r + \QQ $. Then, by optimality, it should satisfy $\ZZ^\star = \AA\empcov_{\xx\xx}^{\frac{1}{2}}$. Then, we have
\begin{equation}
    \empsol_r \empcov_{\xx\xx}^{\frac{1}{2}} = \ZZ^\star = \AA\empcov_{\xx\xx}^{\frac{1}{2}} = \left(\empsol_r + \QQ\right) \empcov_{\xx\xx}^{\frac{1}{2}} \qquad \Rightarrow \qquad \QQ \empcov_{\xx\xx}^{\frac{1}{2}} = \zeroVec.
\end{equation}
Therefore, by optimality and the constraint of \eqref{eq:simplified formulation optimization problem}, we have the following conditions on $ \QQ $:
\begin{equation}
  \QQ \empcov_{\xx\xx}^{\frac{1}{2}} = \zeroVec \qquad \text{and} \qquad \rank\left(\empsol_r + \QQ\right) \leq r.
\end{equation}
Since $ \QQ \empcov_{\xx\xx}^{\frac{1}{2}} = \zeroVec $, we get $  \image(\empcov_{\xx\xx}^{\frac{1}{2}}) \subseteq \kernel (\QQ )$. Note that
\begin{equation}
    \kernel^{\perp}\left(\empsol_r \right) = \kernel^{\perp}\left(\trunk{\MM}\rb{\empcov_{\xx\xx}^{\frac{1}{2}}}^\dagger \right) \subseteq \kernel^{\perp} \left( \rb{\empcov_{\xx\xx}^{\frac{1}{2}}}^\dagger \right) = \image \left( \empcov_{\xx\xx}^{\frac{1}{2}} \right).
\end{equation}
Thus, $ \kernel^{\perp}(\empsol_r) \subseteq \kernel (\QQ ) $, namely, the row space of $ \QQ $ should be orthogonal to the row space of $ \empsol_r $. Therefore, for any solution $ \AA $
\begin{equation}
    \fnorm{\AA}^2 = \fnorm{\empsol_r + \QQ}^2 = \fnorm{\empsol_r}^2 + \fnorm{\QQ}^2 + 2\trace \left(\QQ \empsol_r^\top \right) = \fnorm{\empsol_r}^2 + \fnorm{\QQ}^2 \geq \fnorm{\empsol_r}^2.
\end{equation}
From here, we deduce that $ \empsol_r $ is the min-norm solution (in the Frobenius norm).
\end{proof}

\lemmaLoraSol*
\begin{proof}
The LoRA estimator $ \lorasol $, is given by
\begin{equation}
    \lorasol  =  \initmodel + \widehat\bDelta_{\operatorname{LoRA}}
    \qquad \text{s.t.} \qquad 
    \widehat\bDelta_{\operatorname{LoRA}} = \argmin_{\rank(\bDelta)\le r}
    \sb{\fnorm{\big(\initmodel+\bDelta\big) \XX-\YY}^2}.
\end{equation}
Let $\WW = \YY-\initmodel\XX$, then 
\begin{equation}
    \fnorm{\big(\initmodel+\bDelta\big) \XX-\YY}^2 
    = \fnorm{\bDelta \XX - (\YY - \initmodel \XX)}^2
    = \fnorm{\bDelta \XX - \WW}^2.
\end{equation}
Therefore,
\begin{equation}
    \widehat\bDelta_{\operatorname{LoRA}} = \argmin_{\rank(\bDelta)\le r}
    \sb{\fnorm{\bDelta \XX - \WW}^2}.
\end{equation}
Applying Lemma~\ref{lemma:low_rank_sol}, we have
\begin{equation}
   \widehat\bDelta_{\operatorname{LoRA}} = \trunk{\WW\XX^\dagger\empcov_{\xx\xx}^{\frac{1}{2}}}\rb{\empcov_{\xx\xx}^{\frac{1}{2}}}^\dagger.
\end{equation}
Replacing the definition of $\WW$, 
\begin{equation}
    \widehat\bDelta_{\operatorname{LoRA}} = \trunk{(\YY-\initmodel\XX)\XX^\dagger\empcov_{\xx\xx}^{\frac{1}{2}}}\rb{\empcov_{\xx\xx}^{\frac{1}{2}}}^\dagger = 
    \trunk{\widehat \bDelta_{\operatorname{FFT}} \empcov_{\xx\xx}^{\frac{1}{2}}}\rb{\empcov_{\xx\xx}^{\frac{1}{2}}}^\dagger.
\end{equation}
Using $\lorasol=\initmodel+\widehat\bDelta_{\operatorname{LoRA}}$ we get the solution for \ref{eq:lora_update}. Furthermore, Lemma~\ref{lemma:low_rank_sol} also states that if the truncation is unique, then $ \widehat\bDelta_{\operatorname{LoRA}} $ is the solution with the smallest norm. Therefore, $\lorasol$ is the nearest low-rank adaptation to $ \initmodel $ in Frobenius norm.
 
\end{proof}

\section{Excess risk analysis for full fine-tuning}
In this section, we provide the analysis of the excess risk for FFT.

\subsection{Overdetermined regime}\label{subsection:app_FFT_over}
\GDExcessOver*
\begin{proof}
Lemma~\ref{lemma:general_risk} states that for any estimator $\AA$ we have
\begin{equation}\label{eq:general excess risk simplified}
    \risk(\AA) - \risk (\AA^\star) =  \trace \rb{(\AA - \AA^\star)^\top (\AA - \AA^\star) \CovMat_{\xx \xx}}.
\end{equation}
From Lemma~\ref{lemma:FFT_sol}, the FFT is given by
\begin{equation}
    \gdsol = \initmodel + \rb{\YY - \initmodel \XX} \XX^\dagger.
\end{equation}
In the overdetermined case, when $ n \geq d_x $, $ \XX $ has a full row rank with probability one. In such a case, $\XX^{\dagger}$ has the closed-form expression
\begin{equation}
    \XX^{\dagger} = \XX^\top (\XX \XX^\top)^{-1}.
\end{equation}
Therefore,
\begin{align}
    \gdsol 
    & = \initmodel + \YY\XX^\dagger - \initmodel \XX \XX^\dagger \nonumber \\
    & = \initmodel + \YY\XX^\top (\XX \XX^\top)^{-1} - \initmodel \XX \XX^\top (\XX \XX^\top)^{-1} \nonumber \\
    & = \YY \XX^\top (\XX \XX^\top)^{-1}.
\end{align}
Note that in the overdetermined regime, the fine-tuned model does not depend on the pre-trained values. Namely, this estimator is just OLS. From the data generation model in \eqref{eq:data generation model}, we have that $ \YY = \AA^\star \XX  + \matnoise$. Using this, we get
\begin{align}
    \gdsol & =  \left(\AA^\star \XX  + \matnoise \right)\XX^\top (\XX \XX^\top)^{-1} \nonumber \\
   & = \AA^\star\XX \XX^\top (\XX \XX^\top)^{-1} +  \matnoise\XX^\top (\XX \XX^\top)^{-1} \nonumber \\
   & = \AA^\star +  \matnoise\XX^\top (\XX \XX^\top)^{-1}.
\end{align}
Thus, $ \gdsol-\AA^\star = \matnoise\XX^\top (\XX \XX^\top)^{-1} $. Placing this in \eqref{eq:general excess risk simplified} yields 
\begin{align}\label{eq:overdetermined fft excess risk main steps}
    \E \sb{\risk(\gdsol)} - \risk (\AA^\star)
    & =  \trace \rb{\left(\gdsol - \AA^\star\right)^\top \left(\gdsol - \AA^\star\right) \CovMat_{\xx \xx}}\nonumber \\
    & = \E \left[ \trace\left( (\XX \XX^\top)^{-1} \XX \E[\matnoise^\top \matnoise]  \XX^\top (\XX \XX^\top)^{-1} \CovMat_{\xx \xx} \right) \right] .
\end{align}

In order to compute $\E[\matnoise^\top\matnoise]$ we have,
\begin{equation}
  \E\sb{\rb{\matnoise^\top\matnoise}_{ij}} = \E\sb{\vecnoise^\top_i\vecnoise_j} =
    \begin{cases}
      \E\sb{\trace(\vecnoise^\top_i\vecnoise_i)}=\E\sb{\trace(\vecnoise_i\vecnoise_i^\top)}=\trace\rb{\CovMat_{\vecnoise\vecnoise}} & i=j\\
      0 & i \neq j\\
    \end{cases}       
\end{equation}
Since noise vectors $i$ and $j$ are independent. Then we continue, 
\begin{align}
        \E \sb{\risk(\gdsol)} - \risk (\AA^\star)  = \trace(\CovMat_{\vecnoise\vecnoise}) \E \sb{\trace \rb{(\XX \XX^\top)^{-1} \CovMat_{\xx \xx} }}.
\end{align}
We can further simplify the term $\XX \XX^\top$,
\begin{equation}
    \XX \XX^\top = n \rb{\frac{1}{n} \sum_{i=1}^n \xx_i \xx^\top_i} = n \empcov_{\xx \xx}.
\end{equation}
Plugging this back into \eqref{eq:overdetermined fft excess risk main steps}, we have
\begin{align}
    \E \sb{\risk(\gdsol)} - \risk (\AA^\star)
    & = \trace\left(\CovMat_{\vecnoise\vecnoise}\right) \E \sb{\trace \rb{\rb{n \empcov_{\xx \xx}}^{-1}\CovMat_{\xx \xx}}} \nonumber \\
    & = \frac{1}{n} \trace\left(\CovMat_{\vecnoise\vecnoise}\right) \E  \sb{\trace \rb{\empcov_{\xx \xx}^{-1}\CovMat_{\xx \xx}}}.
\end{align}
Furthermore, note that using Jensen's inequality, we get
\begin{align}
    \E  \sb{\trace \rb{\empcov_{\xx \xx}^{-1}\CovMat_{\xx \xx}}}
    & = \E  \sb{\trace \rb{ \CovMat^{\frac{1}{2}}_{\xx \xx}\empcov_{\xx \xx}^{-1}\CovMat^{\frac{1}{2}}_{\xx \xx}}} \nonumber \\
    & = \E\sb{\trace \rb{ \rb{\CovMat^{-\frac{1}{2}}_{\xx \xx}\empcov_{\xx \xx}\CovMat^{-\frac{1}{2}}_{\xx \xx}}^{-1}}}
    \geq \trace \rb{  \rb{ \E\sb{\CovMat^{-\frac{1}{2}}_{\xx \xx}\empcov_{\xx \xx}\CovMat^{-\frac{1}{2}}_{\xx \xx}}}^{-1}},
\end{align}
where we used the fact that $ f( \CovMat ) =  \trace (\CovMat^{-1} ) $ is convex over the set of PSD matrices. Then,
\begin{equation}
    \E\sb{\CovMat^{-\frac{1}{2}}_{\xx \xx}\empcov_{\xx \xx}\CovMat^{-\frac{1}{2}}_{\xx \xx}}
    = \CovMat^{-\frac{1}{2}}_{\xx \xx}\E\sb{\empcov_{\xx \xx}} \CovMat^{-\frac{1}{2}}_{\xx \xx}
    = \CovMat^{-\frac{1}{2}}_{\xx \xx}\CovMat_{\xx \xx} \CovMat^{-\frac{1}{2}}_{\xx \xx} = \II_{d_x}.
\end{equation}
Therefore,
\begin{equation}
    \E  \sb{\trace \rb{\empcov_{\xx \xx}^{-1}\CovMat_{\xx \xx}}}
    \geq \trace(\II) = d_x.
\end{equation}
Denote $ \bar{\sigma}^2_{\vecnoise} = \E\big[ \| \vecnoise \|^2 \big] / d_y = \trace(\CovMat_{\vecnoise\vecnoise})/ d_y $, then 
\begin{equation}
    \trace \rb{(\AA - \AA^\star)^\top (\AA - \AA^\star) \CovMat_{\xx \xx}} \geq \bar{\sigma}^2_{\vecnoise} \frac{d_y d_x}{n}.
\end{equation}
\end{proof}

\corGaussOver*
\begin{proof}
Recall from Theorem~\ref{theorem:Gd_excessrisk_overdetermined} that
\begin{align}
\E \sb{ \risk\big(\gdsol\big)} - \risk \big( \AA^\star \big)
= \frac{1}{n} \trace\left(\CovMat_{\vecnoise\vecnoise}\right)
\E \sb{\trace \rb{\empcov_{\xx \xx}^{-1}\CovMat_{\xx \xx}}}.
\end{align}
Since $\xx_i \sim \mathcal{N}(\zeroVec,\CovMat_{\xx\xx})$ i.i.d., we can write
$\xx_i = \CovMat_{\xx\xx}^{\frac{1}{2}} \zz_i$ where $\zz_i \sim \mathcal{N}(\zeroVec,\II)$.
Let $\ZZ = [\zz_1,\dots,\zz_n] \in \mathbb{R}^{d_x \times n}$. Then
\[
\empcov_{\xx\xx}
= \frac{1}{n} \XX\XX^\top
= \CovMat_{\xx\xx}^{\frac{1}{2}} \!\left(\frac{1}{n} \ZZ\ZZ^\top\right)\! \CovMat_{\xx\xx}^{\frac{1}{2}}.
\]
Using the cyclicity of the trace, we obtain
\begin{align}
\trace\rb{\empcov_{\xx\xx}^{-1}\CovMat_{\xx\xx}}
&= \trace\left(
\CovMat_{\xx\xx}^{-\frac{1}{2}}
\left(\frac{1}{n} \ZZ\ZZ^\top\right)^{-1}
\CovMat_{\xx\xx}^{-\frac{1}{2}}
\CovMat_{\xx\xx}
\right) \nonumber\\
&= \trace\left(\left(\frac{1}{n} \ZZ\ZZ^\top\right)^{-1}\right).
\end{align}
Since $\ZZ\ZZ^\top \sim \mathcal{W}_{d_x}(\II,n)$ is a Wishart matrix and $n>d_x+1$,
it holds that (see \citet[Sec.~3.8.1]{bach2024learning})
\begin{equation}
    \E \sb{\left(\frac{1}{n} \ZZ\ZZ^\top\right)^{-1}} = \frac{n}{n-d_x-1}\II,
\end{equation}
and therefore
\begin{equation}
    \E \sb{\trace\rb{\empcov_{\xx\xx}^{-1}\CovMat_{\xx\xx}}}
    = \trace \left(\frac{n}{n-d_x-1}\II\right)
    = \frac{n d_x}{n-d_x-1}.
\end{equation}
Substituting this back yields
\begin{equation}
    \E \sb{ \risk\big(\gdsol\big)} - \risk \big( \AA^\star \big)
    = \frac{1}{n}\trace \left(\CovMat_{\vecnoise\vecnoise}\right)
    \frac{n d_x}{n-d_x-1},
\end{equation}
which concludes the proof.
\end{proof}

\subsection{Underdetermined regime}\label{app:FFT_under_proof}
\FullFTLowerUnder*
\begin{proof}
By Lemma~\ref{lemma:general_risk} we get
\begin{equation}
    \risk\big(\gdsol\big)-\risk\big(\AA^\star\big) = \trace \left( \big(\gdsol-\AA^\star\big)^\top \big(\gdsol-\AA^\star\big) \CovMat_{\xx\xx} \right).
\end{equation}
From Lemma~\ref{lemma:FFT_sol}, the FFT is given by
\begin{equation}\label{eq:FFT solution underdetermined}
    \gdsol = \initmodel + \rb{\YY - \initmodel \XX} \XX^\dagger.
\end{equation}
In the underdetermined case, when $ n < d_x $, $ \XX $ has a full column rank (rank $n$) with probability one. In such a case, $\XX^{\dagger}$ has the closed-form expression
\begin{equation}
    \XX^{\dagger} =  (\XX^\top \XX)^{-1}\XX^\top.
\end{equation}
Using $\YY = \AA^\star\XX+\matnoise$ and \eqref{eq:FFT solution underdetermined},
\begin{align}
    \gdsol - \AA^\star
    & = \initmodel + \rb{\YY - \initmodel \XX} \XX^\dagger - \AA^\star \nonumber \\
    & = \initmodel + \rb{\AA^\star\XX+\matnoise - \initmodel \XX} \XX^\dagger - \AA^\star \nonumber \\
    & = \rb{\bDelta^\star \XX + \matnoise}\XX^\dagger -\bDelta^\star \nonumber \\
    & = \bDelta^\star \XX \XX^\dagger + \matnoise \XX^\dagger - \bDelta^\star \nonumber \\
    & = \bDelta^\star \PP_{\XX}-\bDelta^\star + \matnoise \XX^{\dagger} \nonumber \\
    & = -\bDelta^\star(\II-\PP_{\XX}) + \matnoise \XX^{\dagger},
\end{align}
where we used the relation $\bDelta^\star = \AA^\star-\initmodel$ from \eqref{eq:delta_star}, and denoted the orthogonal projection matrix onto the column space of $ \XX $ by $ \PP_{\XX} = \XX \XX^{\dagger}$. Plugging this into the excess-risk formula gives
\begin{align}
    \risk\big(\gdsol\big)-\risk\big(\AA^\star\big)
    & = \trace \rb{\rb{-\bDelta^\star(\II-\PP_{\XX}) + \matnoise \XX^{\dagger}}^\top \rb{-\bDelta^\star(\II-\PP_{\XX}) + \matnoise \XX^{\dagger}}\CovMat_{\xx\xx}} \nonumber \\
    & = \trace \rb{\rb{-\bDelta^\star(\II-\PP_{\XX}) + \matnoise \XX^{\dagger}}\CovMat_{\xx\xx}\rb{-\bDelta^\star(\II-\PP_{\XX}) + \matnoise \XX^{\dagger}}^\top } \nonumber \\
    & = \trace (\bDelta^\star(\II-\PP_{\XX})\CovMat_{\xx\xx}(\II-\PP_{\XX})\bDelta^{\star\top})
    - 2\trace(\matnoise \XX^{\dagger}\CovMat_{\xx\xx}(\II-\PP_{\XX})^{\top} \bDelta^{\star\top})
    \nonumber \\
    & \qquad +  \trace \rb{\matnoise\XX^{\dagger}\CovMat_{\xx\xx}\left(\XX^{\dagger}\right)^{\top} \matnoise^{\top}}.
\end{align}
Let us take a conditional expectation over the training noise matrix $\matnoise$ given the training data $\XX$. Since the noise has zero mean, and it is indepandent of $ \XX $,we get
\begin{equation}
    \E\sb{\trace(\matnoise \XX^{\dagger}\CovMat_{\xx\xx}(\II-\PP_{\XX})^{\!\top} \bDelta^{\star\top}) \; \middle| \; \XX}
    = \trace(\E\sb{\matnoise \; \middle| \;  \XX} \XX^{\dagger}\CovMat_{\xx\xx}(\II-\PP_{\XX})^{\!\top} \bDelta^{\star\top})
    = 0.
\end{equation}
Therefore,
\begin{align}\label{eq:full_FT under two terms}
    \E\left[\risk\big(\gdsol\big) \; \middle| \; \XX \right] - \risk\big(\AA^\star\big)
    & = \trace \left(\bDelta^\star(\II-\PP_{\XX})\CovMat_{\xx\xx}(\II-\PP_{\XX})\bDelta^{\star\top} \right) \nonumber \\
    & \hspace{3cm} + \E \left[\trace \rb{\matnoise\XX^{\dagger}\CovMat_{\xx\xx}\left(\XX^{\dagger}\right)^{\top}  \matnoise^{\top}} \; \middle| \; \XX\right].
\end{align}
Now, for the second term, we have
\begin{align}
    \E \left[\trace \rb{\matnoise\XX^{\dagger}\CovMat_{\xx\xx}\left(\XX^{\dagger}\right)^{\top}  \matnoise^{\top}} \; \middle| \; \XX\right]
    & = \E \left[\trace \rb{ \matnoise^{\top} \matnoise\XX^{\dagger}\CovMat_{\xx\xx}\left(\XX^{\dagger}\right)^{\top} } \; \middle| \; \XX\right]\nonumber \\
    & = \trace \rb{ \E \left[ \matnoise^{\top} \matnoise \; \middle| \; \XX\right] \XX^{\dagger} \CovMat_{\xx\xx} \left(\XX^{\dagger}\right)^{\top} } \nonumber \\
    & = \trace \rb{ \big( \trace(\CovMat_{\vecnoise\vecnoise}) \II \big) \XX^{\dagger} \CovMat_{\xx\xx} \left(\XX^{\dagger}\right)^{\top} } \nonumber \\
    & = \trace(\CovMat_{\vecnoise\vecnoise}) \trace\left(\XX^{\dagger}\CovMat_{\xx\xx}\left(\XX^{\dagger}\right)^{\top}\right),
\end{align}
where we used the fact that $ \E \left[ \matnoise^{\top} \matnoise \; \middle| \; \XX\right] = \trace(\CovMat_{\vecnoise\vecnoise}) \II  $.
Note that
\begin{equation}
    \trace\left( \XX^{\dagger}\CovMat_{\xx\xx} \left(\XX^{\dagger}\right)^{\top} \right)
    =  \trace\left( \CovMat_{\xx\xx} \left(\XX^{\dagger}\right)^{\top} \XX^{\dagger} \right)
    = \trace\left( \CovMat_{\xx\xx} \left(\XX^{\top}\right)^{\dagger} \XX^{\dagger} \right).
\end{equation}
Because $ \XX $ has linearly independent columns with probability one, we have that $ (\XX^{\top})^{\dagger} \XX^{\dagger} = (\XX \XX^{\top})^{\dagger}$. Thus,
\begin{equation}
    \E \left[\trace \rb{\matnoise\XX^{\dagger}\CovMat_{\xx\xx}\left(\XX^{\dagger}\right)^{\top}  \matnoise^{\top}} \; \middle| \; \XX\right]
    = \trace\left( \CovMat_{\xx\xx} \left(\XX \XX^{\top} \right)^{\dagger}\right)
    = \frac{1}{n}\trace\left( \CovMat_{\xx\xx} \empcov_{\xx\xx}^\dagger \right).
\end{equation}
For now, we reduced \eqref{eq:full_FT under two terms} into
\begin{equation}\label{eq:app_overparameterized_bias_variance}
    \E\left[\risk\big(\gdsol\big) \; \middle| \; \XX \right] - \risk\big(\AA^\star\big)
    = \trace \left(\bDelta^\star(\II-\PP_{\XX})\CovMat_{\xx\xx}(\II-\PP_{\XX})\bDelta^{\star\top} \right)
    + \frac{1}{n}\trace(\CovMat_{\vecnoise\vecnoise})\trace\left( \CovMat_{\xx\xx} \empcov_{\xx\xx}^\dagger \right).
\end{equation}
Let us proceed with the first term, 
\begin{align}
    \trace\rb{\bDelta^\star(\II-\PP_{\XX})\CovMat_{\xx\xx}(\II-\PP_{\XX})\bDelta^{\star\top}} 
    & = \trace\rb{\bDelta^\star(\II-\PP_{\XX})\CovMat_{\xx\xx}^{\frac{1}{2}}\CovMat_{\xx\xx}^{\frac{1}{2}}(\II-\PP_{\XX})\bDelta^{\star\top}} \nonumber \\
    & = \trace\rb{\rb{\CovMat_{\xx\xx}^{\frac{1}{2}}(\II-\PP_{\XX})\bDelta^{\star\top}}^\top\CovMat_{\xx\xx}^{\frac{1}{2}}(\II-\PP_{\XX})\bDelta^{\star\top}} \nonumber \\
    & = \fnorm{\CovMat_{\xx\xx}^{\frac{1}{2}}(\II-\PP_{\XX})\bDelta^{\star\top}}^2.
\end{align}
Therefore,
\begin{equation}
    \E \left[\risk\big(\gdsol\big) \right] - \risk\big(\AA^\star\big)
    = \E\sb{ \fnorm{\CovMat_{\xx\xx}^{\frac{1}{2}}(\II-\PP_{\XX})\bDelta^{\star\top}}^2}
    + \frac{1}{n}\trace(\CovMat_{\vecnoise\vecnoise}) \E\sb{\trace\left( \CovMat_{\xx\xx} \empcov_{\xx\xx}^\dagger \right)}.
\end{equation}

\end{proof}

\LoRAExcessUnderProp*

\begin{proof}
Since $\xx_i \sim \mathcal{N}(\zeroVec,\sigma_{\xx}^2 \II)$ i.i.d., define
$\zz_i := \xx_i / \sigma_{\xx}$ so that $\zz_i \sim \mathcal{N}(\zeroVec,\II)$.
Let $\ZZ = [\zz_1,\dots,\zz_n] \in \mathbb{R}^{d_x \times n}$. Then the empirical covariance can be written as
\begin{equation}
    \empcov_{\xx\xx}
    = \frac{1}{n} \XX\XX^\top
    = \sigma_{\xx}^2 \frac{1}{n} \ZZ\ZZ^\top .
\end{equation}
Since $n<d_x$, the matrix $\ZZ\ZZ^\top$ has rank $n$ almost surely, and
\begin{equation}
    \empcov_{\xx\xx}^\dagger
    = \frac{1}{\sigma_{\xx}^2} \left(\frac{1}{n} \ZZ\ZZ^\top\right)^\dagger .
\end{equation}
Using $\CovMat_{\xx\xx}=\sigma_{\xx}^2 \II$ and cyclicity of the trace,
\begin{equation}
    \trace\left(\empcov_{\xx\xx}^\dagger \CovMat_{\xx\xx}\right)
    = \trace\left(
    \frac{1}{\sigma_{\xx}^2}\left(\frac{1}{n} \ZZ\ZZ^\top\right)^\dagger
    \sigma_{\xx}^2 \II
    \right)
    = \trace\left(\left(\frac{1}{n} \ZZ\ZZ^\top\right)^\dagger\right).
\end{equation}
Because $ \ZZ $ has linearly independent columns with probability one, we have that $ (\ZZ^{\top})^{\dagger} \ZZ^{\dagger} = (\ZZ \ZZ^{\top})^{\dagger}$ and $ \ZZ^{\dagger} = (\ZZ^\top \ZZ)^{-1} \ZZ^\top $. Therefore,
\begin{equation}
    \left(\frac{1}{n} \ZZ\ZZ^\top\right)^\dagger
    = n \big(\ZZ^{\top}\big)^{\dagger} \ZZ^{\dagger}
    = n \ZZ \big(\ZZ^\top \ZZ\big)^{-1}  \big(\ZZ^\top \ZZ\big)^{-1} \ZZ^\top.
\end{equation}
Thus,
\begin{equation}
    \trace\left(\left(\frac{1}{n} \ZZ\ZZ^\top\right)^\dagger\right)
    = n \trace\left( \ZZ \big(\ZZ^\top \ZZ\big)^{-1}  \big(\ZZ^\top \ZZ\big)^{-1} \ZZ^\top \right)
    = n \trace\left(  \big(\ZZ^\top \ZZ\big)^{-1} \right)
    = \trace\left(\left( \frac{1}{n} \ZZ^\top \ZZ \right)^{-1} \right),
\end{equation}
where we used the cyclic property of the trace. Since $\ZZ^\top\ZZ \sim \mathcal{W}_{n}(\II,d_x)$ with $n+1<d_x$, it is known that
\begin{equation}
    \E \sb{\left(\frac{1}{n} \ZZ^{\top}\ZZ\right)^{-1}}
    = \frac{n}{d_x-n-1} \II.
\end{equation}
Therefore,
\begin{equation}
    \E \sb{\trace\left(\left(\frac{1}{n} \ZZ\ZZ^\top\right)^\dagger \right)}
    = \trace\left(\E \sb{\left(\frac{1}{n} \ZZ^{\top}\ZZ\right)^{-1}}\right)
    = \frac{n}{d_x-n-1} \trace(\II)
    = \frac{n^2}{d_x-n-1}.
\end{equation}
Substituting back, we obtain
\begin{equation}
    \frac{1}{n} \trace(\CovMat_{\vecnoise\vecnoise})
    \E \sb{\trace\left(\empcov_{\xx\xx}^\dagger \CovMat_{\xx\xx}\right)}
    = \frac{1}{n}\trace(\CovMat_{\vecnoise\vecnoise}) \frac{n^2}{d_x-n-1}
    = \trace(\CovMat_{\vecnoise\vecnoise})\frac{n}{d_x-n-1},
\end{equation}
which concludes the first part of the proof. For the bias term, we get
\begin{align}
    \E\sb{ \fnorm{ \big. \smash{ \bDelta^{\star} \PP^{\perp}_{\XX} \CovMat_{\xx\xx}^{\frac{1}{2}}} }^2}
    & = \fnorm{ \big. \smash{ \bDelta^{\star}}}^2 -  \E\sb{ \fnorm{ \big. \smash{ \bDelta^{\star} \PP_{\XX}} }^2} \nonumber \\
    & = \fnorm{ \big. \smash{ \bDelta^{\star}}}^2 - \E\left[ \trace \left( \bDelta^{\star} \big(\bDelta^{\star}\big)^\top \PP_{\XX} \right)\right] \nonumber \\
    & = \fnorm{ \big. \smash{ \bDelta^{\star}}}^2 -  \trace \left( \bDelta^{\star} \big(\bDelta^{\star}\big)^\top \E[\PP_{\XX}] \right),
\end{align}
where in the second step we used $ \PP_{\XX} \PP_{\XX}^{\top} = \PP_{\XX} $. Using Lemma~\ref{lemma:random_rotation} we have that $ \PP_{\XX} = \frac{n}{d_x} \II $. Thereofre,
\begin{align}
    \E\sb{ \fnorm{ \big. \smash{ \bDelta^{\star} \PP^{\perp}_{\XX} \CovMat_{\xx\xx}^{\frac{1}{2}}} }^2}
    & = \fnorm{ \big. \smash{ \bDelta^{\star}}}^2 -  \trace \left( \bDelta^{\star} \big(\bDelta^{\star}\big)^\top \E[\PP_{\XX}] \right), \nonumber \\
    & = \fnorm{ \big. \smash{ \bDelta^{\star}}}^2 -  \frac{n}{d_x}\trace \left( \bDelta^{\star} \big(\bDelta^{\star}\big)^\top \right), \nonumber \\
    & = \fnorm{\bDelta^{\star}}^2 \frac{d_x - n}{d_x}
\end{align}
\end{proof}

\begin{lemma}\label{lemma:random_rotation}
    Let $\XX \in \R^{d \times n} $ be a random Gaussian matrix with entries $ \mathcal{N}(0,\sigma^2_{x}) $ i.i.d. Denote by $ \PP_{\XX} = \XX (\XX^{\top} \XX)^{-1} \XX^{\top} $ the projection matrix onto the column space of $ \XX $. Then $ \E [ \PP_{\XX} ] = \frac{n}{d} \II $.
\end{lemma}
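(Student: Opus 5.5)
The natural approach is rotational invariance of the Gaussian distribution. Write $\XX = \sigma_x \ZZ$ with $\ZZ$ having i.i.d.\ $\mathcal{N}(0,1)$ entries. Since $\PP_{\XX}$ is invariant to rescaling of $\XX$, we have $\PP_{\XX} = \PP_{\ZZ}$. It therefore suffices to treat the standard Gaussian case. We assume $n \le d$, so that $\ZZ$ has full column rank with probability one and the formula $\ZZ(\ZZ^\top\ZZ)^{-1}\ZZ^\top$ is well defined almost surely.

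\textbf{Step 1: invariance.} For any fixed orthogonal $\QQ \in \R^{d\times d}$, the matrix $\QQ\ZZ$ has the same distribution as $\ZZ$. Moreover,
\begin{equation*}
\PP_{\QQ\ZZ} = \QQ\ZZ(\ZZ^\top \QQ^\top \QQ \ZZ)^{-1}\ZZ^\top\QQ^\top = \QQ \PP_{\ZZ}\QQ^\top .
\end{equation*}
Let $\MM := \E[\PP_{\ZZ}]$. This expectation exists because $\norm{\PP_{\ZZ}} \le 1$. Taking expectations in the identity above gives $\MM = \QQ\MM\QQ^\top$ for every orthogonal $\QQ$.

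\textbf{Step 2: identifying $\MM$.} A symmetric matrix that commutes with every orthogonal matrix is a multiple of the identity. One way to see this is to take $\QQ$ as a reflection $\II - 2\mathbf{e}_i\mathbf{e}_i^\top$, which forces the off-diagonal entries to vanish, and then as permutations, which force the diagonal entries to be equal. Hence $\MM = c\,\II$ for some scalar $c$.

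\textbf{Step 3: the constant.} Take traces. Since $\trace(\PP_{\ZZ}) = \rank(\ZZ) = n$ almost surely, we get $c\,d = \E[\trace \PP_{\ZZ}] = n$. Therefore $c = n/d$, which proves the claim.

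The only point needing care is the almost-sure full column rank of $\ZZ$, which makes $(\ZZ^\top\ZZ)^{-1}$ exist and gives $\trace \PP_{\ZZ} = n$. This holds because a Gaussian matrix is rank-deficient only on a set of Lebesgue measure zero. Everything else is routine.
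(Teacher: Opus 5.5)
Your proposal is correct and follows essentially the same route as the paper: rotational invariance gives $\E[\PP_{\XX}] = \RR\,\E[\PP_{\XX}]\RR^\top$ for every orthogonal $\RR$, this forces a multiple of the identity, and $\trace(\PP_{\XX}) = n$ fixes the constant at $n/d$. The only cosmetic difference is in showing the matrix is a multiple of the identity: the paper uses general Householder reflections $\II - 2\vv\vv^\top$ to show every unit vector is an eigenvector, whereas you use coordinate reflections and permutations.
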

\begin{proof}
    Let $ \RR \in \R^{d \times d} $ be a orthogonal matrix.  Note that $ \RR \XX $ is distributed identically to $ \XX $. Therefore,
    \begin{align}
        \E [ \PP_{\XX} ]
        = \E [ \PP_{\RR \XX} ]
        & = \E \left[ \RR \XX \big((\RR \XX)^{\top} (\RR \XX) \big)^{-1} (\RR \XX)^{\top} \right] \nonumber \\
        & = \RR \E \left[ \XX \big(\XX^{\top} \XX \big)^{-1} \XX^{\top} \right] \RR^{\top} = \RR \E [ \PP_{\XX} ] \RR^{\top}.
    \end{align}
    Therefore, $ \E [ \PP_{\XX} ] $ must satisfy $ \E [ \PP_{\XX} ] \RR = \RR \E [ \PP_{\XX} ] $ for all orthogonal matrices. Namely, it comute with any orthogonal matrix. Specifically for  $ \RR = \II - 2 \vv \vv^\top $, with $ \| \vv \| = 1 $. It is easy to validate that this is indeed an orthogonal matrix. Therefore,
    \begin{align}
        & (\II - 2 \vv \vv^\top) \E [ \PP_{\XX} ] = \E [ \PP_{\XX} ]  (\II - 2 \vv \vv^\top)  \qquad \forall \| \vv \| = 1  \nonumber \\
        \Leftrightarrow \qquad &  \vv ( \vv^\top \E [ \PP_{\XX} ]) = (\E [ \PP_{\XX} ] \vv) \vv^\top  \qquad \forall \| \vv \| =1 \nonumber \\
        \Rightarrow \qquad &  \vv ( \vv^\top \E [ \PP_{\XX} ] \vv ) = \E [ \PP_{\XX} ] \vv \qquad \forall \| \vv \| = 1.
    \end{align}
    Where in the last step we multiplied by $ \vv $ from the right both sides of the equation. Therefore, we have that $ \E [ \PP_{\XX} ] $ satisfies $ \E [ \PP_{\XX} ] \vv  = \lambda_{\vv} \vv $, with $ \lambda_{\vv} = \vv^\top \E [ \PP_{\XX} ] \vv $, for all $ \| \vv \| = 1 $. This means that any vector is an eigenvector for $\E [ \PP_{\XX} ]$. This holds only for matrices proportinal to the identaty matrix. Thus $ \E [ \PP_{\XX} ] = \lambda \II $ for some positive $ \lambda $. Using the fact that $ \trace(\PP_{\XX}) = n $ for any realization of $\XX$, we get
    \begin{align}
        \lambda d = \trace(\lambda \II) = \trace(\E [\PP_{\XX} ]) = \E [\trace(\PP_{\XX} )] = n.
    \end{align}
    Hence $ \lambda = n/d $.
\end{proof}

\section{Excess risk analysis for LoRA}
In this section, we provide detailed proofs for the LoRA solution in both overdetermined and underdetermined regimes. We start by introducing some useful lemmas used only for proofs related to LoRA.
\begin{lemma}\label{lemma:lowrank_second_form}
    We have shown in Lemma \ref{lemma:lora_sol} that
\begin{equation}
    \widehat \bDelta_{\operatorname{LoRA}} = \trunk{\widehat \bDelta_{\operatorname{FFT}} \empcov_{\xx\xx}^{\frac{1}{2}}}\rb{\empcov_{\xx\xx}^{\frac{1}{2}}}^{\dagger}.
\end{equation}
    Given the linear relation,
    \begin{equation}
        \YY = \AA^\star\XX + \matnoise,
    \end{equation}
    and, 
    \begin{equation}
        \widehat \bDelta_{\operatorname{FFT}} = \rb{\YY-\initmodel\XX}\XX^\dagger.
    \end{equation}
    The low-rank update $\widehat \bDelta_{\operatorname{LoRA}}$ can be written as, 
    \begin{equation}
        \widehat \bDelta_{\operatorname{LoRA}} = \trunk{\bDelta^\star \empcov_{\xx\xx}^{\frac{1}{2}} + \matnoise \XX^\dagger\empcov_{\xx\xx}^{\frac{1}{2}}} \rb{\empcov_{\xx\xx}^\frac{1}{2}}^\dagger. 
    \end{equation}
    In the overdetermined regime, we can further simplify the above equation to,
    \begin{equation}
        \widehat \bDelta_{\operatorname{LoRA}} = \trunk{\bDelta^\star\empcov_{\xx\xx}^{\frac{1}{2}} + \frac{1}{n}\matnoise\XX^\top\empcov_{\xx\xx}^{-\frac{1}{2}}}\empcov_{\xx\xx}^{-\frac{1}{2}}.
    \end{equation}
\end{lemma}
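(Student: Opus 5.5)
We substitute the data model directly into the expression for $\widehat{\bDelta}_{\operatorname{FFT}}$ and then push the result through the truncation formula of Prop.~\ref{lemma:lora_sol}. Writing $\matnoise = [\vecnoise_1,\dots,\vecnoise_n]$, we have $\YY = \AA^\star\XX + \matnoise$. Hence
\begin{equation*}
    \widehat{\bDelta}_{\operatorname{FFT}} = (\AA^\star\XX + \matnoise - \initmodel\XX)\XX^\dagger = \bDelta^\star \XX\XX^\dagger + \matnoise\XX^\dagger = \bDelta^\star\PP_{\XX} + \matnoise\XX^\dagger,
\end{equation*}
where $\PP_{\XX} = \XX\XX^\dagger$ is the orthogonal projection onto the column space of $\XX$. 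This is the same computation already carried out in the proof of Thm.~\ref{theorem:FullFT_lower_underdetermined}.

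The key step is to remove the projection after multiplying by $\empcov_{\xx\xx}^{\frac{1}{2}}$. We show that $\PP_{\XX}\empcov_{\xx\xx}^{\frac{1}{2}} = \empcov_{\xx\xx}^{\frac{1}{2}}$. The matrix $\empcov_{\xx\xx} = \frac{1}{n}\XX\XX^\top$ is symmetric PSD, and its range equals the column space of $\XX$. Its square root has the same eigenvectors and the same nonzero-eigenvalue eigenspaces, so $\image(\empcov_{\xx\xx}^{\frac{1}{2}}) = \image(\XX)$. Therefore the projection $\PP_{\XX}$ acts as the identity on it.

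This gives
\begin{equation*}
    \widehat{\bDelta}_{\operatorname{FFT}}\empcov_{\xx\xx}^{\frac{1}{2}} = \bDelta^\star\empcov_{\xx\xx}^{\frac{1}{2}} + \matnoise\XX^\dagger\empcov_{\xx\xx}^{\frac{1}{2}}.
\end{equation*}
Substituting into \eqref{eq:LoRA solution} yields the first claimed formula. Note that we use symmetry of $\PP_{\XX}$ only through the relation $\image(\empcov_{\xx\xx}^{\frac{1}{2}})\subseteq\image(\XX)$; no further property is needed.

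For the overdetermined regime ($n\ge d_x$), $\XX$ has full row rank almost surely. Consequently $\empcov_{\xx\xx}$ is invertible, so $(\empcov_{\xx\xx}^{\frac{1}{2}})^\dagger = \empcov_{\xx\xx}^{-\frac{1}{2}}$. Moreover,
\begin{equation*}
    \XX^\dagger = \XX^\top(\XX\XX^\top)^{-1} = \tfrac{1}{n}\XX^\top\empcov_{\xx\xx}^{-1}.
\end{equation*}
Hence
\begin{equation*}
    \matnoise\XX^\dagger\empcov_{\xx\xx}^{\frac{1}{2}} = \tfrac{1}{n}\matnoise\XX^\top\empcov_{\xx\xx}^{-1}\empcov_{\xx\xx}^{\frac{1}{2}} = \tfrac{1}{n}\matnoise\XX^\top\empcov_{\xx\xx}^{-\frac{1}{2}},
\end{equation*}
which gives the simplified form.

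The only step needing any care is the range identity $\image(\empcov_{\xx\xx}^{\frac{1}{2}}) = \image(\XX)$. The rest is algebraic substitution.
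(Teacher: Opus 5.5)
Your proof is correct and follows essentially the same route as the paper. You substitute $\YY = \AA^\star\XX + \matnoise$ into $\widehat{\bDelta}_{\operatorname{FFT}}$, note that the projection $\XX\XX^\dagger$ leaves the square-root covariance unchanged, and in the overdetermined case use $\XX^\dagger = \XX^\top(\XX\XX^\top)^{-1}$. Your version is slightly cleaner in three ways. You work with $\XX\XX^\dagger$ directly rather than the paper's $\XX(\XX^\top\XX)^{-1}\XX^\top$, which presumes full column rank, so the first identity holds in both regimes. You work with $\empcov_{\xx\xx}^{\frac{1}{2}}$ rather than $(\XX\XX^\top)^{\frac{1}{2}}$, which avoids the $\sqrt{n}$ rescaling through the truncation. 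And you prove the range identity that the paper only asserts.
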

\begin{proof}
    Recall that, 
\begin{equation}
    \widehat \bDelta_{\operatorname{LoRA}} = \trunk{\widehat \bDelta_{\operatorname{FFT}} \empcov_{\xx\xx}^{\frac{1}{2}}}\rb{\empcov_{\xx\xx}^{\frac{1}{2}}}^{\dagger} =
    \trunk{\rb{\YY-\initmodel\XX}\XX^\dagger \empcov_{\xx\xx}^{\frac{1}{2}}}\rb{\empcov_{\xx\xx}^{\frac{1}{2}}}^{\dagger}.
\end{equation}
Given the the linear relation $\YY = \AA^\star \XX + \matnoise$, we can simplify the term inside the truncation further,
\begin{align}
    \trunk{\rb{\YY-\initmodel\XX}\XX^\dagger \empcov_{\xx\xx}^{\frac{1}{2}}}\rb{\empcov_{\xx\xx}^{\frac{1}{2}}}^{\dagger} &= 
    \trunk{\rb{\AA^\star \XX + \matnoise-\initmodel\XX}\XX^\dagger \empcov_{\xx\xx}^{\frac{1}{2}}}\rb{\empcov_{\xx\xx}^{\frac{1}{2}}}^{\dagger} \nonumber \\
    &= \trunk{\rb{\bDelta^\star \XX + \matnoise}\XX^\dagger \empcov_{\xx\xx}^{\frac{1}{2}}}\rb{\empcov_{\xx\xx}^{\frac{1}{2}}}^{\dagger} \nonumber \\
    &= \trunk{\rb{\bDelta^\star \XX + \matnoise}\XX^\dagger \rb{\XX\XX^\top}^{\frac{1}{2}}}\rb{\rb{\XX\XX^\top}^\frac{1}{2}}^{\dagger} .
\end{align}
Now we expand the term inside the truncation,
\begin{align}
    \rb{\bDelta^\star \XX + \matnoise}\XX^\dagger \rb{\XX\XX^\top}^{\frac{1}{2}} &= \bDelta^\star\XX \XX^\dagger\rb{\XX\XX^\top}^{\frac{1}{2}}+\matnoise \XX^\dagger \rb{\XX\XX^\top}^{\frac{1}{2}} \nonumber \\
    &= \bDelta^\star\XX\rb{\XX^\top\XX}^{-1}\XX^\top\rb{\XX\XX^\top}^{\frac{1}{2}}+\matnoise \XX^\dagger \rb{\XX\XX^\top}^{\frac{1}{2}} \nonumber \\
    &= \bDelta^\star \rb{\XX\XX^\top}^{\frac{1}{2}} + \matnoise \XX^\dagger \rb{\XX\XX^\top}^{\frac{1}{2}} \nonumber \\
    &= \sqrt{n}\bDelta^\star \empcov_{\xx\xx}^{\frac{1}{2}} + \sqrt{n}\matnoise \XX^\dagger\empcov_{\xx\xx}^{\frac{1}{2}}.
\end{align}
Where in the third line we used the fact that $\XX\rb{\XX^\top\XX}^{-1}\XX^\top$ is the orthogonal projection onto the column space of $\XX$ and it leaves $(\XX\XX^\top)^{\frac{1}{2}}$ unchanged. Finally, we have this other explicit form for the $\widehat \bDelta_{\operatorname{LoRA}}$,
\begin{align}
    \widehat \bDelta_{\operatorname{LoRA}} &= \trunk{\sqrt{n}\bDelta^\star \empcov_{\xx\xx}^{\frac{1}{2}} + \sqrt{n}\matnoise \XX^\dagger\empcov_{\xx\xx}^{\frac{1}{2}}}\rb{\rb{\XX\XX^\top}^{\frac{1}{2}}}^{\dagger} \nonumber \\
    &= \trunk{\bDelta^\star \empcov_{\xx\xx}^{\frac{1}{2}} + \matnoise \XX^\dagger\empcov_{\xx\xx}^{\frac{1}{2}}} \rb{\empcov_{\xx\xx}^\frac{1}{2}}^\dagger. 
\end{align}
Where we used the property that $(c\DD)^\dagger=c^{-1}\DD^\dagger$ for some scaler $c$ and matrix $\DD$. For the more simplified case of overdetermined regime where $(\XX\XX^\top)$ is assumed to be invertible, we get the claimed result just by using the definition of $\XX^\dagger=\XX^\top(\XX\XX^\top)^{-1}$.
\end{proof}

\begin{lemma}\label{lemma:noise_upperbound}
    Let $\boldsymbol{\Psi} = \CovMat_{\vecnoise\vecnoise}^{-\frac{1}{2}} \matnoise$ be a whitened noise where $\matnoise \in \R^{d_y\times n}$ is the matrix containing $\vecnoise_i \sim \mathcal{N}( \zeroVec, \CovMat_{\vecnoise\vecnoise} ) $ as columns. Let $ \XX \in \R^{d_x \times n} $ be a matrix with rank $ d_x $. Define the projection matrix $\PP_{\XX} = \XX^\top (\XX\XX^\top)^{-1}\XX$, then
    \begin{equation}
        \E \sb{\norm{\boldsymbol{\Psi}\PP_{\XX} }^2 \,\middle|\, \XX } \leq 5 \max\{d_x,d_y\}.
    \end{equation}
\end{lemma}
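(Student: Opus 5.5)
Conditionally on $\XX$, the projection $\PP_{\XX}=\XX^\top(\XX\XX^\top)^{-1}\XX$ is a fixed $n\times n$ orthogonal projection of rank $d_x$. I will use rotational invariance to replace $\boldsymbol{\Psi}\PP_{\XX}$ by a Gaussian matrix of size $d_y\times d_x$, and then apply Lemma~\ref{lemma:gaussian_matrix_bound}.

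\emph{Step 1: the whitened noise is a standard Gaussian matrix independent of $\XX$.} The columns of $\boldsymbol{\Psi}=\CovMat_{\vecnoise\vecnoise}^{-\frac12}\matnoise$ are $\CovMat_{\vecnoise\vecnoise}^{-\frac12}\vecnoise_i\sim\mathcal N(\zeroVec,\II_{d_y})$, and they are i.i.d. Hence $\boldsymbol{\Psi}$ has i.i.d.\ $\mathcal N(0,1)$ entries. It is independent of $\XX$ because the noise is independent of the features. If $\CovMat_{\vecnoise\vecnoise}$ is singular, I would read $\CovMat_{\vecnoise\vecnoise}^{-\frac12}$ as a pseudo-inverse square root. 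In that case $\boldsymbol{\Psi}$ is standard Gaussian on a subspace and zero elsewhere, and the bound only becomes easier.

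\emph{Step 2: rotational reduction.} Since $\PP_{\XX}$ has rank $d_x$, I write $\PP_{\XX}=\UU\UU^\top$ with $\UU\in\R^{n\times d_x}$ having orthonormal columns; one can take $\UU=\XX^\top(\XX\XX^\top)^{-\frac12}$. Then
\[
\norm{\boldsymbol{\Psi}\PP_{\XX}}=\norm{\boldsymbol{\Psi}\UU\UU^\top}=\norm{\boldsymbol{\Psi}\UU},
\]
because $\UU^\top$ is a co-isometry, i.e.\ $\UU^\top\UU=\II$. Conditionally on $\XX$, the matrix $\GG:=\boldsymbol{\Psi}\UU\in\R^{d_y\times d_x}$ has i.i.d.\ $\mathcal N(0,1)$ entries. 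To see this, note that each row of $\boldsymbol{\Psi}$ is $\mathcal N(\zeroVec,\II_n)$ and the rows are independent. Multiplying a row by $\UU$ gives $\mathcal N(\zeroVec,\UU^\top\UU)=\mathcal N(\zeroVec,\II_{d_x})$.

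\emph{Step 3: conclude.} Applying Lemma~\ref{lemma:gaussian_matrix_bound} to $\GG$ gives
\[
\E\sb{\norm{\boldsymbol{\Psi}\PP_{\XX}}^2\,\middle|\,\XX}=\E\sb{\norm{\GG}^2\,\middle|\,\XX}\le 5\max\{d_x,d_y\}.
\]

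The only step that needs any care is Step~2, namely checking that $\GG$ is conditionally standard Gaussian with the correct dimension $d_x$ rather than $n$. This dimension reduction is what replaces $n$ by $d_x$ in the final bound. Everything else is a direct invocation of earlier lemmas.
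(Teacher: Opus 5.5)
Your proposal is correct and follows the paper's proof step for step. Both arguments factor $\PP_{\XX}=\UU\UU^\top$ with $\UU\in\R^{n\times d_x}$ having orthonormal columns, use $\norm{\boldsymbol{\Psi}\PP_{\XX}}=\norm{\boldsymbol{\Psi}\UU}$, show that $\boldsymbol{\Psi}\UU$ is a standard $d_y\times d_x$ Gaussian matrix, and then invoke Lemma~\ref{lemma:gaussian_matrix_bound}. The only difference is cosmetic: you check Gaussianity row by row via $\mathcal N(\zeroVec,\UU^\top\UU)$, while the paper does it through $\vectorization(\GG)=(\QQ^\top\otimes\II)\,\vectorization(\boldsymbol{\Psi})$.
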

\begin{proof}
{\bfseries Whitened noise and orthogonal projection.}
Assume the output noise vector $\vecnoise \in \R^{d_{y}}$ is Gaussian with covariance $\CovMat_{\vecnoise} \succ 0$. Define the whitened noise
\begin{equation}
    \boldsymbol{\psi} := \CovMat_{ \vecnoise \vecnoise }^{-\frac{1}{2}} \vecnoise,
    \qquad
    \E[\boldsymbol{\psi}]=\mathbf{0},
    \qquad
    \CovMat_{\boldsymbol{\psi}}=\II_{d_{y}}.
\end{equation}
Let $\boldsymbol{\Psi}\in\R^{d_{y}\times n}$ be the matrix whose $i$-th column is an independent copy of $\boldsymbol{\psi}$. Then the entries of $\boldsymbol{\Psi}$ are i.i.d.\ $\mathcal{N}(0,1)$. Let $\XX\in\R^{d_{x}\times n}$ be the data matrix and let $\PP_{\XX}\in\R^{n\times n}$ be the orthogonal projector onto $\operatorname{col}(\XX)$. Since we know that $\rank(\XX) \leq d_x$, there exists a matrix $\QQ\in\R^{n\times d_x}$ with orthonormal columns ($\QQ^\top\QQ=\II_{d_x}$) such that
\begin{equation}
    \PP_{\XX} = \QQ\QQ^\top .
\end{equation}

{\bfseries Reduction to a smaller Gaussian matrix.}
Write
\begin{equation}
\boldsymbol{\Psi}\PP_{\XX} \;=\; \boldsymbol{\Psi}\QQ\QQ^\top \;=\; \GG\,\QQ^\top,
\qquad \text{where} \qquad \GG := \boldsymbol{\Psi}\QQ \in \R^{d_{y}\times d_x}.
\end{equation}
Right multiplication by an orthogonal matrix preserves singular values, hence
\begin{equation}\label{eq:orth-invariance-sv}
    \norm{\boldsymbol{\Psi}\PP_{\XX}} = \norm{\GG\QQ^\top} = \norm{\GG}.
\end{equation}
Now we show that $\GG$ is Gaussian with i.i.d.\ $\mathcal{N}(0,1)$ entries. Let $\qq_1,\hdots,\qq_{d_x}$ denote the columns of $\QQ$. Then
\begin{equation}
\GG = \big[\boldsymbol{\Psi}\qq_1, \dots, \boldsymbol{\Psi}\qq_{d_x}\big].
\end{equation}
Fix $j\in\{1,\dots,d_x\}$. Since $\boldsymbol{\Psi}$ has i.i.d.\ $\mathcal{N}(0,1)$ entries, $\boldsymbol{\Psi}\qq_j$ is a linear transformation of a standard Gaussian vector and hence is Gaussian. Now we need to compute its mean and covariance. We take a unified approach by looking at the vectorization of $\GG$, denoted by $ \vectorization(\GG) $. First, note that $ \vectorization(\GG) = \vectorization(\boldsymbol{\Psi} \QQ) = (\QQ^{\top}\otimes \II) \, \vectorization(\boldsymbol{\Psi} ) $, where $\otimes$ is the Kronecker product. Then,
\begin{equation}
    \E[\vectorization(\GG) ] = \QQ^{\top}\otimes \II \, \E[\vectorization(\boldsymbol{\Psi} ) ]= \zeroVec,
\end{equation}
and
\begin{align}
    \Cov(\vectorization(\GG))
    & = \E[\vectorization(\GG)\vectorization(\GG)^{\top}] \nonumber \\
    & = \E\left[ \left( \QQ^{\top}\otimes \II \, \vectorization(\boldsymbol{\Psi} ) \right) \left( \QQ^{\top}\otimes \II \, \vectorization(\boldsymbol{\Psi} ) \right)^{\top} \right] \nonumber \\
    & = \E\left[  \QQ^{\top}\otimes \II \, \vectorization(\boldsymbol{\Psi} ) \vectorization(\boldsymbol{\Psi} )^\top \left( \QQ^{\top}\otimes \II \,  \right)^{\top} \right] \nonumber \\
    & =  \QQ^{\top}\otimes \II \, \E\left[ \vectorization(\boldsymbol{\Psi} ) \vectorization(\boldsymbol{\Psi} )^\top \right] \QQ \otimes \II.
\end{align}
Observe that $ \E[ \vectorization(\boldsymbol{\Psi} ) \vectorization(\boldsymbol{\Psi} )^\top ] = \II $, then
\begin{equation}
    \Cov(\vectorization(\GG)) =  \left( \QQ^{\top}\otimes \II \right) \left( \QQ \otimes \II \right)
    = \left( \QQ^{\top}\QQ \right) \otimes  \II = \II \otimes \II = \II .
\end{equation}
Equivalently, $\GG$ is a $d_{y}\times d_x$ standard Gaussian matrix.

{\bfseries Bounding the conditional second moment of the operator norm.}
By \eqref{eq:orth-invariance-sv},
\begin{equation}
    \E \left[\norm{\boldsymbol{\Psi}\PP_{\XX} }^2 \,\middle|\, \XX \right]
    = \E\left[\norm{\GG}^2\right].
\end{equation}
Since $\GG$ is a matrix with i.i.d.\ standard Gaussian entries, we can use Lemma~\ref{lemma:gaussian_matrix_bound} and we have
\begin{equation}
    \E\sb{\norm{\GG}^2} \leq 5\max\{d_x,d_y\}
\end{equation}
\end{proof}

\begin{lemma}\label{lemma:lowrank_decomposition}
Let $ \AA,\BB \in \R^{d_x \times d_y} $. Assume that $ \rank(\AA) \leq r $, then for any PSD matrix $ \CovMat_{\xx\xx}^{\frac{1}{2}} $
\begin{equation}\label{eq:three_terms}
    \fnorm{(\AA - \BB) \CovMat_{\xx\xx}^{\frac{1}{2}} }^2
    \leq 8r \rb{\norm{\rb{\AA-\BB}\CovMat^\frac{1}{2}_{\xx \xx}}^2 + \norm{\rb{\BB- \BB_r}\CovMat^\frac{1}{2}_{\xx \xx}}^2} + 2\fnorm{\rb{\BB-\BB_r} \CovMat^\frac{1}{2}_{\xx \xx}}^2,
\end{equation}
where $ \BB_r = \trunk{\BB} $.
\end{lemma}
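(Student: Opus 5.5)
Write $\SS := \CovMat_{\xx\xx}^{\frac{1}{2}}$ only informally; in what follows I keep $\CovMat^{\frac{1}{2}}_{\xx\xx}$ explicit. The plan is to split $\AA-\BB$ around the truncation $\BB_r$:
\begin{equation*}
    (\AA - \BB)\CovMat^{\frac{1}{2}}_{\xx\xx} = (\AA - \BB_r)\CovMat^{\frac{1}{2}}_{\xx\xx} - (\BB - \BB_r)\CovMat^{\frac{1}{2}}_{\xx\xx}.
\end{equation*}
Then I apply $\fnorm{\MM_1 + \MM_2}^2 \leq 2\fnorm{\MM_1}^2 + 2\fnorm{\MM_2}^2$. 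The second piece directly gives the term $2\fnorm{(\BB-\BB_r)\CovMat^{\frac{1}{2}}_{\xx\xx}}^2$ in \eqref{eq:three_terms}. All remaining work is on the first piece, which is where the low-rank structure helps.

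The key observation is that $\AA - \BB_r$ has rank at most $2r$, because $\rank(\AA)\le r$ and $\rank(\BB_r)\le r$. Hence $(\AA-\BB_r)\CovMat^{\frac{1}{2}}_{\xx\xx}$ also has rank at most $2r$. For any matrix of rank at most $k$ we have $\fnorm{\MM}^2 \leq k\norm{\MM}^2$, since the squared Frobenius norm is the sum of at most $k$ squared singular values. This gives
\begin{equation*}
    2\fnorm{(\AA-\BB_r)\CovMat^{\frac{1}{2}}_{\xx\xx}}^2 \leq 4r\norm{(\AA-\BB_r)\CovMat^{\frac{1}{2}}_{\xx\xx}}^2 .
\end{equation*}
Next I go back to $\AA-\BB$ in operator norm. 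Write $\AA - \BB_r = (\AA-\BB) + (\BB-\BB_r)$ and use the triangle inequality with $(a+b)^2\le 2a^2+2b^2$:
\begin{equation*}
    \norm{(\AA-\BB_r)\CovMat^{\frac{1}{2}}_{\xx\xx}}^2 \leq 2\norm{(\AA-\BB)\CovMat^{\frac{1}{2}}_{\xx\xx}}^2 + 2\norm{(\BB-\BB_r)\CovMat^{\frac{1}{2}}_{\xx\xx}}^2 .
\end{equation*}
Multiplying by $4r$ gives exactly the coefficient $8r$ in front of the two operator-norm terms. Adding the Frobenius remainder yields \eqref{eq:three_terms}.

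The only point needing care is the rank count. Right-multiplying by the PSD matrix $\CovMat^{\frac{1}{2}}_{\xx\xx}$ cannot increase rank, and the bound $\rank(\AA-\BB_r)\le 2r$ is subadditivity of rank. Notably, no optimality property of $\BB_r$ (Eckart--Young) is used in this lemma. It only matters that $\BB_r$ has rank at most $r$, so the argument holds for any rank-$r$ surrogate. The specific choice $\BB_r=\trunk{\BB}$ matters later, when this lemma is combined with Lemma~\ref{lemma:perturbation_bound} and Lemma~\ref{lemma:young_rank} to control the tail terms. I do not expect a real obstacle; the step most worth checking is that the constants combine to $8r$ and $2$ as stated.
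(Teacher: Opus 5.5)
Your proposal is correct and follows essentially the same route as the paper: split around $\BB_r$, apply $\fnorm{\cdot}^2 \le 2r\norm{\cdot}^2$ to the rank-at-most-$2r$ matrix $(\AA-\BB_r)\CovMat^{\frac{1}{2}}_{\xx\xx}$, then return to $\AA-\BB$ via the triangle inequality, with the constants $8r$ and $2$ combining exactly as you describe. You also spell out the rank-$2r$ justification, which the paper uses without comment.
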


\begin{proof}
Denote $ \BB_r = \trunk{\BB} $. We write: 
\begin{align}
    \fnorm{\rb{\AA - \BB}\CovMat^\frac{1}{2}_{\xx \xx}}^2
    &= \fnorm{\rb{\AA - \BB_r +\BB_r - \BB}\CovMat^\frac{1}{2}_{\xx \xx}}^2 \nonumber \\
    &\leq 2\fnorm{\rb{\AA - \BB_r}\CovMat^\frac{1}{2}_{\xx \xx}}^2 + 2\fnorm{\rb{\BB_r-\BB}\CovMat^\frac{1}{2}_{\xx \xx}}^2 . \label{eq:rank1}
\end{align}
Now we bound the first term: 
\begin{align}
    2\fnorm{\rb{\AA - \BB_r}\CovMat^\frac{1}{2}_{\xx \xx}}^2 
    &\leq 4r \norm{\rb{\AA - \BB_r}\CovMat^\frac{1}{2}_{\xx \xx}}^2\nonumber\\
    &= 4r \norm{\rb{\AA-\BB+\BB - \BB_r}\CovMat^\frac{1}{2}_{\xx \xx}}^2\nonumber\\
    &\leq 8r \norm{\rb{\AA-\BB}\CovMat^\frac{1}{2}_{\xx \xx}}^2 + 8r\norm{\rb{\BB - \BB_r}\CovMat^\frac{1}{2}_{\xx \xx}}^2.
\end{align}
By substituting this result into \eqref{eq:rank1} we have:
\begin{align}
    \fnorm{\rb{\AA - \BB}\CovMat^\frac{1}{2}_{\xx \xx}}^2
    \leq & 8r \rb{\norm{\rb{\AA-\BB}\CovMat^\frac{1}{2}_{\xx \xx}}^2 + \norm{\rb{\BB- \BB_r}\CovMat^\frac{1}{2}_{\xx \xx}}^2} \nonumber\\
    & 
    + 2\fnorm{\rb{\BB - \BB_r}\CovMat^\frac{1}{2}_{\xx \xx}}^2.
\end{align}
\end{proof}

\subsection{Overdetermined regime}\label{app:Overdetermined Regime}
\LoRAExcessOver*

\begin{proof}
Recall from Lemma \ref{lemma:general_risk} for any estimator $\AA$,
\begin{equation}
    \risk(\AA) - \risk (\AA^\star) = \fnorm{\rb{\AA-\AA^\star}\CovMat_{\xx\xx}^\frac{1}{2}}^2.
\end{equation}
Let $\AA=\lorasol=\initmodel+\widehat \bDelta_{\operatorname{LoRA}}$ where $\rank(\widehat \bDelta_{\operatorname{LoRA}})=r$. Then, 
\begin{align}\label{eq:app_lora_overdetermined_maineq}
    \risk(\lorasol) - \risk (\AA^\star) 
    = \fnorm{\rb{\AA-\AA^\star}\CovMat_{\xx\xx}^\frac{1}{2}}^2 &
    = \fnorm{ \rb{\initmodel+\widehat \bDelta_{\operatorname{LoRA}}-\AA^\star}\CovMat_{\xx\xx}^\frac{1}{2}}^2 \nonumber \\
    & = \fnorm{\rb{\widehat \bDelta_{\operatorname{LoRA}}-\bDelta^\star}\CovMat_{\xx\xx}^\frac{1}{2}}^2.
\end{align}
Using Lemma \ref{lemma:lowrank_decomposition} with $\bDelta^\star_r = \trunk{\bDelta^\star}$ we have, 
\begin{align}
    \fnorm{\rb{\widehat \bDelta_{\operatorname{LoRA}}-\bDelta^\star}\CovMat_{\xx\xx}^\frac{1}{2}}^2 
    & \leq 8r \rb{\norm{\rb{\widehat \bDelta_{\operatorname{LoRA}}-\bDelta^\star}\CovMat_{\xx\xx}^\frac{1}{2}}^2 + \norm{\rb{\bDelta^\star-\bDelta^\star_r}\CovMat_{\xx\xx}^\frac{1}{2}}^2} \nonumber \\
    & \hspace{3cm} + 2\fnorm{\rb{\bDelta^\star-\bDelta^\star_r}\CovMat_{\xx\xx}^\frac{1}{2}}^2.
\end{align}
Note that in the above inequality, the second and third terms are the same, just in different norms. We start by bounding the first term by replacing the explicit form of $\widehat \bDelta_{\operatorname{LoRA}}$ from Lemma \ref{lemma:lowrank_second_form} as follows: 
\begin{align}
    \norm{\rb{\widehat \bDelta_{\operatorname{LoRA}}-\bDelta^\star}\CovMat^\frac{1}{2}_{\xx \xx}}^2 
    & = \norm{\rb{\trunk{\bDelta^\star  \empcov_{\xx \xx}^{\frac{1}{2}} + \frac{1}{n}\matnoise\XX^\top\empcov_{\xx \xx}^{-\frac{1}{2}}} \empcov_{\xx \xx}^{-\frac{1}{2}}- \bDelta^\star}\CovMat^\frac{1}{2}_{\xx \xx}}^2\nonumber \\
    & = \norm{\rb{\trunk{\bDelta^\star  \empcov_{\xx \xx}^{\frac{1}{2}} + \frac{1}{n}\matnoise\XX^\top\empcov_{\xx \xx}^{-\frac{1}{2}}} - \bDelta^\star \empcov_{\xx \xx}^{\frac{1}{2}}}\empcov_{\xx \xx}^{-\frac{1}{2}}\CovMat^\frac{1}{2}_{\xx \xx}}^2\nonumber \\
    & \leq \norm{\trunk{\bDelta^\star  \empcov_{\xx \xx}^{\frac{1}{2}} + \frac{1}{n}\matnoise\XX^\top\empcov_{\xx \xx}^{-\frac{1}{2}}}- \bDelta^\star \empcov_{\xx \xx}^{\frac{1}{2}}}^2 \norm{\empcov_{\xx \xx}^{-\frac{1}{2}}\CovMat^\frac{1}{2}_{\xx \xx}}^2 \nonumber\\
    &\leq \rb{4 \norm{\frac{1}{n}\matnoise\XX^\top\empcov_{\xx \xx}^{-\frac{1}{2}}}^2  + 2\sigma^2_{r+1}\rb{\bDelta^\star  \empcov_{\xx \xx}^{\frac{1}{2}}}}\norm{\empcov_{\xx \xx}^{-\frac{1}{2}}\CovMat^\frac{1}{2}_{\xx \xx}}^2  \nonumber \\
    &= \rb{\frac{4}{n} \norm{ \frac{1}{\sqrt{n}} \matnoise\XX^\top\empcov_{\xx \xx}^{-\frac{1}{2}} }^2 + 2\sigma^2_{r+1}\rb{\bDelta^\star  \empcov_{\xx \xx}^{\frac{1}{2}}}}\norm{\empcov_{\xx \xx}^{-\frac{1}{2}}\CovMat^\frac{1}{2}_{\xx \xx}}^2.\label{eq:rank2}
\end{align}
Where in the fourth line we used Lemma \ref{lemma:perturbation_bound}. Note that
\begin{align}
    \norm{ \frac{1}{\sqrt{n}} \matnoise\XX^\top\empcov_{\xx \xx}^{-\frac{1}{2}} }^2
    & = \lambda_{\max} \left( \frac{1}{n} \matnoise\XX^\top\empcov_{\xx \xx}^{-1} \XX\matnoise^\top  \right) \nonumber \\
    & = \lambda_{\max} \left( \matnoise\XX^\top \left(\XX  \XX^\top\right)^{-1} \XX \matnoise^\top  \right) \nonumber \\
    & = \lambda_{\max} \left( \matnoise \PP_{\XX} \matnoise^\top  \right).
\end{align}
Here $ \PP_{\XX} = \XX^\top \left(\XX  \XX^\top\right)^{-1} \XX $ is an orthogonal projection matrix onto the row space of $\XX$. If we assume Gaussian weight noise, then we can give an exact expression which does not depend on $\XX$ (in any case, we should continue developing this term). Moreover,
\begin{equation}
    \norm{\empcov_{\xx \xx}^{-\frac{1}{2}}\CovMat^\frac{1}{2}_{\xx \xx}}^2 = \lambda_{\max} \left(\CovMat^\frac{1}{2}_{\xx \xx} \empcov_{\xx \xx}^{-1}\CovMat^\frac{1}{2}_{\xx \xx} \right) =\lambda_{\max}\left(\CovMat_{\xx \xx} \empcov_{\xx \xx}^{-1} \right),
\end{equation}
where in the second step, we use the fact that $ \CovMat^\frac{1}{2}_{\xx \xx} \empcov_{\xx \xx}^{-1}\CovMat^\frac{1}{2}_{\xx \xx} $ and $ \CovMat_{\xx \xx} \empcov_{\xx \xx}^{-1} $ are similar matrices. Overall,
\begin{equation}
    8r \norm{\rb{\widehat \bDelta_{\operatorname{LoRA}}-\bDelta^\star}\CovMat^\frac{1}{2}_{\xx \xx}}^2
    \leq \rb{32\frac{r}{n} \lambda_{\max} \rb{\matnoise \PP_{\XX} \matnoise^\top} + 16r \sigma^2_{r+1}\rb{\bDelta^\star \empcov^{\frac{1}{2}}_{\xx\xx}}}\lambda_{\max}\left(\CovMat_{\xx \xx} \empcov_{\xx \xx}^{-1} \right).
\end{equation}
Now we proceed with upper bounding the term $\lambda_{\max} \left( \matnoise \PP_{\XX} \matnoise^\top  \right)$.
\begin{align}
    \lambda_{\max} \left( \matnoise \PP_{\XX} \matnoise^\top  \right)
    = \lambda_{\max} \left( \matnoise \PP_{\XX} \PP_{\XX}^{\top} \matnoise^\top  \right)= \norm{\matnoise \PP_{\XX}}^2  = \norm{\CovMat_{\vecnoise\vecnoise}^{\frac{1}{2}} \CovMat_{\vecnoise\vecnoise}^{-\frac{1}{2}}  \matnoise \PP_{\XX} }^2 
     = \lambda_{\max}\left( \CovMat_{\vecnoise\vecnoise} \right) \norm{ \CovMat_{\vecnoise\vecnoise}^{-\frac{1}{2}}  \matnoise \PP_{\XX} }^2.
\end{align}
Note that $ \boldsymbol{\Psi} =  \CovMat_{\vecnoise\vecnoise}^{-\frac{1}{2}}  \matnoise $ is white noise ($\CovMat_{\boldsymbol{\psi}} = \II$). Now we proceed by taking expectation with respect to the randomness of the training dataset $\XX,\matnoise$:
\begin{align}\label{eq:rank3}
    & \E\sb{8r \norm{\rb{\widehat \bDelta_{\operatorname{LoRA}}-\bDelta^\star}\CovMat^\frac{1}{2}_{\xx \xx}}^2 } \\
    & \quad \leq 32\frac{r}{n}\E\sb{ \lambda_{\max} \rb{\matnoise \PP_{\XX} \matnoise^\top} \lambda_{\max}\rb{\CovMat_{\xx \xx} \empcov_{\xx \xx}^{-1}}}+ 16r \E\sb{\sigma^2_{r+1}\rb{\AA^\star \empcov^{\frac{1}{2}}_{\xx\xx}}\lambda_{\max}\rb{\CovMat_{\xx \xx} \empcov_{\xx \xx}^{-1}}} \nonumber \\
    & \quad\leq 32\frac{r}{n}\lambda_{\max}\left( \CovMat_{\vecnoise\vecnoise} \right) \E\sb{ \| \boldsymbol{\Psi} \PP_{\XX}  \|^2 \lambda_{\max}\left(\CovMat_{\xx \xx} \empcov_{\xx \xx}^{-1} \right)}+ 16r \E\sb{\sigma^2_{r+1}\rb{\AA^\star \empcov^{\frac{1}{2}}_{\xx\xx}}\lambda_{\max}\rb{\CovMat_{\xx \xx} \empcov_{\xx \xx}^{-1}}} \nonumber \\
    & \quad = 32\frac{r}{n}\lambda_{\max}\left( \CovMat_{\vecnoise\vecnoise} \right) \E\sb{ \E\sb{\norm{\boldsymbol{\Psi} \PP_{\XX}}^2 \middle| \XX}\lambda_{\max}\left(\CovMat_{\xx \xx} \empcov_{\xx \xx}^{-1} \right)} + 16r \E\sb{\sigma^2_{r+1}\rb{\AA^\star \empcov^{\frac{1}{2}}_{\xx\xx}}\lambda_{\max}\rb{\CovMat_{\xx \xx} \empcov_{\xx \xx}^{-1}}}. \nonumber 
\end{align}
Now we need to carefully bound the term $\E\sb{\norm{\boldsymbol{\Psi} \PP_{\XX}}^2 \middle| \XX}$. Using Lemma \ref{lemma:noise_upperbound} we have, 
\begin{equation}
    \E\sb{\norm{\boldsymbol{\Psi}\PP_{\XX} }^2 \,\middle|\, \XX } \leq 5 \max\{d_x,d_y\}.
\end{equation}
Putting back these results into \eqref{eq:rank3} gives us:
\begin{align}
    \E\sb{8r \norm{\rb{\widehat \bDelta_{\operatorname{LoRA}}-\bDelta^\star}\CovMat^\frac{1}{2}_{\xx \xx}}^2 } 
    & \leq 160\frac{r\cdot\max\{d_x,d_y\}}{n}\E\sb{\lambda_{\max}\rb{\CovMat_{\xx \xx} \empcov_{\xx \xx}^{-1}}}\lambda_{\max}\left( \CovMat_{\vecnoise\vecnoise} \right) \nonumber \\
    & \quad + 16r \E\sb{\sigma^2_{r+1}\rb{\bDelta^\star \empcov^{\frac{1}{2}}_{\xx\xx}}\lambda_{\max}\rb{\CovMat_{\xx \xx} \empcov_{\xx \xx}^{-1}}}. 
\end{align}
Now it remains to bound the term $\norm{\rb{\bDelta^\star-\bDelta^\star_r}\CovMat^\frac{1}{2}_{\xx \xx}}^2$ and $\fnorm{\rb{\bDelta^\star-\bDelta^\star_r}\CovMat^\frac{1}{2}_{\xx \xx}}^2$ from~\eqref{eq:three_terms}. Recall that $\bDelta^\star_r := \trunk{\bDelta^\star}$.
\begin{align}
    \norm{\rb{\bDelta^\star-\bDelta^\star_r}\CovMat^\frac{1}{2}_{\xx \xx}}^2 
    & = \norm{\rb{\bDelta^\star-\trunk{\bDelta^\star}}\CovMat_{\xx \xx}^{\frac{1}{2}}}^2 \nonumber \\
    & \leq \norm{\bDelta^\star-\trunk{\bDelta^\star}}^2 \lambda_{\max} \rb{\CovMat_{\xx \xx}} \nonumber \\
    & =  \sigma^2_{r+1}\rb{\bDelta^\star}\lambda_{\max} \rb{\CovMat_{\xx \xx}}.
\end{align}
Where in the last line we used Lemma \ref{lemma:young_rank}. Next, we have:
\begin{align}
    \fnorm{\rb{\bDelta^\star- \bDelta^\star_r}\CovMat^\frac{1}{2}_{\xx \xx}}^2 &= \fnorm{\rb{\bDelta^\star-\trunk{\bDelta^\star}}\CovMat^\frac{1}{2}_{\xx \xx}}^2 \leq \lambda_{\max}\rb{\CovMat_{\xx \xx}} \fnorm{\bDelta^\star-\trunk{\bDelta^\star}}^2 \nonumber \\
    & =\lambda_{\max}\rb{\CovMat_{\xx \xx}}\rb{\sum_{i=r+1}^{\min\{d_x,d_y\}} \sigma^2_{i}\rb{\bDelta^\star}}.
\end{align}
Putting back all results into \eqref{eq:app_lora_overdetermined_maineq} gives us:
\begin{align}
    & \E\sb{\risk(\lorasol)-\risk(\AA^\star)} \nonumber \\
    & \leq 160\frac{r\max\{d_x,d_y\}}{n}\E\sb{\lambda_{\max}\rb{\CovMat_{\xx \xx} \empcov_{\xx \xx}^{-1}}}\lambda_{\max}\rb{\CovMat_{\vecnoise\vecnoise}}+ 8r\sigma^2_{r+1}\rb{\bDelta^\star}\lambda_{\max} \rb{\CovMat_{\xx \xx}} \nonumber \\
    & \quad +2\lambda_{\max}\rb{\CovMat_{\xx \xx}}\rb{\sum_{i=r+1}^{\min\{d_x,d_y\}} \sigma^2_{i}\rb{\bDelta^\star}} + 16r \E\sb{\sigma^2_{r+1}\rb{\bDelta^\star \empcov^{\frac{1}{2}}_{\xx\xx}}\lambda_{\max}\rb{\CovMat_{\xx \xx} \empcov_{\xx \xx}^{-1}}}\nonumber\\ \nonumber\\
    &\leq 160\frac{r\max\{d_x,d_y\}}{n}\E\sb{\lambda_{\max}\rb{\CovMat_{\xx \xx} \empcov_{\xx \xx}^{-1}}}\lambda_{\max}\rb{\CovMat_{\vecnoise\vecnoise}} + 8r\sigma^2_{r+1}\rb{\bDelta^\star}\lambda_{\max} \rb{\CovMat_{\xx \xx}} \nonumber \\
    & \quad +2\lambda_{\max}\rb{\CovMat_{\xx \xx}}\rb{\sum_{i=r+1}^{\min\{d_x,d_y\}} \sigma^2_{i}\rb{\bDelta^\star}} + 16r\sigma^2_{r+1}\rb{\bDelta^\star} \E\sb{ \lambda_{\max}\rb{\empcov_{\xx\xx}}\lambda_{\max}\rb{\CovMat_{\xx \xx} \empcov_{\xx \xx}^{-1}}} \nonumber\\ \nonumber \\
    &= 160\frac{r\max\{d_x,d_y\}}{n}\E\sb{\lambda_{\max}\rb{\CovMat_{\xx \xx} \empcov_{\xx \xx}^{-1}}}\lambda_{\max}\rb{\CovMat_{\vecnoise\vecnoise}} \nonumber \\
    &\quad + 8r\sigma^2_{r+1}\rb{\bDelta^\star}\rb{\lambda_{\max}\rb{\CovMat_{\xx \xx}}+2 \E\sb{ \lambda_{\max}\rb{\empcov_{\xx\xx}}\lambda_{\max}\rb{\CovMat_{\xx \xx} \empcov_{\xx \xx}^{-1}}}}\nonumber \\
    & \quad +2\lambda_{\max}\rb{\CovMat_{\xx \xx}}\rb{\sum_{i=r+1}^{\min\{d_x,d_y\}} \sigma^2_{i}\rb{\bDelta^\star}}.
\end{align}
Where in the second line we used the fact that $\sigma_{i}\rb{\AA\BB} \leq \sigma_i(\AA)\lambda_{\max}(\BB)$ for some matrix $\AA$ and a PSD matrix $\BB$.
\end{proof}

\LoRAvarGaussOver*

\begin{proof}
From the variance term in \eqref{eq:theorem:lora_excessrisk_overdetermined}, we need to control
\begin{equation}
    \E\left[ \lambda_{\max}\left( \CovMat_{\xx\xx}\empcov_{\xx\xx}^{-1} \right) \right].
\end{equation}
Since $\xx_i \sim \mathcal{N}(\zeroVec,\CovMat_{\xx\xx})$, we may write $\xx_i = \CovMat_{\xx\xx}^{\frac{1}{2}}\zz_i$ with $\zz_i \sim \mathcal{N}(\zeroVec,\II)$.
Then
\begin{equation}
    \empcov_{\xx\xx} = \frac{1}{n}\sum_{i=1}^n \xx_i \xx_i^\top 
    = \CovMat_{\xx\xx}^{\frac{1}{2}} \left(\frac{1}{n}\sum_{i=1}^n \zz_i \zz_i^\top\right) \CovMat_{\xx\xx}^{\frac{1}{2}},
\end{equation}
and hence
\begin{equation}
    \CovMat_{\xx\xx} \empcov_{\xx\xx}^{-1}
    = \CovMat_{\xx\xx}^{\frac{1}{2}}
    \left(\frac{1}{n}\sum_{i=1}^n \zz_i \zz_i^\top\right)^{-1}
    \CovMat_{\xx\xx}^{-\frac{1}{2}}.
\end{equation}
Therefore,
\begin{equation}
    \lambda_{\max}\left( \CovMat_{\xx\xx} \empcov_{\xx\xx}^{-1} \right)
    = \lambda_{\max} \left( \left(\frac{1}{n}\sum_{i=1}^n \zz_i \zz_i^\top\right)^{-1} \right)
    = \frac{1}{\lambda_{\min} \left(\frac{1}{n}\sum_{i=1}^n \zz_i \zz_i^\top\right)}.
\end{equation}
Let $\ZZ=[\zz_1,\dots,\zz_n] \in \R^{d_x\times n}$. Then $\frac{1}{n}\ZZ\ZZ^\top$ is a Wishart matrix. From \citep{silverstein1985smallest}, when $n>d_x+1$, the smallest eigenvalue satisfies
\begin{equation}
    \lambda_{\min}\left(\frac{1}{n}\ZZ\ZZ^\top\right)
    \xrightarrow{\text{a.s.}}
    \bigg(1-\sqrt{c}\bigg)^2
    \quad\text{as } d_x,n\to\infty \text{ with } d_x/n \to c<1.
\end{equation}
Consequently,
\begin{equation}
    \E\left[ \lambda_{\max}\left( \CovMat_{\xx\xx}\empcov_{\xx\xx}^{-1} \right) \right]
    \asymp
    \bigg(1-\sqrt{\frac{d_x}{n}}\bigg)^{-2}.
\end{equation}
Substituting this into the variance term yields
\begin{equation}
    \frac{r\max\{d_x,d_y\}}{n}
    \lambda_{\max}\big( \CovMat_{\vecnoise\vecnoise} \big)
    \E\left[ \lambda_{\max}\left( \CovMat_{\xx\xx}\empcov_{\xx\xx}^{-1} \right) \right]  \asymp r\max\{d_x,d_y\}
    \big(\sqrt{n}-\sqrt{d_x}\big)^{-2} \lambda_{\max}\big(\CovMat_{\vecnoise\vecnoise}\big),
\end{equation}
which proves the claim.
\end{proof}

\subsection{Underdetermined regime}\label{subsection:app_lora_under}

\LoRAExcessUnder*

\begin{proof}
Recall from Lemma \ref{lemma:general_risk} for any estimator $\AA$,
\begin{equation}
    \risk(\AA) - \risk (\AA^\star) = \trace\rb{\rb{\AA-\AA^\star}\rb{\AA-\AA^\star}\CovMat_{\xx\xx}} = \fnorm{\rb{\AA-\AA^\star}\CovMat_{\xx\xx}^\frac{1}{2}}^2.
\end{equation}
Let $\AA=\lorasol=\initmodel+\widehat \bDelta_{\operatorname{LoRA}}$ where $\rank(\widehat \bDelta_{\operatorname{LoRA}})=r$. Then, 
\begin{align}
    \risk(\lorasol) - \risk (\AA^\star)=
    \fnorm{\rb{\AA-\AA^\star}\CovMat_{\xx\xx}^\frac{1}{2}}^2 & = \fnorm{\rb{\initmodel+\widehat \bDelta_{\operatorname{LoRA}}-\AA^\star}\CovMat_{\xx\xx}^\frac{1}{2}}^2 \nonumber \\
    & = \fnorm{\rb{\widehat \bDelta_{\operatorname{LoRA}}-\bDelta^\star}\CovMat_{\xx\xx}^\frac{1}{2}}^2.
\end{align}
We write,
\begin{align}
    \fnorm{\rb{\widehat \bDelta_{\operatorname{LoRA}}-\bDelta^\star}\CovMat_{\xx\xx}^\frac{1}{2}}^2 &= \fnorm{\rb{\widehat \bDelta_{\operatorname{LoRA}} - \bDelta^\star\PP_{\XX} + \bDelta^\star\PP_{\XX} - \bDelta^\star}\CovMat_{\xx\xx}^\frac{1}{2}}^2 \nonumber \\
    &\leq 2\fnorm{\rb{\widehat \bDelta_{\operatorname{LoRA}} - \bDelta^\star\PP_{\XX}}\CovMat_{\xx\xx}^\frac{1}{2}}^2
    +  2\fnorm{\rb{\bDelta^\star\PP_{\XX} - \bDelta^\star}\CovMat_{\xx\xx}^\frac{1}{2}}^2 \nonumber \\
    &= 2\fnorm{\rb{\widehat \bDelta_{\operatorname{LoRA}} - \bDelta^\star\PP_{\XX}}\CovMat_{\xx\xx}^\frac{1}{2}}^2
    +  2\fnorm{\bDelta^\star\PP_{\XX}^{\perp}\CovMat_{\xx\xx}^\frac{1}{2}}^2.
\end{align}
We now try to bound the first term. Using Lemma \ref{lemma:lowrank_decomposition} with $\AA=\widehat \bDelta_{\operatorname{LoRA}}$ and $\BB =\bDelta^\star\PP_{\XX}$ where we know $\rank(\widehat \bDelta_{\operatorname{LoRA}})=r$ we have, 
\begin{align}
    & \fnorm{{\rb{\widehat \bDelta_{\operatorname{LoRA}}-\bDelta^\star\PP_{\XX}}\CovMat_{\xx\xx}^\frac{1}{2}}}^2
    \nonumber \\
    & \quad \leq 8r\rb{\underbrace{\norm{\rb{\widehat \bDelta_{\operatorname{LoRA}}-\bDelta^\star\PP_{\XX}}\CovMat_{\xx\xx}^\frac{1}{2}}^2}_{\circled{1}} + \underbrace{\norm{\rb{\bDelta^\star\PP_{\XX}-\trunk{\bDelta^\star\PP_{\XX}}}\CovMat_{\xx\xx}^\frac{1}{2}}^2}_{\circled{2}}} \nonumber \\
    & \qquad + 2\underbrace{\fnorm{\rb{\bDelta^\star\PP_{\XX}-\trunk{\bDelta^\star\PP_{\XX}}}\CovMat_{\xx\xx}^\frac{1}{2}}^2}_{\circled{3}}.
\end{align}
We start by bounding the term $\circled{1}$,
\begin{equation}
    \circled{1} = \norm{\rb{\trunk{\bDelta^\star \empcov_{\xx\xx}^{\frac{1}{2}} + \matnoise \XX^\dagger\empcov_{\xx\xx}^{\frac{1}{2}}} \rb{\empcov_{\xx\xx}^\frac{1}{2}}^\dagger-\bDelta^\star\PP_{\XX}}\CovMat^\frac{1}{2}_{\xx \xx}}^2.
\end{equation}
Note that $\PP_{\XX} =  \empcov_{\xx\xx}^\frac{1}{2}\rb{\empcov_{\xx\xx}^\frac{1}{2}}^\dagger$, therefore
\begin{align}
    \circled{1} 
    & = \norm{\rb{\trunk{\bDelta^\star \empcov_{\xx\xx}^\frac{1}{2} + \matnoise \XX^\dagger \empcov_{\xx\xx}^\frac{1}{2}}-\bDelta^\star\empcov_{\xx\xx}^\frac{1}{2}}\rb{\empcov_{\xx\xx}^\frac{1}{2}}^\dagger\CovMat^\frac{1}{2}_{\xx \xx}}^2 \nonumber \\
    & \leq \rb{4\norm{\matnoise \XX^\dagger \empcov_{\xx\xx}^\frac{1}{2}}^2 + 2\sigma^2_{r+1}\rb{\bDelta^\star \empcov_{\xx\xx}^\frac{1}{2}}}\norm{\rb{\empcov_{\xx\xx}^\frac{1}{2}}^\dagger\CovMat^\frac{1}{2}_{\xx \xx}}^2 \nonumber \\
    & = \rb{4\norm{\matnoise \XX^\dagger \empcov_{\xx\xx}^\frac{1}{2}}^2 + 2\sigma^2_{r+1}\rb{\bDelta^\star \empcov_{\xx\xx}^\frac{1}{2}}}\lambda_{\max}\rb{\CovMat_{\xx \xx}\empcov_{\xx\xx}^\dagger}.
\end{align}
Where in the last line we used Lemma \ref{lemma:perturbation_bound} and the fact that $\norm{\AA\BB} \leq \norm{\AA}\norm{\BB}$. Next, 
\begin{equation}
    \norm{\rb{\empcov_{\xx\xx}^\frac{1}{2}}^\dagger\CovMat^\frac{1}{2}_{\xx \xx}}^2 = \lambda_{\max}\rb{\CovMat^\frac{1}{2}_{\xx \xx}\empcov_{\xx\xx}^\dagger \CovMat^\frac{1}{2}_{\xx \xx}} = \lambda_{\max}\rb{\CovMat_{\xx \xx}\empcov_{\xx\xx}^\dagger}.
\end{equation}
This holds since $\lambda_{\max}\rb{\CC\DD}=\lambda_{\max}\rb{\DD\CC}$ for $\CC=\CovMat^\frac{1}{2}_{\xx \xx}\empcov_{\xx\xx}^\dagger$ and $\DD=\CovMat^\frac{1}{2}_{\xx \xx}$. We  proceed with the upper bound of $\circled{1}$,
\begin{align}
    \norm{\matnoise \XX^\dagger \empcov_{\xx\xx}^\frac{1}{2}}^2 &= \lambda_{\max}\rb{\matnoise \XX^\dagger \empcov_{\xx\xx}^\frac{1}{2} \empcov_{\xx\xx}^\frac{1}{2} \rb{\XX^\dagger}^\top \matnoise^\top} \nonumber \\
    &= \lambda_{\max}\rb{\frac{1}{n}\matnoise \rb{\XX^\top \XX}^{-1}\XX^\top \rb{\XX\XX^\top} \XX \rb{\XX^\top\XX}^{-1}\matnoise^\top}\nonumber \\
    &= \lambda_{\max}\rb{\frac{1}{n}\matnoise\matnoise^\top} = \lambda_{\max}\rb{\empcov_{\vecnoise\vecnoise}}.
\end{align}
Where $\empcov_{\vecnoise\vecnoise}=\frac{1}{n}\matnoise \matnoise^\top$ is the empirical covariance matrix of the noise. Putting back everything together we can bound $\circled{1}$ as follows, 
\begin{align}
    \circled{1} &\leq 32r\rb{2\lambda_{\max}\rb{\empcov_{\vecnoise\vecnoise}} + \sigma^2_{r+1}\rb{\bDelta^\star \empcov_{\xx\xx}^{\frac{1}{2}}}}\lambda_{\max}\rb{\CovMat_{\xx \xx}\empcov_{\xx\xx}^\dagger}.
\end{align}
For the terms $\circled{2}, \circled{3}$ we use Lemma \ref{lemma:young_rank} and we have,
\begin{align}
    & \circled{2} = \norm{\rb{\bDelta^\star\PP_{\XX}-\trunk{\bDelta^\star\PP_{\XX}}}\CovMat_{\xx\xx}^\frac{1}{2}}^2 \leq \lambda_{\max}\rb{\CovMat_{\xx\xx}}\sigma_{r+1}^2\rb{\bDelta^\star}, \\
    & \circled{3} = \fnorm{\rb{\bDelta^\star\PP_{\XX}-\trunk{\bDelta^\star\PP_{\XX}}}\CovMat_{\xx\xx}^\frac{1}{2}}^2
    \leq \lambda_{\max}\rb{\CovMat_{\xx\xx}}\sum_{i=r+1}^{\min\{d_x,d_y\}}\sigma_{i}^2\rb{\bDelta^\star\PP_{\XX}} \nonumber \\
    & \hspace{6.4cm}\leq  \lambda_{\max}\rb{\CovMat_{\xx\xx}}\sum_{i=r+1}^{n}\sigma_{i}^2\rb{\bDelta^\star}.
\end{align}
In the last line, we used the fact that $\bDelta^\star\PP_{\XX}$ projects $\bDelta^\star$ onto the column space of $\XX$, which is an $n$-dimensional subspace since, in the underdetermined regime, $\XX$ has full column rank. Now we take the conditional expectation with respect to the randomness of noise while fixing the data, and get the bound
\begin{align}
    \E\sb{\risk(\lorasol) - \risk (\AA^\star) \Big| \XX}
    & \leq 2\fnorm{\bDelta^\star\PP_{\XX}^{\perp}\CovMat_{\xx\xx}^\frac{1}{2}}^2 \nonumber\\
    & \quad + 32r\rb{2 \E\sb{\lambda_{\max}\rb{\empcov_{\vecnoise\vecnoise}}\Big|\XX} + \sigma^2_{r+1}\rb{\bDelta^\star \empcov_{\xx\xx}^{\frac{1}{2}}}}\lambda_{\max}\rb{\CovMat_{\xx \xx}\empcov_{\xx\xx}^\dagger} \nonumber \\
    & \quad  + 8r\lambda_{\max}\rb{\CovMat_{\xx\xx}}\sigma_{r+1}^2\rb{\bDelta^\star} + 2\lambda_{\max}\rb{\CovMat_{\xx\xx}}\sum_{i=r+1}^{n}\sigma_{i}^2\rb{\bDelta^\star}.
\end{align}
In order to bound the term with the empirical covariance of the noise, we first whiten the noise and use Lemma \ref{lemma:gaussian_matrix_bound}.
\begin{align}
    \E\sb{\lambda_{\max}\rb{\empcov_{\vecnoise\vecnoise}}\Big|\XX} &= \frac{1}{n}\E\sb{\norm{\matnoise}^2\Big|\XX} = \frac{1}{n}\E\sb{\norm{\CovMat_{\vecnoise\vecnoise}^{\frac{1}{2}}\CovMat_{\vecnoise\vecnoise}^{-\frac{1}{2}}\matnoise}^2\Big|\XX} \nonumber \\
    &\leq \lambda_{\max}\rb{\CovMat_{\vecnoise\vecnoise}}\frac{1}{n}\E\sb{\norm{\CovMat_{\vecnoise\vecnoise}^{-\frac{1}{2}}\matnoise}^2\Big|\XX}  \nonumber \\
    &\leq \lambda_{\max}\rb{\CovMat_{\vecnoise\vecnoise}} \frac{5\max\{d_y,n\}}{n} .
\end{align}
Where in the last line we used the fact that $\CovMat_{\vecnoise\vecnoise}^{-\frac{1}{2}}\matnoise$ is a standard gaussian matrix. We also have,
\begin{equation}
    \sigma_{r+1}^2\rb{\bDelta^\star \empcov_{\xx\xx}^{\frac{1}{2}}}
    \le \sigma_{r+1}^2\rb{\bDelta^\star}\,\lambda_{\max}\rb{\empcov_{\xx\xx}}.
\end{equation}
Finally, take full expectation over the data and the noise, 
\begin{align}
    \E\sb{\risk(\lorasol) - \risk (\AA^\star)} 
    & 2\fnorm{\bDelta^\star\PP_{\XX}^{\perp}\CovMat_{\xx\xx}^\frac{1}{2}}^2  \leq 320 \frac{r\max\{d_y,n\}}{n} \lambda_{\max}(\CovMat_{\vecnoise\vecnoise})
    \E\sb{\lambda_{\max} \rb{\CovMat_{\xx\xx}\empcov_{\xx\xx}^\dagger}} \nonumber \\
    & \quad + 32 r \sigma_{r+1}^2\rb{\bDelta^\star}
    \E\sb{\lambda_{\max}\rb{\empcov_{\xx\xx}}
    \lambda_{\max}\rb{\CovMat_{\xx\xx} \empcov_{\xx\xx}^\dagger}} 
    \nonumber \\
    & \quad + 8r\lambda_{\max}\rb{\CovMat_{\xx\xx}}\sigma_{r+1}^2\rb{\bDelta^\star}  + 2\lambda_{\max}\rb{\CovMat_{\xx\xx}}\sum_{i=r+1}^{n}\sigma_{i}^2\rb{\bDelta^\star} \nonumber \\
    & = 2\fnorm{\bDelta^\star\PP_{\XX}^{\perp}\CovMat_{\xx\xx}^\frac{1}{2}}^2  + 320 \frac{r\max\{d_y,n\}}{n}\,\lambda_{\max}(\CovMat_{\vecnoise\vecnoise})\E\sb{\lambda_{\max}\rb{\CovMat_{\xx\xx}\empcov_{\xx\xx}^\dagger}} \nonumber \\
    & \quad + 32r\sigma_{r+1}^2\rb{\bDelta^\star}\E\sb{\lambda_{\max}\rb{\empcov_{\xx\xx}}\lambda_{\max}\rb{\CovMat_{\xx\xx}\empcov_{\xx\xx}^\dagger}} \nonumber \\
    &\quad +2\rb{ 4r\sigma_{r+1}^2\rb{\bDelta^\star} + \sum_{i=r+1}^{n}\sigma_{i}^2\rb{\bDelta^\star}}\lambda_{\max}\rb{\CovMat_{\xx \xx} }. 
\end{align}
\end{proof}

\LoRAvarGaussUnder*

\begin{proof}
From the variance term in \eqref{eq:LoRA_excessrisk_underdetermined}, we need to control
\begin{equation}
    \E\left[\lambda_{\max}\left(\CovMat_{\xx\xx}\empcov_{\xx\xx}^{\dagger}\right)\right].
\end{equation}

Since $\xx_i \sim \mathcal{N}(\zeroVec,\sigma_{\xx}^2\II)$, we have $\CovMat_{\xx\xx}=\sigma_{\xx}^2\II$ and
\begin{equation}
    \empcov_{\xx\xx}=\frac{1}{n}\XX\XX^\top,
    \qquad\text{where} \qquad \XX=[\xx_1,\ldots,\xx_n]\in\R^{d_x\times n}.
\end{equation}
Hence,
\begin{equation}
    \CovMat_{\xx\xx}\empcov_{\xx\xx}^{\dagger}
    = \sigma_{\xx}^2 \left(\frac{1}{n}\XX\XX^\top\right)^{\dagger}.
\end{equation}
Note that
\begin{equation}
    \lambda_{\max}\left(\CovMat_{\xx\xx}\empcov_{\xx\xx}^{\dagger}\right)
    = \sigma_{\xx}^2\,\lambda_{\max}\left(\left(\frac{1}{n}\XX\XX^\top\right)^{\dagger}\right)
    = \sigma_{\xx}^2\,\frac{n}{\lambda_{\min}(\XX^\top\XX)}.
\end{equation}
Let $\ZZ=\frac{1}{\sigma_{\xx}}\XX$, so that $\ZZ$ has i.i.d.\ standard Gaussian entries.
Then $\XX\XX^\top=\sigma_{\xx}^2\ZZ\ZZ^\top$ and
\begin{equation}  
    \lambda_{\max}\left(\CovMat_{\xx\xx}\empcov_{\xx\xx}^{\dagger}\right)
= \frac{n}{\lambda_{\min}(\ZZ^\top\ZZ)}.
\end{equation}
By classical results on the smallest nonzero eigenvalue of Wishart matrices (see, e.g., \citet{silverstein1985smallest}), when $n<d_x$ and $d_x,n\to\infty$ with $n/d_x\to c<1$,
\begin{equation}
    \frac{1}{d_x} \lambda_{\min}\left( \ZZ^\top\ZZ \right) =
    \lambda_{\min}\left( \frac{1}{d_x} \ZZ^\top\ZZ \right)
    \xrightarrow[]{\text{a.s.}}
    \big(1-\sqrt{c}\big)^2.
\end{equation}
Therefore,
\begin{equation}
    \E\left[\lambda_{\max}\left(\CovMat_{\xx\xx}\empcov_{\xx\xx}^{\dagger}\right)\right]
    \asymp
    \frac{n}{\big(\sqrt{d_x}-\sqrt{n}\big)^2}.
\end{equation}
Substituting this into the variance term yields
\begin{equation}
    \frac{r\max\{n,d_y\}}{n} \lambda_{\max} \big( \CovMat_{\vecnoise\vecnoise} \big)
    \E\sb{\lambda_{\max}\big(\CovMat_{\xx \xx} \empcov_{\xx \xx}^{\dagger} \big) }
    \asymp
    r \max\{n,d_y\} \big( \sqrt{d_x} - \sqrt{n}\big)^{-2} \lambda_{\max}\big(\CovMat_{\vecnoise\vecnoise}\big),
\end{equation}
which completes the proof.
\end{proof}

\subsection{Inevitable error of low-rank adaptations}\label{app:Inevitability error}
\begin{proposition}
   Let $ \empsol \in \R^{d_y \times d_x} $ be a low rank adaptation of $ \initmodel $, i.e., taking the form $ \empsol = \initmodel + \widehat{\bDelta} $ with $ \rank( \widehat{\bDelta} ) \leq r $.  Then
   \begin{equation}
       \E\big[\risk\big(\empsol\big)\big] - \risk\big(\AA^\star\big) \geq \lambda_{\min}(\CovMat_{\rvx\rvx}) \sum_{i=r+1}^{\min\{d_x,d_y\}} \sigma^2_{i} \big(\bDelta^\star\big)
   \end{equation}
   \begin{proof}
       From Lemma~\ref{lemma:general_risk} we have
       \begin{align}
           \E\big[\risk\big(\empsol\big)\big] - \risk \big( \AA^\star \big)
           & = \fnorm{\big(\empsol -\AA^{\star} \big)\CovMat_{\xx \xx}^{\frac{1}{2}}}^2 \nonumber \\
           & = \fnorm{\big(\widehat{\bDelta} -\bDelta^{\star} \big)\CovMat_{\xx \xx}^{\frac{1}{2}}}^2 \nonumber \\
           & \geq \lambda_{\min} \left( \CovMat_{\xx \xx} \right) \fnorm{\widehat{\bDelta} -\bDelta^{\star} }^2 .
       \end{align}
       Since $ \rank(\widehat{\bDelta} ) \leq r $ we have
       \begin{equation}
           \fnorm{\widehat{\bDelta} -\bDelta^{\star} }^2 
           \geq \min_{ \rank(\bDelta) \leq r } \fnorm{\bDelta -\bDelta^{\star} }^2 = \sum_{i=r+1}^{\min\{d_x,d_y\}} \sigma^2_{i} \big(\bDelta^\star\big),
       \end{equation}
       where in the last step we used Eckart-Young theorem (Lemma~\ref{lemma:young_rank}).
   \end{proof}
\end{proposition}

\clearpage
\section{Experiments}\label{app:Experiments}

\subsection{Models and tasks}
We use the Qwen2.5 family at three scales $\{0.5,1.5,3\}$B. For evaluation, we use two datasets: BoolQ and CommonsenseQA. All evaluation is zero-shot via
\texttt{lm-evaluation-harness} \footnote{https://github.com/EleutherAI/lm-evaluation-harness}.

\subsection{Input/output format}
Training data is in Alpaca form
$\{\texttt{instruction},\texttt{input},\texttt{output}\}$.
The loss is cross-entropy on the answer tokens only; the prompt span is
masked with $-100$.
For BoolQ:
\begin{quote}\small
\texttt{instruction:} ``Answer the following yes/no question based on the passage.''\\
\texttt{input:} ``Passage: \dots\textbackslash n\textbackslash nQuestion: \dots''\\
\texttt{output:} ``Yes'' \textbar{} ``No''
\end{quote}
For CSQA:
\begin{quote}\small
\texttt{instruction:} ``Answer the following multiple-choice question. Choose the correct option letter only (A, B, C, D, or E).''\\
\texttt{input:} ``Question: \dots\textbackslash n\textbackslash nOptions:\textbackslash nA. \dots\textbackslash nB. \dots\textbackslash nE. \dots''\\
\texttt{output:} ``A''\,\textbar{}\,\dots\,\textbar{}\,``E''
\end{quote}

\subsection{Training hyperparameters common to all runs}
All runs use bf16 precision with SDPA attention and gradient checkpointing, a sequence-length cutoff of 512, and train for 3 epochs with an effective batch size of 32. We use a cosine learning-rate schedule with linear warmup and clip gradients at $\|g\|_2\le 1$. Weight decay is $0.01$ on all parameters except biases and LayerNorm/RMSNorm scales, for which it is set to $0$. Per-device batch sizes and gradient-accumulation counts depend on model size and are reported in Tab.~\ref{tab:bs}. Each job runs on a single A100 GPU. We hold out $20\%$ of the training data for validation ($\texttt{val\_size}=0.2$); the $n$ reported throughout refers to the training-set size \emph{before} this split.

\begin{table}[ht]
\centering\small
\caption{Per-device batch ($\mathrm{bs}_{\text{dev}}$) $\times$ gradient
accumulation ($\mathrm{ga}$). Effective batch size is always $32$.}
\label{tab:bs}
\begin{tabular}{lccc}
\toprule
 & 0.5B & 1.5B & 3B \\
\midrule
Full FT (AdamW or SGD)   & $8\!\times\!4$ & $2\!\times\!16$ & $1\!\times\!32$ \\
LoRA (any rank)          & $8\!\times\!4$ & $8\!\times\!4$  & $4\!\times\!8$ \\
\bottomrule
\end{tabular}
\end{table}

\subsection{Full fine-tuning}
Both full-FT optimizers update every parameter in the model.

\paragraph{AdamW.} We use the HuggingFace defaults ($\beta_1=0.9$, $\beta_2=0.999$, $\varepsilon=10^{-8}$) with a warmup ratio of $0.02$. A learning rate of $5\mathrm{e}{-5}$ was optimal at every cell (Tab.~\ref{tab:lr}): we scanned $\{2,5,10\}\times 10^{-5}$ across model sizes and $5\mathrm{e}{-5}$ won in each case.

\paragraph{SGD with momentum.} We use momentum $0.9$ and a warmup ratio of $0.20$. The optimal learning rate is strongly cell-dependent (Tab.~\ref{tab:lr}) and was tuned per cell over $\{1,3,5,8\}\times 10^{-3}\cup\{1,1.5,2,3\}\times 10^{-2}$. For 3B SGD we further raise the per-device batch size from $1$ to $4$ (correspondingly dropping $\mathrm{ga}$ from $32$ to $8$); this reduces per-step gradient variance and eliminates the seed-level collapses we observed at $\mathrm{lr}\ge 5\mathrm{e}{-3}$.

\begin{table}[ht]
\centering\small
\caption{Per-cell best learning rates and warmup ratios. Full FT AdamW
uses $5\mathrm{e}{-5}$ everywhere. SGD requires a much larger and
size-dependent LR. LoRA values are reproduced for reference; LoRA at
$r\!=\!128$ uses a halved LR.}
\label{tab:lr}
\begin{tabular}{lcccc}
\toprule
 & Full FT & Full FT & LoRA & LoRA \\
Cell & AdamW & SGD & $r\!\in\!\{1,8,32,64\}$ & $r\!=\!128$ \\
\midrule
0.5B BoolQ & $5\mathrm{e}{-5}$ & $1\mathrm{e}{-2}$ & $2\mathrm{e}{-4}$ & $1\mathrm{e}{-4}$ \\
0.5B CSQA  & $5\mathrm{e}{-5}$ & $8\mathrm{e}{-3}$ & $2\mathrm{e}{-4}$ & $1\mathrm{e}{-4}$ \\
1.5B BoolQ & $5\mathrm{e}{-5}$ & $1.5\mathrm{e}{-2}$ & $2\mathrm{e}{-4}$ & $1\mathrm{e}{-4}$ \\
1.5B CSQA  & $5\mathrm{e}{-5}$ & $1.5\mathrm{e}{-2}$ & $2\mathrm{e}{-4}$ & $1\mathrm{e}{-4}$ \\
3B   BoolQ & $5\mathrm{e}{-5}$ & $2\mathrm{e}{-3}$  & $2\mathrm{e}{-4}$ & $1\mathrm{e}{-4}$ \\
3B   CSQA  & $5\mathrm{e}{-5}$ & $2\mathrm{e}{-3}$  & $2\mathrm{e}{-4}$ & $1\mathrm{e}{-4}$ \\
\midrule
warmup ratio & $0.02$ & $0.20$ & $0.02$ & $0.02$ \\
\bottomrule
\end{tabular}
\end{table}

\subsection{LoRA configuration}
All LoRA runs use AdamW with the same defaults as above and a warmup ratio of $0.02$. We sweep over ranks $r\in\{1,8,32,64,128\}$ and fix the scaling to $\alpha=2r$, so the effective scale $\alpha/r=2$ is constant and the per-step adapter contribution $\tfrac{\alpha}{r}BA$ has the same magnitude across ranks. Adapter dropout is disabled.

Adapters are placed on \texttt{all-linear} targets---every linear projection in every transformer block, i.e.\ the attention projections $\{W_q,W_k,W_v,W_o\}$ together with the MLP projections $\{W_{\text{gate}},W_{\text{up}},W_{\text{down}}\}$. The base weights are frozen and only the low-rank factors $A,B$ are trained. We follow the HuggingFace PEFT defaults: $A$ is drawn from $\mathcal{N}(0,1/r)$ and $B$ is initialized to zero.

For $r\in\{1,8,32,64\}$ we use a learning rate of $2\mathrm{e}{-4}$; at $r=128$ we lower it to $1\mathrm{e}{-4}$. For each (cell, rank) configuration we report $3$ training seeds, used for both the sample-size sweeps and the final accuracy means.

\subsection{Model averaging (SGD only)}\label{app:Model averaging}
For SGD we report a uniform average of fully-trained checkpoints,
\begin{equation*}
    W_{\text{avg}} = \frac{1}{N}\sum_{i=1}^{N} W_{\text{seed}_i},
\end{equation*}
computed as an arithmetic mean in fp32 and recast to bf16. We train $N=4$ independent seeds per cell at the cell's best learning rate. We do \emph{not} average AdamW seeds, instead, we report best-of-$N$ for AdamW comparisons.

\subsection{Data augmentation}\label{app:Data augmentation}
We paraphrase BoolQ using GPT-5.4-mini. For each original (passage, question) pair, the model is asked to rewrite both the passage and the question while preserving the gold yes/no label as well as all named entities and numerical values. Every candidate paraphrase is then \emph{validated} by re-asking the same model the rewritten question against the rewritten passage at temperature $0$; pairs whose re-answer disagrees with the original gold label are discarded. This yields $8{,}510$ kept paraphrases out of $9{,}427$ originals---a retention rate of $90.3\%$. The augmented BoolQ training set is the concatenation of the originals and the validated paraphrases, for a total of $17{,}937$ examples.

\subsection{Runs on the augmented dataset}\label{app:augmented-runs}
The augmented BoolQ runs reuse the training configuration above without modification; the only difference is the training-set size.

\subsection{Label-noise injection}
We consider two flipping protocols, both using a fixed noise seed so that the corrupted subsets are nested across noise levels---the examples flipped at $p=5\%$ are a strict subset of those flipped at $p=10\%$, and so on.

\begin{itemize}\setlength{\itemsep}{0pt}
\item \textbf{BoolQ:} flip the binary label on $p\%$ of the training examples.
\item \textbf{CSQA:} for each example selected for flipping, replace the correct answer letter with one of the four incorrect letters drawn uniformly at random.
\end{itemize}

In both cases the resulting label distribution remains close to balanced. We sweep $p\in\{5,10,20,30\}\%$, and otherwise keep the per-cell hyperparameters of Tab.~\ref{tab:lr} unchanged---only the data file is swapped. We train $3$ seeds per (cell, method, noise level) and report the mean together with the min--max band.

\subsection{Compute}
All training jobs run on a single NVIDIA A100 (40\,GB or 80\,GB). The 3B full-FT runs require the 80\,GB variant; everything else fits comfortably on 40\,GB. Per-seed wall-clock training time on the original training set is roughly $13$\,min at 0.5B, $36$\,min at 1.5B, and $80$\,min at 3B; on augmented BoolQ ($n=17{,}937$) these times roughly double.

\subsection{High-performing reference model}\label{app:reference model}
To obtain the high-performing reference model for each dataset, we first generated additional samples 
that match the same format as described above in Sec.~\ref{app:Data augmentation}. Then, we applied two training techniques on the extended training set: (1) FFT with AdamW and (2) training a few times using SGD with different seeds, and averaging the resulting weights to get a better model, as explained in Sec.~\ref{app:Model averaging}. The reference model was chosen as the one with the better test performance.

\clearpage
\subsection{Additional LLM experiments}\label{subsection:app_additional_llm_experiments}
In this section, we provide the plots for the LLM experiment form Sec.~\ref{sec:experiments}, including all model sizes ($0.5$B, $1.5$B, and $3$B). The behavior for all models is the same as described in the main text.

\begin{figure}[ht]
    \centering
    \begin{subfigure}[t]{0.5\linewidth}
        \centering
        \includegraphics[width=\linewidth]{Qwen2.5_0.5B_noise_boolq}
        \caption{Qwen$2.5$-$0.5$B on BoolQ}
        \label{fig:Boolq_noise_sweep_0.5B}
    \end{subfigure}%
    \begin{subfigure}[t]{0.5\linewidth}
        \centering
        \includegraphics[width=\linewidth]{Qwen2.5_0.5B_CommonsenseQA_noise}
        \caption{Qwen$2.5$-$0.5$B on CommonsenseQA}
        \label{fig:commonsense_noise_sweep_0.5B}
    \end{subfigure}
    \caption{{\bf Effect of label noise in LLMs fine-tuning.}
    We fine-tuned Qwen$2.5$ models using LoRA with various ranks across different levels of label noise. For each configuration, we trained $3$ times using random seeds. Panels~\protect\subref{fig:Boolq_noise_sweep_0.5B} and~\protect\subref{fig:commonsense_noise_sweep_0.5B} show the mean and the range of the results for the $0.5$B model fine-tuned on BoolQ and CommonsenseQA, respectively. Here, in the strong-noise regime, LoRA outperforms FFT as predicted by our theory.}
    \label{fig:LLM_noise_sweeps_0.5B}
\end{figure}

\begin{figure}[ht]%
    \begin{subfigure}[t]{0.5\linewidth}%
    \!\!\!
        \includegraphics[width=\linewidth]{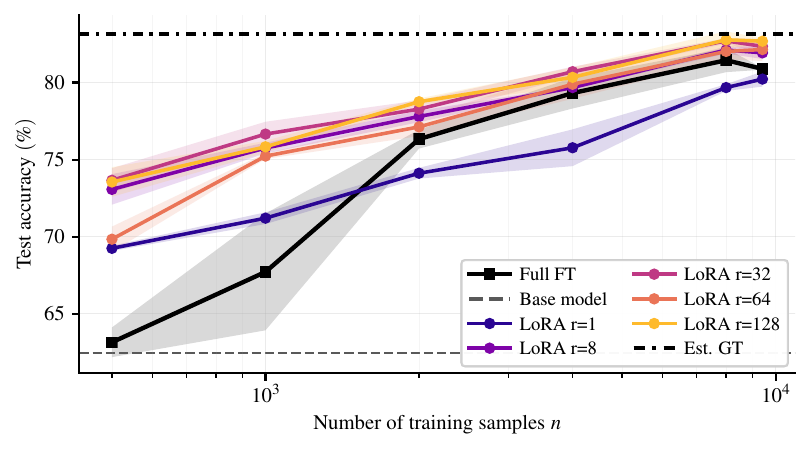}\\%
        \includegraphics[width=\linewidth]{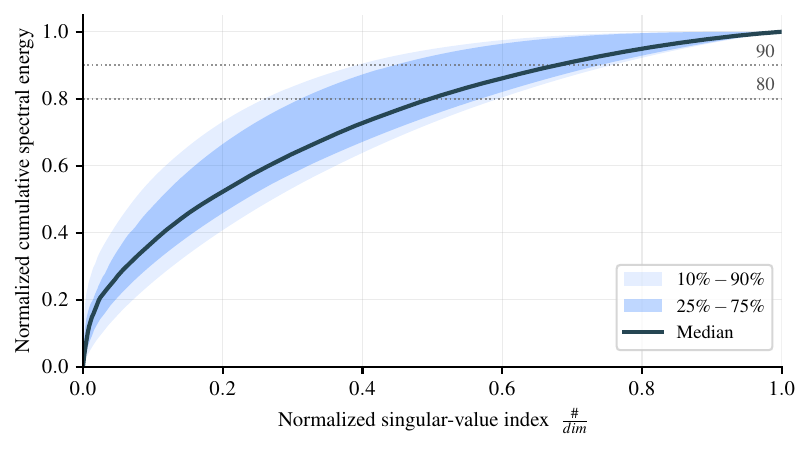}
        \caption{Qwen$2.5$-$0.5$B on BoolQ}
        \label{fig:Boolq_samples_sweep_0.5B}
    \end{subfigure}%
    \begin{subfigure}[t]{0.5\linewidth}%
        \includegraphics[width=\linewidth]{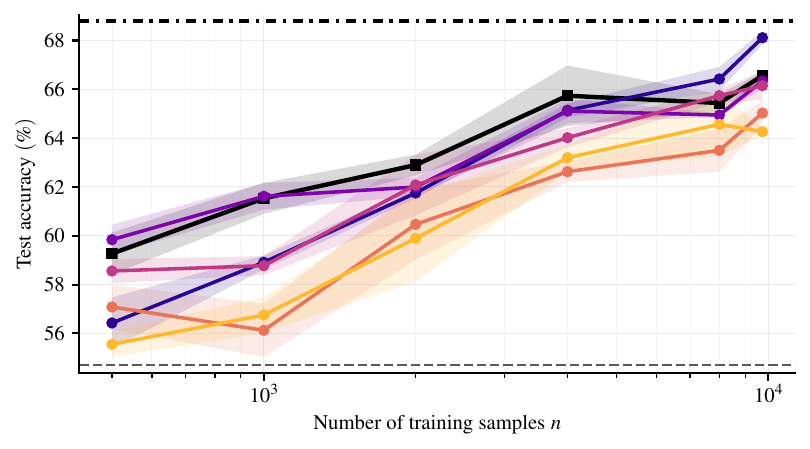}\\%
        \includegraphics[width=\linewidth]{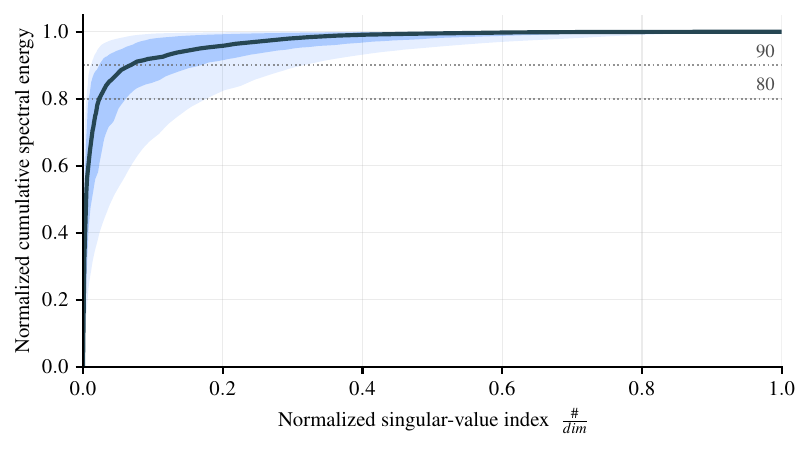}
        \caption{Qwen$2.5$-$0.5$B on CommonsenseQA}
        \label{fig:commonsense_samples_sweep_0.5B}
    \end{subfigure}
    \caption{{\bf Effect of sample size in LLMs fine-tuning.}
    We fine-tuned Qwen$2.5$ models using LoRA with various ranks across different sample sizes. For each configuration, we trained $3$ times using random seeds. The top of Panels~\protect\subref{fig:Boolq_samples_sweep_0.5B} and~\protect\subref{fig:commonsense_samples_sweep_0.5B} shows the mean and the range of the results for the $0.5$B model fine-tuned on BoolQ and CommonsenseQA, respectively. For each task, we estimated $\bDelta^\star$ and computed its singular values per layer.
    The bottom part shows the statistics of the spectral cumulative energy of $\bDelta^\star$ across layers. When the effective rank of the adaptation is low (CommonsenseQA), LoRA outperforms FFT, as predicted by our theory.}
    \label{fig:LLM_samples_sweeps_0.5B}
\end{figure}

\clearpage
\begin{figure}[t]
    \centering
    \begin{subfigure}[t]{0.5\linewidth}
        \centering
        \includegraphics[width=\linewidth]{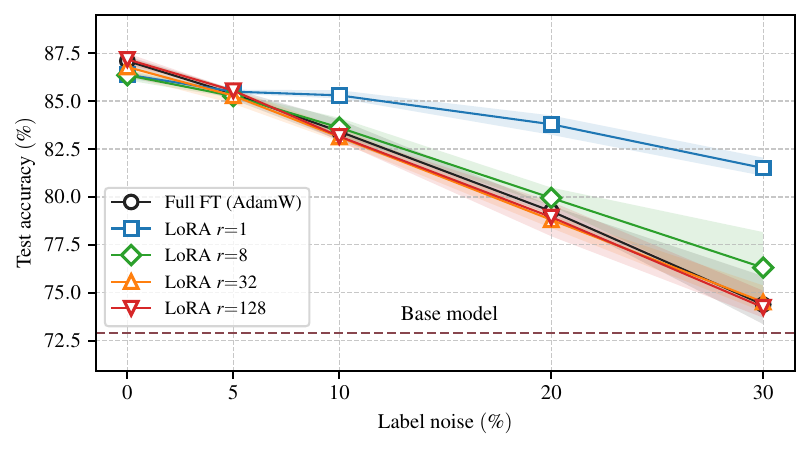}
        \caption{Qwen$2.5$-$1.5$B on BoolQ}
        \label{fig:Boolq_noise_sweep_1.5B}
    \end{subfigure}%
    \begin{subfigure}[t]{0.5\linewidth}
        \centering
        \includegraphics[width=\linewidth]{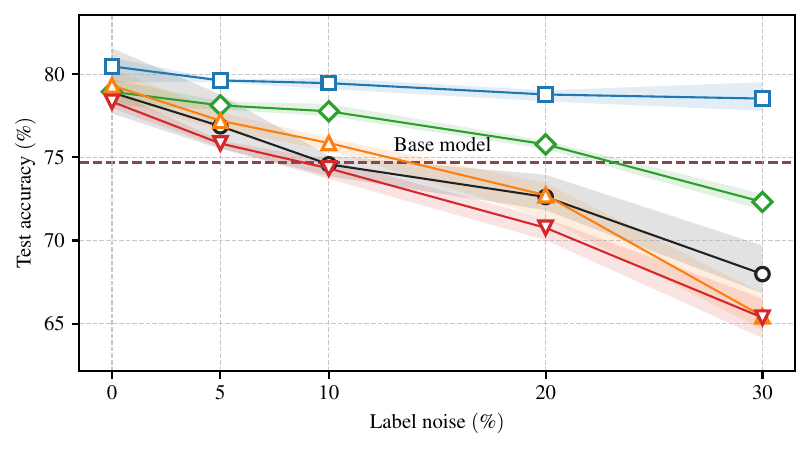}
        \caption{Qwen$2.5$-$1.5$B on CommonsenseQA}
        \label{fig:commonsense_noise_sweep_1.5B}
    \end{subfigure}
    \caption{{\bf Effect of label noise in LLMs fine-tuning.}
    We fine-tuned Qwen$2.5$ models using LoRA with various ranks across different levels of label noise. For each configuration, we trained $3$ times using random seeds. Panels~\protect\subref{fig:Boolq_noise_sweep_1.5B} and~\protect\subref{fig:commonsense_noise_sweep_1.5B} show the mean and the range of the results for the $1.5$B model fine-tuned on BoolQ and CommonsenseQA, respectively. Here, in the strong-noise regime, LoRA outperforms FFT as predicted by our theory.}
    \label{fig:LLM_noise_sweeps_1.5B}
\end{figure}

\begin{figure}[t]%
    \begin{subfigure}[t]{0.5\linewidth}%
    \!\!\!
        \includegraphics[width=\linewidth]{Qwen2.5_1.5B_sweep_boolq}\\%
        \includegraphics[width=\linewidth]{1.5b_boolq_adamw}
        \caption{Qwen$2.5$-$1.5$B on BoolQ}
        \label{fig:Boolq_samples_sweep_1.5B}
    \end{subfigure}%
    \begin{subfigure}[t]{0.5\linewidth}%
        \includegraphics[width=\linewidth]{Qwen2.5_1.5B_sweep_commonsense}\\%
        \includegraphics[width=\linewidth]{1.5b_csqa_sgd}
        \caption{Qwen$2.5$-$1.5$B on CommonsenseQA}
        \label{fig:commonsense_samples_sweep_1.5B}
    \end{subfigure}
    \caption{{\bf Effect of sample size in LLMs fine-tuning.}
    We fine-tuned Qwen$2.5$ models using LoRA with various ranks across different sample sizes. For each configuration, we trained $3$ times using random seeds. The top of Panels~\protect\subref{fig:Boolq_samples_sweep_1.5B} and~\protect\subref{fig:commonsense_samples_sweep_1.5B} shows the mean and the range of the results for the $1.5$B model fine-tuned on BoolQ and CommonsenseQA, respectively. For each task, we estimated $\bDelta^\star$ and computed its singular values per layer.
    The bottom part shows the statistics of the spectral cumulative energy of $\bDelta^\star$ across layers. When the effective rank of the adaptation is low (CommonsenseQA), LoRA outperforms FFT, as predicted by our theory.}
    \label{fig:LLM_samples_sweeps_1.5B}
\end{figure}

\clearpage
\begin{figure}[t]
    \centering
    \begin{subfigure}[t]{0.5\linewidth}
        \centering
        \includegraphics[width=\linewidth]{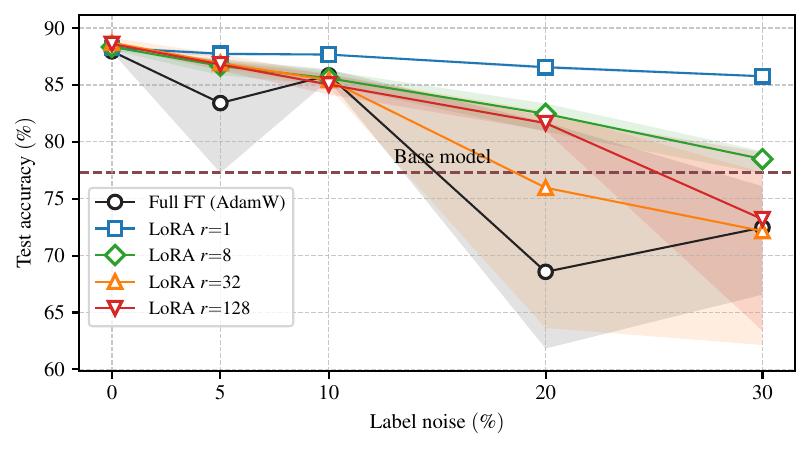}
        \caption{Qwen$2.5$-$3$B on BoolQ}
        \label{fig:Boolq_noise_sweep_3B}
    \end{subfigure}%
    \begin{subfigure}[t]{0.5\linewidth}
        \centering
        \includegraphics[width=\linewidth]{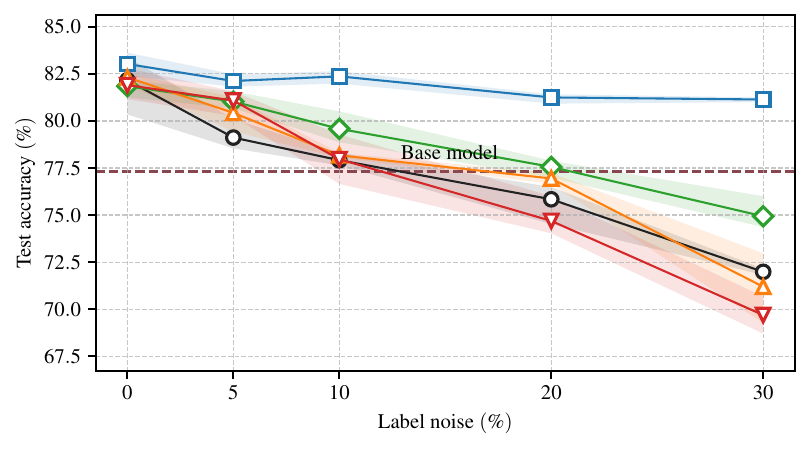}
        \caption{Qwen$2.5$-$3$B on CommonsenseQA}
        \label{fig:commonsense_noise_sweep_3B}
    \end{subfigure}
    \caption{{\bf Effect of label noise in LLMs fine-tuning.}
    We fine-tuned Qwen$2.5$ models using LoRA with various ranks across different levels of label noise. For each configuration, we trained $3$ times using random seeds. Panels~\protect\subref{fig:Boolq_noise_sweep_3B} and~\protect\subref{fig:commonsense_noise_sweep_3B} show the mean and the range of the results for the $3$B model fine-tuned on BoolQ and CommonsenseQA, respectively. Here, in the strong-noise regime, LoRA outperforms FFT as predicted by our theory.}
    \label{fig:LLM_noise_sweeps_3B}
\end{figure}

\begin{figure}[t]%
    \begin{subfigure}[t]{0.5\linewidth}%
    \!\!\!
        \includegraphics[width=\linewidth]{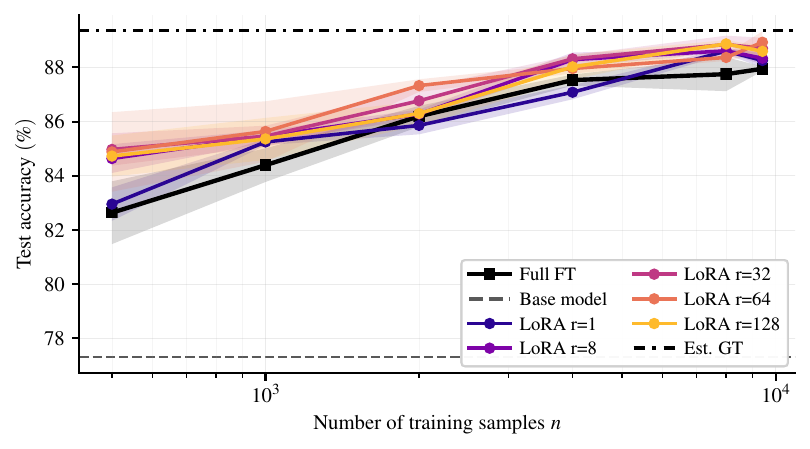}\\%
        \includegraphics[width=\linewidth]{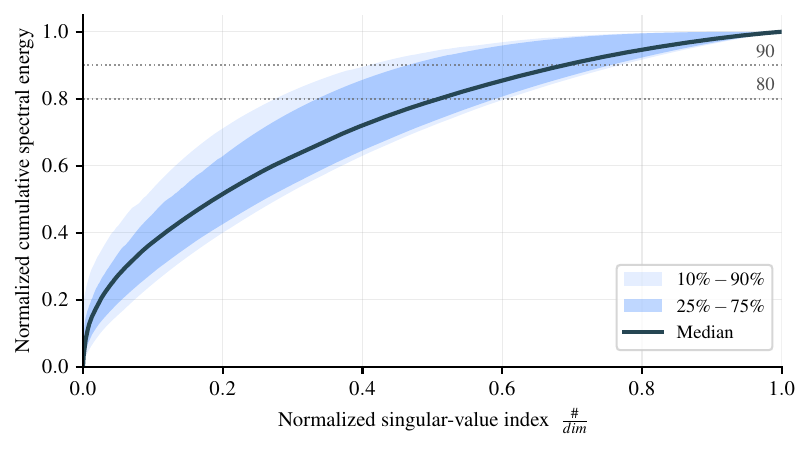}
        \caption{Qwen$2.5$-$3$B on BoolQ}
        \label{fig:Boolq_samples_sweep_3B}
    \end{subfigure}%
    \begin{subfigure}[t]{0.5\linewidth}%
        \includegraphics[width=\linewidth]{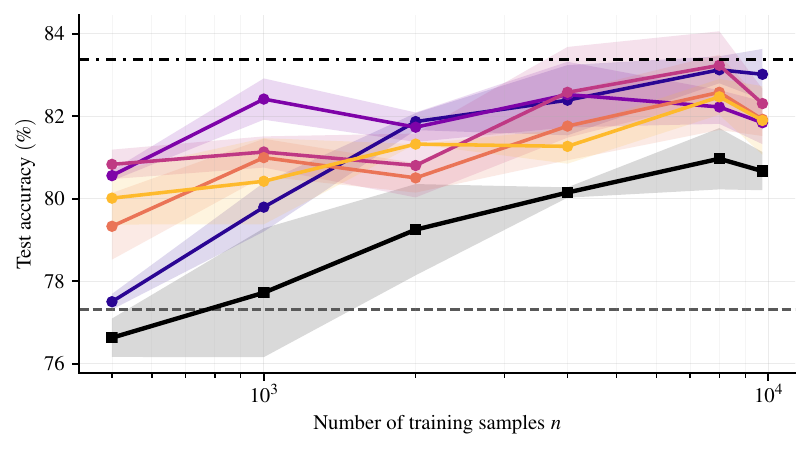}\\%
        \includegraphics[width=\linewidth]{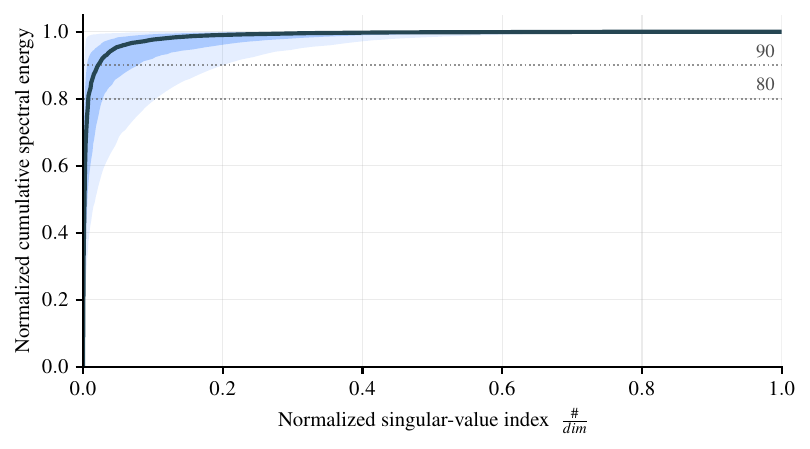}
        \caption{Qwen$2.5$-$3$B on CommonsenseQA}
        \label{fig:commonsense_samples_sweep_3B}
    \end{subfigure}
    \caption{{\bf Effect of sample size in LLMs fine-tuning.}
    We fine-tuned Qwen$2.5$ models using LoRA with various ranks across different sample sizes. For each configuration, we trained $3$ times using random seeds. The top of Panels~\protect\subref{fig:Boolq_samples_sweep_3B} and~\protect\subref{fig:commonsense_samples_sweep_3B} shows the mean and the range of the results for the $3$B model fine-tuned on BoolQ and CommonsenseQA, respectively. For each task, we estimated $\bDelta^\star$ and computed its singular values per layer.
    The bottom part shows the statistics of the spectral cumulative energy of $\bDelta^\star$ across layers. When the effective rank of the adaptation is low (CommonsenseQA), LoRA outperforms FFT, as predicted by our theory.}
    \label{fig:LLM_samples_sweeps_3B}
\end{figure}

\clearpage

\subsection{Additional linear regression experiments}\label{subsection:app_additional_experiments}
In this section, we present additional linear regression experiments to further illustrate the behavior of LoRA and FFT. We first study the effect of the rank of $\bDelta^\star$ on fine-tuning performance. Next, we repeat the experiment of Figure~\ref{fig:lora_sweeps} in the low-noise and overdetermined regime (corresponding to Figure \ref{fig:samples_sweep} and Figure \ref{fig:noise_sweep}) and observe a similar qualitative pattern. Finally, we show that even when $\bDelta^\star$ is full-rank, the decay of its singular values plays a key role in determining whether LoRA or FFT performs better, highlighting the importance of the spectral tail of $\bDelta^\star$, which also appears explicitly in our theoretical bounds.

Figure~\ref{fig:lora_sweeps_appendix} illustrates the behavior of FFT and LoRA in complementary experimental settings: the left panel shows the effect of varying dataset size in a low noise regime, while the right panel examines the effect of noise magnitude in the overdetermined regime.

\paragraph{Effect of dataset size in low-noise regime. } The left panel (Figure \ref{fig:samples_sweep_low_noise}) displays excess risk as a function of the input dimension-to-sample ratio $d_x/n$ in a low noise setting, with fixed dimensions, noise level, and $\rank(\bDelta^\star)=4$. The gray dashed vertical line at $d_x/n=1$ marks the interpolation threshold, separating the overdetermined regime (left, where $n > d_x$) from the underdetermined regime (right, where $n < d_x$).

In the overdetermined regime, LoRA with rank matching the true rank achieves the best performance: LoRA with $r=4$ (matching $r^{\star}=4$) exhibits the lowest excess risk, followed by LoRA with $r=8$, and then FFT. This demonstrates that in low noise conditions, proper rank selection allows LoRA to outperform FFT even when sufficient samples are available. The superior performance of rank-matched LoRA arises because it correctly captures the low-rank structure of $\bDelta^{\star}$ without introducing unnecessary degrees of freedom. In contrast, FFT updates all directions, including irrelevant ones, leading to higher variance even in the overdetermined regime with low noise. For smaller ranks $r=1$ and $r=2$, which are insufficient to capture the true rank, the excess risk remains nearly constant and substantially higher throughout the overdetermined regime. This flat behavior reflects the dominance of the bias term induced by the rank constraint, as the spectral tail of $\bDelta^{\star}$ is significant and cannot be captured by such low ranks.

As the dimension-to-sample ratio increases toward the interpolation threshold, all methods experience increasing excess risk. At the threshold $d_x/n=1$, FFT exhibits a sharp increase in risk as it transitions into the underdetermined regime. LoRA methods show more graceful degradation across the threshold, with the transition behavior depending on the rank. In the deeply underdetermined regime (large $d_x/n$), all methods eventually converge to similar excess risk levels determined by the bias of the initial model $\initmodel$.

\begin{figure}[t]
    \centering
    \begin{subfigure}[t]{0.49\linewidth}
        \centering
        \includegraphics[width=\linewidth]{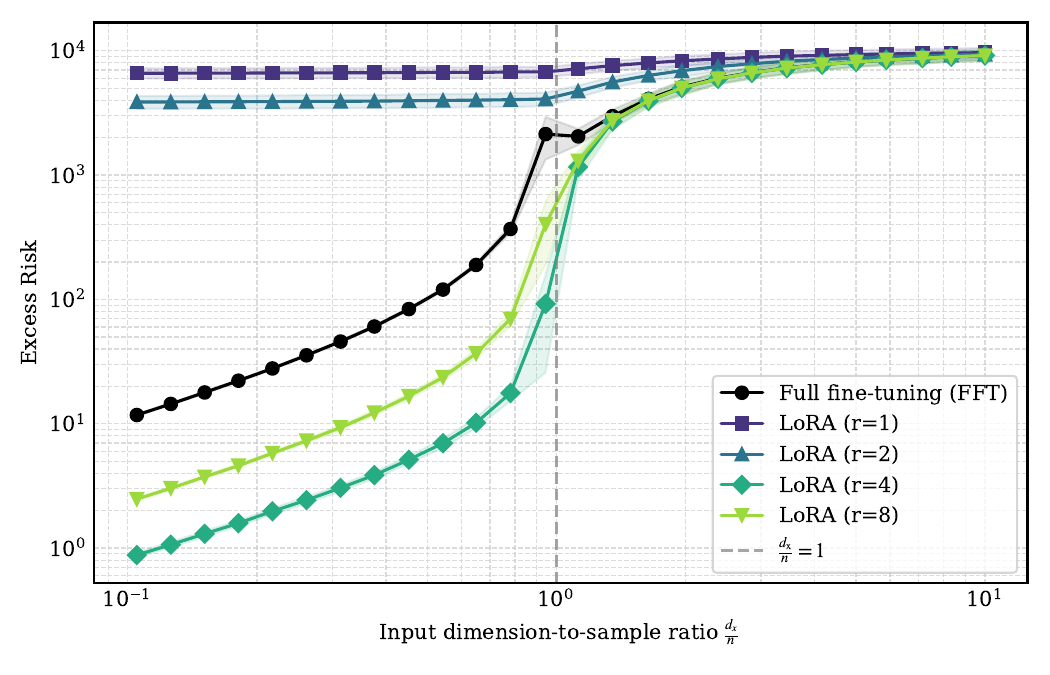}
        \caption{Excess risk vs. the dimension-to-sample ratio}
        \label{fig:samples_sweep_low_noise}
    \end{subfigure}
    \hfill
    \begin{subfigure}[t]{0.49\linewidth}
        \centering
        \includegraphics[width=\linewidth]{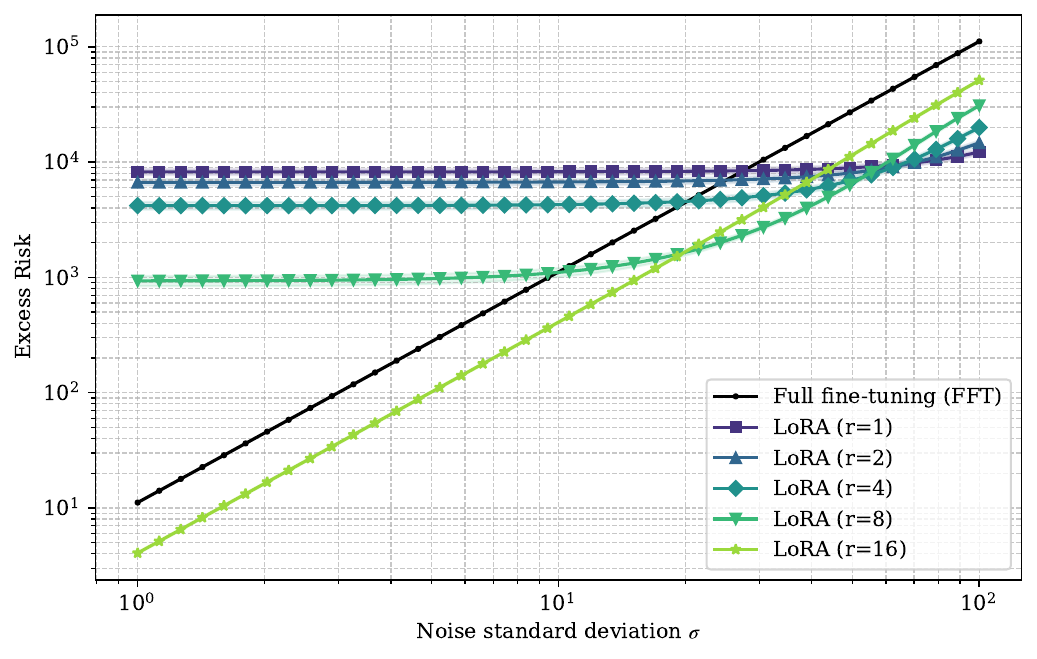}
        \caption{Excess risk vs. noise level}
        \label{fig:noise_sweep_overdetermined}
    \end{subfigure}
    \caption{{\bf Linear regression experiments.}
    Panel~\protect\subref{fig:samples_sweep_low_noise} presents the excess risk of FFT and LoRA under varying sample size $n$ (decreasing from left to right), fixed dimensions $d_x = d_y = 100$, noise magnitude $\sigma=1$, and true task-shift rank $\mathrm{rank}(\bDelta^\star) = 4$.
    Panel~\protect\subref{fig:noise_sweep_overdetermined} plots the excess risk as a function of noise level $\sigma\in[1,100]$ with $d_x=d_y=100$, $n=1000$, and $\rank(\bDelta^\star)=10$.
    Results are averaged over $100$ random seeds; shaded regions indicate $\pm1$ standard deviation.}
    \label{fig:lora_sweeps_appendix}
\end{figure}

\paragraph{Effect of noise magnitude in overdetermined regime. }Figure~\ref{fig:noise_sweep_overdetermined} examines the effect of noise magnitude in the overdetermined regime with fixed dimensions and $\rank(\bDelta^\star)$. In this setting, when noise is low, FFT and high-rank LoRA (such as $r=16$) achieve similar performance and outperform low-rank alternatives. However, as noise increases, FFT becomes increasingly sensitive, and its excess risk grows rapidly. In contrast, low-rank LoRA methods ($r=1$, $r=2$, and $r=4$) demonstrate remarkable stability, with their excess risk remaining nearly constant across a wide range of noise levels. This robustness arises because the low-rank constraint prevents the estimator from fitting noise in irrelevant directions. At sufficiently high noise levels, low-rank LoRA outperforms FFT, highlighting the regularization benefit of the rank constraint even in the overdetermined regime.

\paragraph{Effect of the rank of true map on excess risk.}
Figure~\ref{fig:delta_star_rank_sweep} shows how the excess risk depends on the rank of $\bDelta^\star$ in the underdetermined regime. When $\bDelta^\star$ is low-rank, LoRA with a suitable choice of rank $r$ consistently achieves lower excess risk. As the rank of $\bDelta^\star$ increases and approaches the full-rank setting, FFT begins to outperform LoRA with small $r$, since low-rank updates do not provide enough flexibility to capture all relevant directions. Nevertheless, even when $\bDelta^\star$ is full-rank, choosing a sufficiently large $r$ allows LoRA to achieve performance close to that of FFT. The exact behavior also depends on the noise level. Overall, these results indicate that the choice of fine-tuning method and LoRA rank should account for the intrinsic rank of the task shift, the data regime, and the noise level.

\begin{figure}[t]
\centerline{\includegraphics[width=0.5\linewidth]{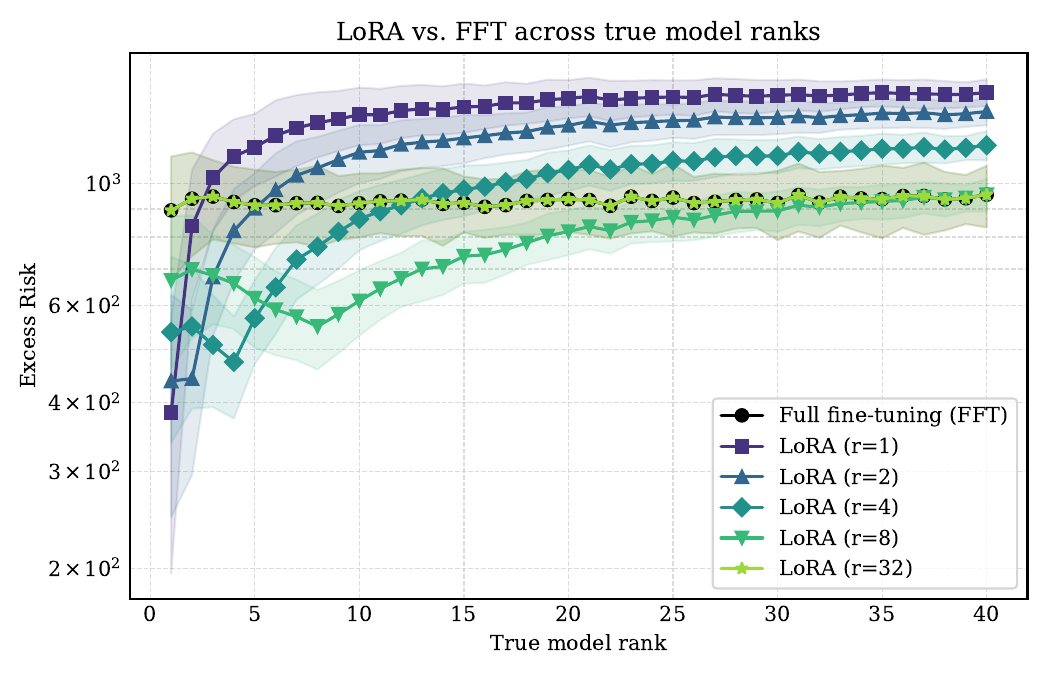}}
\caption{This figure illustrates the role of the central quantity $\rank(\bDelta^\star)$ in our analysis in underdetermined regime. We vary the rank of $\bDelta^\star$ from $1$ to $40$ (full rank) with a fixed number of training data $n=30$, fixed dimension $d_x=d_y=40$, and $\sigma=2$. Results are reported across $100$ random seeds, and shaded areas show the standard deviation.}
\label{fig:delta_star_rank_sweep}
\end{figure}

\paragraph{Full-rank $\bDelta^\star$ with varying spectral decay.}

We study how the singular value structure of $\bDelta^\star$ affects the relative performance of FFT and LoRA in the overdetermined regime.
We fix $d_x = d_y = 40$ and construct $\bDelta^\star = \UU \CovMat \VV^\top$, where $\UU$ and $\VV$ are orthonormal and the singular values follow
\begin{equation}
    \sigma_i = 5 \exp(-\lambda i),
\end{equation}
with decay rate $\lambda \in [0, 2.5]$.
When $\lambda = 0$, the spectrum is flat, and all directions are equally important; as $\lambda$ increases, the spectrum becomes increasingly concentrated in the top singular directions.

Figure~\ref{fig:decay_rate_sweep} reports the excess risk as a function of $\lambda$ for $n=200$ samples and noise level $\sigma_{\vecnoise}=0.5$.
When the spectrum is flat ($\lambda=0$), FFT significantly outperforms LoRA, since the task effectively requires modifying many directions and low-rank updates are too restrictive.
As $\lambda$ increases, the effective rank of $\bDelta^\star$ decreases, and LoRA with moderate rank begins to outperform FFT.
In this regime, constraining the update to a low-dimensional subspace helps avoid fitting noise in weak directions while still capturing the dominant structure.

Overall, this experiment shows that the benefit of LoRA is governed not by the formal rank of $\bDelta^\star$, but by its spectral decay.
When the update is spread across many directions, full fine-tuning is preferable; when it is concentrated in a few directions, low-rank adaptation yields better generalization. This is indeed aligned with our excess risk bounds for LoRA, which has an error term depending on the tail of $\bDelta^\star$. When this tail decays quickly, this error becomes negligible, while a flat tail could cause a large error.

\begin{figure}[t]
    \centering
    \includegraphics[width=0.5\linewidth]{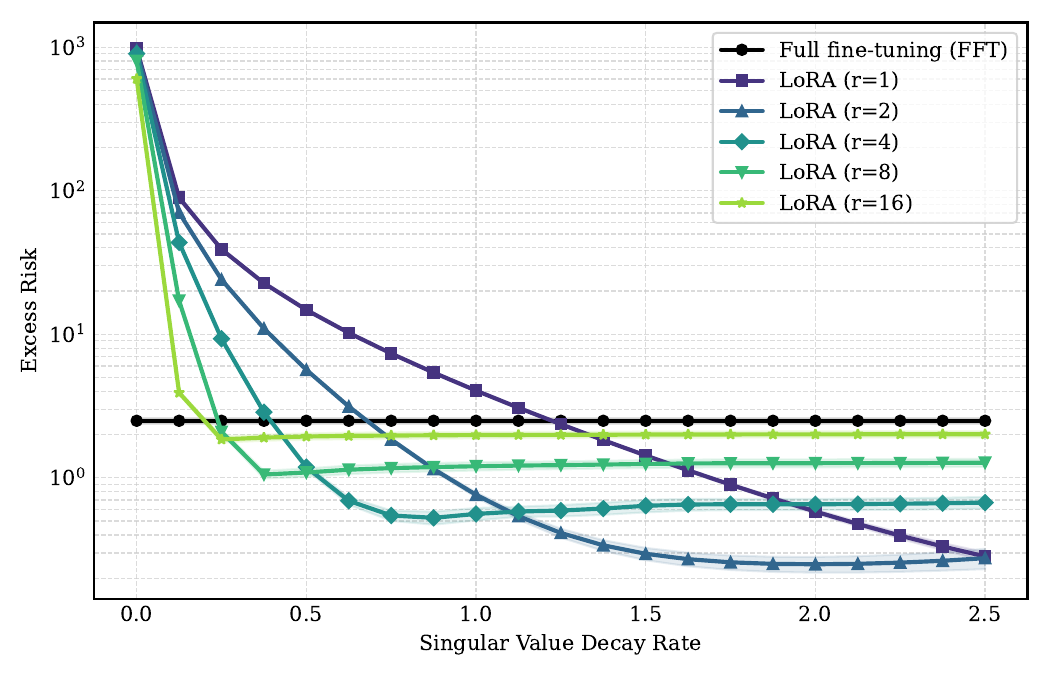}
    \caption{%
        \textbf{Effect of spectral decay on FFT vs.\ LoRA.}
        Excess risk as a function of the singular value decay rate $\lambda$ for a \textbf{full-rank} $\bDelta^\star$ with $\sigma_i = 5\exp(-\lambda i)$.
        A flat spectrum ($\lambda=0$) favors FFT, while increasing spectral concentration leads to superior performance of LoRA with moderate rank.
        Settings: $d_x=d_y=40$, $n=200$, $\sigma_{\vecnoise}=0.5$, averaged over 100 runs.
    }
    \label{fig:decay_rate_sweep}
\end{figure}

\end{document}